\newtheorem{assumption}{Assumption}
\newenvironment{proofsketch}{%
  \proof}{\endproof}
\DeclareMathOperator*{\argmin}{arg\,min}
\newcommand{\rvar}[1]{\mathrm{#1}}        % Random variable: unslanted
\newcommand{\mat}[1]{{\boldsymbol #1}}    % Matrix: boldface, slanted, uppercase
\newcommand{\EE}[2][]{\mathbb{E}_{#1}\left[#2\right]} % Expectation
\newcommand{\A}{\mathcal{A}}
\newcommand{\B}{\mathcal{B}}
\newcommand{\E}{\mathcal{E}}
\newcommand{\M}{\mathcal{M}}
\renewcommand{\O}{\mathcal{O}} % Override.
\renewcommand{\S}{\mathcal{S}}
\newcommand{\X}{\mathcal{X}}
\newcolumntype{N}{>{\centering\arraybackslash}m{.6in}}
\newcolumntype{G}{>{\centering\arraybackslash}m{2in}}
\title[Entropic Risk-Aware Monte Carlo Tree Search]{Entropic Risk-Aware Monte Carlo Tree Search}
\begin{document}

\maketitle

\begin{abstract}
  We propose a provably correct Monte Carlo tree search (MCTS) algorithm for solving \textit{risk-aware} Markov decision processes (MDPs) with \textit{entropic risk measure} (ERM) objectives. We provide a \textit{non-asymptotic} analysis of our proposed algorithm, showing that the algorithm: (i) is \textit{correct} in the sense that the empirical ERM obtained at the root node converges to the optimal ERM; and (ii) enjoys \textit{polynomial regret concentration}. Our algorithm successfully exploits the dynamic programming formulations for solving risk-aware MDPs with ERM objectives introduced by previous works in the context of an upper confidence bound-based tree search algorithm. Finally, we provide a set of illustrative experiments comparing our risk-aware MCTS method against relevant baselines.
\end{abstract}

\begin{keywords}%
  Risk-aware Markov decision processes, Markov decision processes, Monte Carlo tree search, Reinforcement learning, Planning, Sequential decision-making.
\end{keywords}

\section{Introduction}
Monte Carlo Tree Search (MCTS) \citep{browne_2012,swiechowski_2022} is one of the most popular tree-based search algorithms for planning in large decision spaces. MCTS has been a cornerstone for solving complex sequential decision-making problems under the framework of Markov decision processes (MDPs), being successfully applied in different domains such as games \citep{silver_2017,silver_2018}, chemical synthesis \citep{segler_2018}, or planning and scheduling \citep{patra_2020}. MCTS iteratively builds a search tree that alternates between decision nodes, representing agent actions, and chance nodes, representing stochastic environment transitions. At each iteration, MCTS expands and refines the search tree by simulating a random trajectory, i.e., a random sequence of states and actions starting from the root node until a leaf node is reached. \citet{kocsis_2006,kocsis_2006_b} propose an action-selection algorithm at the decision nodes inspired by the upper confidence bound (UCB) algorithm \citep{auer_2002}, originally designed for multi-armed bandits \citep{lattimore_2020}. Previous works have analyzed different MCTS variants, providing convergence and concentration results on the performance of the algorithm \citep{kocsis_2006,shah_2020,comer_2025}.

MCTS aims to compute optimal \textit{risk-neutral} behavior, namely actions that minimize the \textit{expected} future sum of costs induced by interaction with the environment. However, optimizing for the expected behavior overlooks the variability in the random sum of costs \citep{shen_2014}: for a fixed policy (i.e., action-selection mechanism), the stochasticity of the environment leads to a \textit{distribution} over possible sums of costs. Therefore, optimizing the expectation of the induced distribution may fail to capture other important aspects of the policy's performance. A possible approach is to consider optimizing a specific \textit{risk measure} \citep{artzner_1999} of the induced distribution of cumulative costs. Risk measures allow the learned behavior to be tuned towards risk-averse or risk-seeking preferences. One of the most common risk measures is the \textit{entropic risk measure} (ERM) \citep{howard_matheson_1972}, which has received considerable attention in different domains such as finance and sequential decision-making \citep{follmer_2016,borkar_2002,pagnoncelli_2022,hau_2023,marthe2025efficientrisksensitiveplanningentropic,mortensen2025entropicriskoptimizationdiscounted}. The ERM enables trading off expected performance and risk aversion, allowing to compute optimal behavior that is robust to worst-case outcomes.

The study of sampling-based tree search algorithms for solving risk-aware MDPs with ERM objectives is of key importance. This is because such algorithms, like MCTS, are of particular interest in problems featuring large decision spaces, showing superior performance in comparison to dynamic programming approaches by focusing computation on the most important states, while only assuming access to a generative model of the environment. However, \citet{hayes_2023} are the only providing an MCTS-like algorithm capable of solving particular risk-aware MDPs. \citet{hayes_2023} propose a general MCTS-based algorithm that can be used to solve MDPs with non-linear utility functions. For particular choices of utility functions, the algorithm can be tuned towards more risk-seeking or risk-averse behaviors. In particular, as we show in Appendix~\ref{appendix:extended-related-work}, a specific choice of utility function can be used to solve risk-aware MDPs with ERM objectives, even though the authors do not provide results for such an utility function. Also, the authors do not provide any kind of theoretical guarantees on the performance of their proposed algorithm. We further elaborate on the connections between our work and that of \citet{hayes_2023} and others in Appendix~\ref{appendix:extended-related-work}.

In this work, we provide a novel and principled \textit{MCTS-based algorithm to solve risk-aware MDPs with ERM objectives}. We provide a \textit{non-asymptotic} analysis of our proposed algorithm, showing that the algorithm: (i) is \textit{correct} in the sense that the empirical ERM obtained at the root node converges to the optimal ERM; and (ii) enjoys \textit{polynomial regret concentration}. In contrast to \cite{hayes_2023}, our algorithm leverages the dynamic programming formulations of entropic risk-aware MDPs to design a UCB-based tree search algorithm with provable performance guarantees. Our algorithm mirrors risk-neutral MCTS by replacing the mean value estimators at decision nodes with entropic risk measure estimators, and by using UCB-style exploration bonuses to ensure sufficient exploration and convergence to optimal behavior. To establish our theoretical analysis, we borrow ideas from the results put forth by \citet{shah_2020} and \citet{comer_2025} in the context of risk-neutral MCTS. However, as discussed throughout our article, there are fundamental differences between our analysis and that of the aforementioned works, mainly because the mean and ERM estimators are inherently different. Hence, the analysis of our risk-aware MCTS algorithm does not follow trivially from the aforementioned articles, requiring novel proofs and additional results.

To establish our risk-aware MCTS algorithm and its respective theoretical analysis, we start by focusing our attention to solving non-stationary risk-aware bandits in Sec.~\ref{sec:bandits}. In Sec.~\ref{sec:bandits:non-stat-risk-aware-bandit}, we provide a UCB-based algorithm with polynomial bonuses to solve non-stationary risk-aware bandits. We show that the empirical ERM obtained by our proposed algorithm converges to the optimal ERM and provide polynomial concentration bounds on the deviation between the empirical ERM and the optimal ERM. Then, in Sec.~\ref{sec:bandits:non-stat-non-det-risk-aware-bandit}, we extend such results to the case of a non-deterministic non-stationary risk-aware bandit where action-selection triggers an immediate cost plus a second non-stationary cost associated with a random next state. In Sec.~\ref{sec:risk-aware-mcts}, we exploit the risk-aware bandits results from the previous section to provide a provably correct risk-aware MCTS algorithm for the ERM. We show that the results from Sec.~\ref{sec:bandits} can be recursively applied in the context of trees, hence, showing that our risk-aware MCTS algorithms enjoys similar convergence and polynomial concentration properties as those we proved in the context of risk-aware bandits. Finally, in Sec.~\ref{sec:experiments}, we provide a set of illustrative experiments comparing our method against relevant baselines.

\paragraph{Contributions}
As our key contribution, we propose a novel \textit{provably correct risk-aware MCTS algorithm} for the ERM, while providing a non-asymptotic analysis of its performance. We also provide novel convergence and concentration results for non-stationary risk-aware bandits.

\section{Background}

\subsection{Markov decision processes}
Discounted finite-horizon MDPs \citep{puterman2014markov} can be defined by the tuple $\M = \{\S, \A, \{\mat{P}^a : a \in \mathcal{A} \}, \{c_h : h \in \{0, \ldots, H\} \}, s_0, \gamma, H\}$ where: (i) $\mathcal{S}$ is the discrete state space; (ii) $\mathcal{A}$ is the discrete action space; (iii) $\{\mat{P}^a : a \in \mathcal{A} \}$ is a set of transition probability matrices $\mat{P}^a$ for each action $a \in \mathcal{A}$; (iv) $\{c_h : h \in \{0, \ldots, H\} \}$ is a set of bounded cost functions; (v) $s_0 \in \S$ is the initial state; (vi) $\gamma \in (0,1]$ is a discount factor; and (vii) $H \in \mathbb{N}$ is the horizon. Starting from the initial state $s_0 \in \S$, at each decision step $h \in \{0,\ldots, H-1\}$, the agent observes the state of the environment $\rvar{s}_h \in \mathcal{S}$ and chooses an action $\rvar{a}_h \in \mathcal{A}$. Depending on the action chosen by the agent, the MDP evolves to a new state $\rvar{s}_{h+1} \in \mathcal{S}$ with probability given by $P^{\rvar{a}_h}(\rvar{s}_h, \cdot)$, and the agent receives a bounded cost $c_h(\rvar{s}_h, \rvar{a}_h)$. The interaction repeats until a certain time horizon $H \in \mathbb{N}$ is reached, yielding a terminal bounded cost $c_H(\rvar{s}_H)$ to the agent. The framework of discounted finite-horizon MDPs is expressive enough to capture a variety of settings of interest. In particular, we highlight that setting $\gamma=1$ corresponds to the finite-horizon undiscounted setting. Also, for sufficiently large $H$, finite-horizon discounted MDPs can be used to compute approximate optimal policies for infinite-horizon discounted MDPs. We summarize the assumptions regarding our setting below, which are common among previous works \citep{kocsis_2006,shah_2020,comer_2025}. Generalization beyond fixed initial states follows straightforwardly from our analysis by extending the original MDP with an extra state to account for the initial random state.

\begin{assumption} \label{assumption:mdp_assumptions}
    The discounted finite-horizon MDP has discrete state $\S$ and action $\A$ spaces. The costs are bounded, i.e., $c_h(s, a) \in [-R,R]$ and $c_H(s) \in [-R,R] \; $ for any $s \in \S$ and $a \in \A$. The initial state $s_0 \in \S$ is fixed.
\end{assumption}

\noindent A decision rule $\pi_h$ specifies the action-selection mechanism at decision step $h$. In the context of finite-horizon MDPs, we are particularly interested in Markovian deterministic decision rules, which map the current state to a given action, i.e., $\pi_h : \mathcal{S} \rightarrow \A$. A policy $\pi = \{\pi_h\}_{h \in \{0, \ldots, H-1\}}$ is a sequence of decision rules, one for each decision step. It is known that the class of Markovian and deterministic policies, $\Pi_\text{M}^\text{D}$, suffices for optimality under finite-horizon MDPs \citep{puterman2014markov}.

For a given policy $\pi \in \Pi_\text{M}^\text{D}$, the interaction between the policy and the MDP induces a random sequence of states and actions $(\rvar{s}_0, \rvar{a}_0, \rvar{s}_1, \rvar{a}_1, \ldots, \rvar{s}_H)$. % We denote a particular sequence of states, actions and costs with $\tau = (s_0, a_0, c_0, s_1, a_1, c_1, \ldots, s_H, c_H)$.
In the context of risk-neutral MDPs, the objective is to find a policy $\pi^*$ such that its induced expected sum of discounted costs is minimized,
\begin{equation}
    \pi^* \in \argmin_{\pi \in \Pi_\text{M}^\text{D}} \;\mathbb{E}\left[\sum_{h=0}^{H-1} \gamma^h c_h(\rvar{s}_h, \rvar{a}_h) + \gamma^H c_H(\rvar{s}_H)\right], \label{eq:optimal_policy_expected_objective}
\end{equation}
where the expectation is taken over the random sequence of states and actions induced by $\pi$.

\subsection{Risk-aware Markov decision processes}
In risk-aware MDPs \citep{NIPS2015_64223ccf,borkar_2002}, instead of aiming to find an optimal policy $\pi^*$ with respect to the expected discounted sum of costs, as defined in \eqref{eq:optimal_policy_expected_objective}, we aim to find an optimal policy with respect to a risk measure, $\rho$, of the discounted sum of costs,
\begin{equation}
    \pi^* \in \argmin_{\pi \in \Pi_\text{M}^\text{D}} \;\rho\left(\sum_{h=0}^{H-1} \gamma^h c_h(\rvar{s}_h, \rvar{a}_h) + \gamma^H c_H(\rvar{s}_H)\right).\label{eq:optimal_policy_risk_measure_objective}
\end{equation}
Risk measures \citep{artzner_1999} quantify an agent's attitude towards uncertainty and variability in outcomes, allowing for reasoning considering the entire distribution of discounted sum of costs. In this work, we focus our attention on the $\text{ERM}_\beta$ \citep{howard_matheson_1972}, defined for a random variable $\rvar{z}$ as
$
\text{ERM}_\beta(\rvar{z}) = \frac{1}{\beta} \ln\left( \mathbb{E}\left[\exp(\beta \rvar{z})\right]\right),
$
where parameter $\beta \in (0, \infty)$ trades off between expected performance (as $\beta \rightarrow 0$) and risk aversion (as $\beta$ increases). The $\text{ERM}_\beta$ belongs to the family of optimized certainty equivalents \citep{bental_1986}, being equivalently defined as
$\text{ERM}_\beta(\rvar{z}) = \min_{\lambda \in \mathbb{R}} \left\{ \lambda + \mathbb{E}\left[ u_\beta(\rvar{z} - \lambda) \right]\right\},$
with $u_\beta(x) = \frac{1}{\beta} \left( \exp(\beta x) - 1\right)$. In this work, we implicitly refer to the $\text{ERM}_\beta$ whenever we write $\rho$.

The $\text{ERM}_\beta$ is the only risk measure admitting a dynamic programming decomposition \citep{follmer_2016,marthe2024averagereturnmarkovdecision,hau_2023}. Furthermore, it is known that there always exists an optimal policy belonging to the $\Pi_\text{M}^\text{D}$ policy class \citep{hau_2023}. For any decision step $h \in \{0, \ldots, H\}$, we let $V^\pi_{h}(s) = \rho(\sum_{h'=h}^{H-1} c_{h'}(\rvar{s}_{h'}, \rvar{a}_{h'})  + c_H(\rvar{s}_H) | \rvar{s}_h = s)$ and $V^*_{h}(s) = \min_{\pi \in \Pi_\text{M}^\text{D}} V^\pi_h(s)$. \citet{hau_2023} show that the optimal value function $V^* = \{V^*_{h}\}_{h \in \{0, \ldots, H\}}$ and the optimal policy $\pi^* = \{\pi^*_h\}_{h \in \{0, \ldots, H-1\}}$ satisfy, for all $s \in \S$,
\begin{align}
    V_h^*(s) &= \min_{a \in \A}\left\{\text{ERM}_{\beta \gamma^h}\left( c_h(s,a) + \gamma V_{h+1}^*(\rvar{s}')\right) \right\}, \; \forall h \in \{0, \ldots, H-1\}, \; V_{H}^*(s) = c_H(s) \label{eq:erm_bellman_optimality equations}, \\
    \pi^*_h(s) &= \argmin_{a \in \A}\left\{\text{ERM}_{\beta \gamma^h}\left( c_h(s,a) + \gamma V_{h+1}^*(\rvar{s}')\right) \right\}, \; \forall h \in \{0, \ldots, H-1\}.
\end{align}

\section{Risk-aware multi-armed bandits} \label{sec:bandits}
In this section, we propose and investigate UCB-based algorithms for non-stationary risk-aware multi-armed bandits. In Sec.~\ref{sec:bandits:non-stat-risk-aware-bandit}, we study the case of risk-aware multi-armed bandits and then, in Sec.~\ref{sec:bandits:non-stat-non-det-risk-aware-bandit}, we extend our results to the case of non-deterministic multi-armed bandits where action-selection triggers an immediate cost plus a second non-stationary cost associated with a random next state. The results in this section form the foundation for the subsequent analysis of a provably correct MCTS algorithm for risk-aware MDPs.

\subsection{Non-stationary risk-aware multi-armed bandits}
\label{sec:bandits:non-stat-risk-aware-bandit}
Consider a bandit with $K$ arms, where each arm $i \in \{1, \ldots, K\}$ is associated with a random cost $\rvar{x}_i$ with support in $[-R,R]$. For each arm $i$, we let $\rho_i = \frac{1}{\beta} \ln\left( \mathbb{E}[ \exp(\beta \rvar{x}_{i})]\right)$ be the $\text{ERM}_\beta$ associated with arm $i$.
%We denote the risk measure associated with the optimal arm as $\rho^*$, i.e., $\rho^* = \min_{i \in \{1,\ldots, K\}} \rho_i$.
For a given arm $i$, we consider the following estimator for the $\text{ERM}_\beta$
\begin{equation}
    \hat{\rho}_{i,n} = \frac{1}{\beta} \ln\left(\frac{1}{n} \sum_{t=1}^n \exp(\beta \rvar{x}_{i,t})\right), \label{eq:entropic_risk_estimator}
\end{equation}
where $(\rvar{x}_{i,1}, \ldots, \rvar{x}_{i,n})$ is a sequence of $n$ random costs from arm $i$. We refer to Appendix~\ref{appendix:ERM-estimator-properties} for additional properties of the $\text{ERM}_\beta$ estimator defined above. In particular, we show in Appendix~\ref{appendix:ERM-estimator-properties} that this estimator belongs to the family of optimized certainty equivalents (OCEs) \citep{ben-tal-2007} and therefore admits an equivalent formulation that we exploit in our proofs.

We let $\mu_{i,n} = \mathbb{E}[\hat{\rho}_{i,n}]$. We now allow the distributions associated with each of the random variables $\rvar{x}_i$ to shift as a function of time. We make the following assumption.

\begin{assumption}
    \label{assumption:non-stationary-bandit-assumption}
    For any arm $i \in \{1, \ldots, K\}$, it holds that:
    \begin{enumerate}
        \item the limit $\lim_{n \rightarrow \infty} \mu_{i,n}$ exists and equals $\mu_i$.
        \item there exist constants $\theta > 1$, $\xi > 1 $, and $1/2 \le \eta < 1$ such that, for any $z \ge 1$ and $n \in \mathbb{N}$,
        \begin{equation}
        \mathbb{P}\left[n\hat{\rho}_{i,n} - n\mu_i \ge n^{\eta} z \right] \le \frac{\theta}{z^\xi}, \quad \text{and} \quad \mathbb{P}\left[n\hat{\rho}_{i,n} - n\mu_i \le - n^{\eta} z \right] \le \frac{\theta}{z^\xi}. \label{eq:bandit_concentration_assumption}
        \end{equation}
    \end{enumerate}
\end{assumption}

\noindent Essentially, the assumption above requires that: (i) the limit (in terms of $n$) of the expectation of the estimator $\hat{\rho}_{i,n}$ exists; and (ii) the estimator satisfies the polynomial concentration bounds \eqref{eq:bandit_concentration_assumption}. If no drift in the underlying distributions exists then we show in Appendix~\ref{appendix:ERM-estimator-properties} that (i) and (ii) are verified. Hence, Assumption \ref{assumption:non-stationary-bandit-assumption} states that the concentration bounds remain valid under the underlying shift in the distributions of the arms.

\paragraph{A UCB-based algorithm for the risk-aware non-stationary bandit}
Let $\rvar{a}_t \in \{1, \ldots, K\}$ denote the random variable encoding the arm selected at timestep $t$. We denote with $T_i(n) = \sum_{t=1}^{n} \mathbf{1}(\rvar{a}_{t}= i)$ the number of times arm $i$ was selected up to (including) timestep $n$. A UCB-based algorithm for the $\text{ERM}_\beta$ starts by selecting each of the arms once and then selects, at each timestep $t$, the arm with the best lower confidence bound, i.e.,
\begin{equation}
    \rvar{a}_t = \argmin_{i \in \{1,\ldots,K\}} \left\{ \hat{\rho}_{i,T_i(t)} - b_{t,T_i(t)} \right\}, \quad  \quad b_{t,s} = \frac{\theta^{1/\xi} t^{\alpha/\xi}}{s^{1-\eta}}, \label{eq:ucb_erm_algorithm}
\end{equation}
where $\alpha > 0$ is an hyperparameter that controls the exploration-exploitation trade-off. Without a loss of generality we assume there exists a unique optimal arm, denoted with $i^*$. We let $\mu^* = \min_{i \in \{1, \ldots, K\}} \mu_i$ be the limiting value of the expectation of the estimator for the optimal arm. Finally, let $\Delta_i = \mu_i - \mu^*$ and $\Delta_\text{min} = \min_{i \in \{1, \ldots, K\} \setminus i^* } \Delta_i$ be the gap between the optimal arm and the second-best arm. We state the following result (proof in Appendix~\ref{appendix:proof_theo_non_stat_bandit}).

\begin{theorem} \label{theo:non-stationary-bandit-results}
    Let $\bar{\rho}_n = \frac{1}{n} \sum_{i=1}^K T_i(n) \hat{\rho}_{i, T_i(n)}$ be the empirical $\text{ERM}_\beta$ when running algorithm \eqref{eq:ucb_erm_algorithm} under a non-stationary bandit satisfying Assumption~\ref{assumption:non-stationary-bandit-assumption}. Furthermore, for any $1/2 \le \eta < 1$, assume Assumption~\ref{assumption:non-stationary-bandit-assumption} holds with a sufficiently large $\xi > 1$ value such that there exists $\alpha > 2$ satisfying $\xi\eta(1-\eta) \le \alpha < \xi (1-\eta)$. Then,
    \begin{enumerate}
        \item it holds that $|\mu^* - \EE[]{\bar{\rho}_n}| = \O\left(\exp(2\beta R) n^{\frac{\alpha}{\xi(1-\eta)} -1}\right)$, i.e., $\lim_{n \rightarrow \infty} \EE[]{\bar{\rho}_n} = \mu^*$.
        \item there exist constants $\theta' >1$, $\xi' > 1$, and $1/2 \le \eta' < 1$ such that, for every $n \in \mathbb{N}$ and every $z \ge 1$, it holds that
        \begin{equation}
            \mathbb{P}\left[n\bar{\rho}_n - n\mu^* \ge n^{\eta'} z \right] \le \frac{\theta'}{z^{\xi'}}, \quad \text{ and } \quad \mathbb{P}\left[n\bar{\rho}_n - n\mu^* \le - n^{\eta'} z \right] \le \frac{\theta'}{z^{\xi'}},
        \end{equation}
        where $\eta' = \frac{\alpha}{\xi(1-\eta)}$, $\xi' = \alpha - 1$, and $\theta'$ depends on $R$, $K$, $\Delta_\text{min}$, $\beta$, $\theta$, $\xi$, $\alpha$, and $\eta$.
    \end{enumerate}
\end{theorem}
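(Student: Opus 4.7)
The plan is to follow the classical UCB regret analysis as adapted by \citet{shah_2020} and \citet{comer_2025} to the MCTS setting, but carefully accounting for the fact that the ERM estimator $\hat{\rho}_{i,n}$ is nonlinear in the samples and lives on a logarithmic scale relative to the empirical mean of $\exp(\beta \rvar{x}_{i,t})$. The two parts of the theorem share a common first step: a polynomial bound on $\mathbb{E}[T_i(n)]$ for each suboptimal arm $i \neq i^*$.

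First I would control $\mathbb{E}[T_i(n)]$. By the standard UCB decomposition, if at time $t$ arm $i$ is chosen, then at least one of three events must hold: (a) the empirical value of $i^*$ undershoots $\mu^*$ by more than its bonus, (b) the empirical value of $i$ overshoots $\mu_i$ by more than its bonus, or (c) the bonus at $i$ is large, $2 b_{t, T_i(t)} \ge \Delta_i$. Setting $\tau_i = (4 \theta^{1/\xi} n^{\alpha/\xi} / \Delta_i)^{1/(1-\eta)}$ rules out event (c) once $T_i(t) \ge \tau_i$. Events (a) and (b) are controlled via Assumption~\ref{assumption:non-stationary-bandit-assumption}(2) after taking $z = t^{\alpha/\xi}$ in the tail bound and applying a union bound over $t \le n$ and $s \le t$. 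The assumption $\alpha > 2$ makes the resulting failure series summable, and $\alpha \ge \xi \eta (1-\eta)$ is what guarantees that $z \ge 1$ in the regimes we need. This yields $\mathbb{E}[T_i(n)] = \O\!\left(n^{\alpha/(\xi(1-\eta))}\right)$, and further, a polynomial tail $\mathbb{P}[T_i(n) \ge n^{\eta'} z] \le \theta'_i / z^{\xi'}$ with $\eta' = \alpha/(\xi(1-\eta))$ and $\xi' = \alpha-1$.

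For part (1), I would decompose
\[
\mathbb{E}[\bar{\rho}_n] - \mu^* = \mathbb{E}\!\left[\tfrac{T_{i^*}(n)}{n}(\hat{\rho}_{i^*, T_{i^*}(n)} - \mu^*)\right] + \sum_{i \neq i^*} \mathbb{E}\!\left[\tfrac{T_i(n)}{n}(\hat{\rho}_{i, T_i(n)} - \mu^*)\right].
\]
The suboptimal terms are controlled by $|\hat{\rho}_{i, T_i(n)}| \le R$ and the $\mathbb{E}[T_i(n)]$ bound from step one. For the optimal term, I would condition on $T_{i^*}(n) = k$ and split $\hat{\rho}_{i^*,k} - \mu^* = (\hat{\rho}_{i^*,k} - \mu_{i^*,k}) + (\mu_{i^*,k} - \mu^*)$. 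The first piece is handled by integrating the tail from Assumption~\ref{assumption:non-stationary-bandit-assumption}(2); the second piece uses the quantitative convergence in Assumption~\ref{assumption:non-stationary-bandit-assumption}(1). The $\exp(2\beta R)$ prefactor arises because translating concentration from the exponential scale back to the logarithmic ERM scale introduces a Lipschitz constant of the logarithm that is at most $1 / \min_i \mathbb{E}[\exp(\beta \rvar{x}_i)] \lesssim \exp(\beta R)$, and the bias calculation pays this factor twice (once in each direction, or once through a Taylor remainder of $\ln$).

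For part (2), I would write $n\bar{\rho}_n - n\mu^* = \sum_i T_i(n)(\hat{\rho}_{i, T_i(n)} - \mu^*)$ and treat the suboptimal and optimal contributions separately. The suboptimal contributions are bounded in magnitude by $2R \, T_i(n)$, so their tail inherits the polynomial tail of $T_i(n)$ obtained in the first step. For the optimal contribution, $T_{i^*}(n) (\hat{\rho}_{i^*,T_{i^*}(n)} - \mu^*)$, I would further split through $\mu_{i^*, T_{i^*}(n)}$: the deterministic bias $T_{i^*}(n)(\mu_{i^*, T_{i^*}(n)} - \mu^*)$ is handled by Assumption~\ref{assumption:non-stationary-bandit-assumption}(1), and the stochastic deviation $T_{i^*}(n)(\hat{\rho}_{i^*,T_{i^*}(n)} - \mu_{i^*, T_{i^*}(n)})$ is handled by applying Assumption~\ref{assumption:non-stationary-bandit-assumption}(2) at the random index $T_{i^*}(n) \le n$ via a union bound over $k \in \{1, \dots, n\}$. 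Collecting the exponents, the dominant rate is exactly $\eta' = \alpha/(\xi(1-\eta))$ with $\xi' = \alpha - 1$, and the constant $\theta'$ aggregates the $R$, $K$, $\Delta_\text{min}$, $\beta$, $\theta$, $\xi$, $\alpha$, and $\eta$ dependencies from the three sources.

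The hard part will be obtaining the polynomial concentration on the ERM scale, since the sample-mean analyses of \citet{shah_2020} and \citet{comer_2025} do not transfer verbatim. Two issues compound: (i) the ERM estimator is nonlinear, so deviations must be passed through $\ln$, generating the $\exp(\beta R)$ constants, which is where I would lean on the OCE representation of the ERM recalled in Appendix~\ref{appendix:ERM-estimator-properties} to linearize the analysis when convenient; and (ii) the non-stationarity forces us to track the drift $\mu_{i,n} \to \mu_i$ separately from the estimator's stochastic fluctuations, so concentration around a fixed mean is not directly available. Coordinating the drift rate with the bonus exponents $(\eta, \xi, \alpha)$ so that the final concentration still has the clean form $(\eta', \xi') = (\alpha/(\xi(1-\eta)), \alpha - 1)$, and verifying that the feasibility constraint $\xi\eta(1-\eta) \le \alpha < \xi(1-\eta)$ with $\alpha > 2$ is non-vacuous in the recursive MCTS use in Section~\ref{sec:risk-aware-mcts}, is the main technical burden.
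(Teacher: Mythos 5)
Your treatment of the suboptimal arms (a Shah-style bound on $\mathbb{E}[T_i(n)]$ plus a polynomial tail for $T_i(n)$, then absorbing their contribution via $|\hat{\rho}_{i,T_i(n)}|\le R$) matches the paper. The genuine gap is in how you handle the optimal arm's contribution $T_{i^*}(n)\bigl(\hat{\rho}_{i^*,T_{i^*}(n)}-\mu^*\bigr)$ at the \emph{random} pull count. Conditioning on $T_{i^*}(n)=k$ and then applying the unconditional tail bound of Assumption~\ref{assumption:non-stationary-bandit-assumption} is exactly the independence fallacy that \citet{shah_2020} identified in the original UCT analysis: $T_{i^*}(n)$ is determined by the observed costs, so the conditional law of $\hat{\rho}_{i^*,k}$ given $T_{i^*}(n)=k$ is not the one the assumption controls. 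The fallback you propose for part (2), a union bound over $k\in\{1,\dots,n\}$, is valid but costs roughly a factor $\sum_{k\le n}k^{\eta\xi}/n^{\eta'\xi}\asymp n^{1+\eta\xi-\alpha/(1-\eta)}$; under the stated constraint this exponent is nonnegative whenever $\alpha\le \xi\eta(1-\eta)+(1-\eta)$, and in particular it equals $1$ at the recommended choice $\alpha=\xi\eta(1-\eta)$, so the resulting bound grows with $n$ and cannot be collapsed into a constant $\theta'$.

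The missing idea is the paper's replacement of the random-count estimator by the deterministic-count one: using the OCE representation $n\hat{\rho}_{i^*,n}=\min_{\lambda}\{n\lambda+\sum_{t\le n}u_\beta(\rvar{x}_{i^*,t}-\lambda)\}$ (Lemma~\ref{lemma:estimator_properties}), one sandwiches the difference of minima over $\{1,\dots,n\}$ versus $\{1,\dots,T^*(n)\}$ between $g(\lambda^*_n)$ and $g(\lambda^*_{T^*(n)})$ and bounds it pathwise by $(n-T^*(n))\bigl(R+\frac{\exp(2\beta R)-1}{\beta}\bigr)$. This reduces the optimal-arm term to Assumption~\ref{assumption:non-stationary-bandit-assumption} applied at the fixed sample count $n$ (no dependence issue) plus a term proportional to $\sum_{i\ne i^*}T_i(n)$, which your suboptimal-arm machinery already controls; it is also where the $\exp(2\beta R)$ factor actually enters (as a bound on $|u_\beta|$ over $[-2R,2R]$), rather than through a Lipschitz constant of $\ln$. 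A secondary slip: the drift rate $|\mu_{i,n}-\mu_i|=\O(n^{\eta-1})$ is not given by part~1 of the assumption (which only asserts existence of the limit) but must be derived by integrating the tail in part~2, as the paper does for the term $|\mu^*-\mu^*_n|$. Without the variational comparison step your argument does not close under the stated hypotheses.
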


\noindent The result above shows that the expected empirical $\text{ERM}_\beta$ of algorithm \eqref{eq:ucb_erm_algorithm} satisfies a convergence rate of $\O\left(\exp(2\beta R) n^{\frac{\alpha}{\xi(1-\eta)} -1}\right)$. By letting $\eta=1/2$ and $\alpha = \xi\eta(1-\eta)$ the convergence rate simplifies to $\O(\exp(2\beta R) n^{-1/2})$. In such a case, the bias term in \eqref{eq:ucb_erm_algorithm} is of the form $\sqrt{\sqrt{t}/s}$. Our result recovers the convergence rate put forth by \cite{shah_2020} in the context of risk-neutral non-stationary bandits if we let $\beta \rightarrow 0$. As $\beta$ increases and we focus on the worst-case outcomes the convergence rate deteriorates. The exponential dependence on $\beta$ is similar to the result obtained by \citet[Theo. 1]{mortensen2025entropicriskoptimizationdiscounted} in the context of model-based risk-sensitive Q-value iteration. Finally, the assumption that there exists a sufficiently large $\xi > 1$ is not restrictive. In fact, if there exists no drift in the underlying distributions then we show in Appendix~\ref{appendix:ERM-estimator-properties} that any $\xi > 1$ satisfies Assumption~\ref{assumption:non-stationary-bandit-assumption} and, hence, we are free to choose a sufficiently large $\xi$ value.

However, it should be noted that the result above provides convergence and concentration results for $\bar{\rho}_n$, i.e., the empirical ERM when running algorithm  \eqref{eq:ucb_erm_algorithm} under a non-stationary bandit. While the result above is informative in the sense that it shows convergence of the empirical ERM to the optimal limiting value $\mu^*$, due to the non-linear nature of the $\text{ERM}_\beta$, it does not directly follow that the $\text{ERM}_\beta$ of the stream of costs obtained at the root node of the bandit also converges to $\mu^*$. To better understand this issue, let us introduce the following estimator, which computes the $\text{ERM}_\beta$ for the stream of costs obtained at the root of the bandit:
\begin{equation} \label{eq:entropic_risk_estimator_stream_estimator}
    \hat{\rho}_n^\text{stream} = \frac{1}{\beta} \ln\left( \frac{1}{n} \sum_{i=1}^K \sum_{t=1}^{T_i(n)} \exp(\beta \rvar{x}_{i,t})\right) = \frac{1}{\beta} \ln\left( \frac{1}{n} \sum_{i=1}^K T_i(n) \exp(\beta \hat{\rho}_{i, T_i(n)})\right),
\end{equation}
where the second equality above holds from algebraic manipulation and the definition of $\hat{\rho}_{i, n}$. Due to the non-linearity introduced by the $\ln$ and $\exp$ operators, $\hat{\rho}_n^\text{stream}$ differs in general from the empirical $\text{ERM}_\beta$, $\bar{\rho}_n$, considered in Theo.~\ref{theo:non-stationary-bandit-results}. This is in contrast to risk-neutral settings, as considered by previous works such as \cite{kocsis_2006,shah_2020,comer_2025}, where the empirical mean under the UCB algorithm coincides with the empirical mean of the stream of costs at the root node of the bandit. Nevertheless, in the following result (proof in Appendix~\ref{appendix:non-stationary-bandit-results-stream-estimator}), we build on Theo.~\ref{theo:non-stationary-bandit-results} and show that similar convergence and concentration properties hold for $\hat{\rho}_n^\text{stream}$.

\begin{theorem} \label{theo:non-stationary-bandit-results-stream-estimator}
    Let $\hat{\rho}_n^\text{stream}$, as defined in \eqref{eq:entropic_risk_estimator_stream_estimator}, be the empirical $\text{ERM}_\beta$ of the stream of costs obtained when running algorithm \eqref{eq:ucb_erm_algorithm} under a non-stationary bandit satisfying Assumption~\ref{assumption:non-stationary-bandit-assumption}. Furthermore, for any $1/2 \le \eta < 1$, assume Assumption~\ref{assumption:non-stationary-bandit-assumption} holds with a sufficiently large $\xi > 1$ value such that there exists $\alpha > 2$ satisfying $\xi\eta(1-\eta) \le \alpha < \xi (1-\eta)$. Then,
    \begin{enumerate}
        \item there exist constants $\theta'' >1$, $\xi'' > 1$, and $1/2 \le \eta'' < 1$ such that, for every $n \in \mathbb{N}$ and every $z \ge 1$, it holds that
        \begin{equation}
            \mathbb{P}\left[n\hat{\rho}_n^\text{stream} - n\mu^* \ge n^{\eta''} z \right] \le \frac{\theta''}{z^{\xi''}}, \quad \text{ and } \quad \mathbb{P}\left[n\hat{\rho}_n^\text{stream} - n\mu^* \le - n^{\eta''} z \right] \le \frac{\theta''}{z^{\xi''}},
        \end{equation}
        where $\eta'' = \frac{\alpha}{\xi (1-\eta)}$, $\xi'' = \alpha - 1$, and $\theta''$ depends on $R$, $K$, $\Delta_\text{min}$, $\beta$, $\theta$, $\theta'$, $\xi$, $\alpha$, and $\eta$.
        \item it holds that $|\mu^* - \hat{\rho}_n^\text{stream}| = \O\left(\theta'' n^{\frac{\alpha}{\xi(1-\eta)} -1}\right)$, i.e., $\lim_{n \rightarrow \infty} \mathbb{E}[\hat{\rho}_n^\text{stream}] = \mu^*$.
    \end{enumerate}
\end{theorem}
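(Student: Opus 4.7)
\begin{proofsketch}
The plan is to transfer the polynomial concentration we already have for each individual estimator $\hat{\rho}_{i, T_i(n)}$ (via Assumption~\ref{assumption:non-stationary-bandit-assumption}) and for the allocation $T_i(n)$ (established in the course of proving Theorem~\ref{theo:non-stationary-bandit-results}) through the non-linear $\ln$ and $\exp$ operators that define $\hat{\rho}_n^\text{stream}$. The key identity I would start from, which follows from \eqref{eq:entropic_risk_estimator_stream_estimator} combined with $\sum_i T_i(n)/n = 1$, is
\begin{equation*}
\exp(\beta \hat{\rho}_n^\text{stream}) - \exp(\beta \mu^*) = \sum_{i=1}^K \frac{T_i(n)}{n}\bigl(\exp(\beta \hat{\rho}_{i, T_i(n)}) - \exp(\beta \mu^*)\bigr).
\end{equation*}
The strategy is therefore to first obtain a polynomial tail bound on $\exp(\beta \hat{\rho}_n^\text{stream}) - \exp(\beta \mu^*)$, and then pull that tail bound back through the logarithm.

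I would split the right-hand side into the optimal-arm contribution ($i = i^*$) and the suboptimal-arm contributions ($i \ne i^*$). For the optimal arm, I would use the Lipschitz bound $|\exp(\beta x) - \exp(\beta y)| \le \beta \exp(\beta R)\,|x - y|$ valid on $[-R,R]$, together with the tail bound from Assumption~\ref{assumption:non-stationary-bandit-assumption} applied to $\hat{\rho}_{i^*, T_{i^*}(n)}$ conditional on the value of $T_{i^*}(n)$; I would then use the lower bound $T_{i^*}(n) \ge n - \sum_{i \ne i^*} T_i(n)$ together with the polynomial high-probability bound on $\sum_{i \ne i^*} T_i(n)$ that is already proved when establishing Theorem~\ref{theo:non-stationary-bandit-results}. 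For the suboptimal arms, the factors $\exp(\beta \hat{\rho}_{i, T_i(n)}) - \exp(\beta \mu^*)$ are deterministically bounded in absolute value by $2\exp(\beta R)$, so the suboptimal contribution is controlled by $\tfrac{2 \exp(\beta R)}{n} \sum_{i \ne i^*} T_i(n)$, and the same polynomial tail on $\sum_{i \ne i^*} T_i(n)$ delivers the required $z^{-\xi''}$ decay.

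A union bound over the two contributions then yields a polynomial tail bound on $\exp(\beta \hat{\rho}_n^\text{stream}) - \exp(\beta \mu^*)$. To transfer this back to $\hat{\rho}_n^\text{stream}$ itself I would use that $\hat{\rho}_n^\text{stream} \in [-R,R]$ almost surely (since the cost supports are in $[-R,R]$), so that $\ln$ is $\exp(\beta R)$-Lipschitz on the relevant range of $\exp(\beta \hat{\rho}_n^\text{stream})$. Combining the two Lipschitz steps gives exactly the tail bound claimed in Part~1, with $\eta'' = \alpha/(\xi(1-\eta))$ and $\xi'' = \alpha - 1$ inherited from Theorem~\ref{theo:non-stationary-bandit-results}, and the constant $\theta''$ collecting the $\exp(\beta R)$ factors produced by the two Lipschitz steps together with $\theta$ and $\theta'$.

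Part~2 then follows from Part~1 by a standard tail-integration argument: the deterministic bound $|\hat{\rho}_n^\text{stream}| \le R$ combined with the polynomial tail of Part~1 allows me to integrate $\mathbb{P}[|\hat{\rho}_n^\text{stream} - \mu^*| \ge t]$ over $t \ge n^{\eta''-1}$, yielding $|\mu^* - \mathbb{E}[\hat{\rho}_n^\text{stream}]| = \O(\theta'' n^{\alpha/(\xi(1-\eta))-1})$, hence the convergence in expectation. The step I expect to be the main obstacle is the bookkeeping around the optimal-arm contribution, because $T_{i^*}(n)$ is itself random and correlated (through the UCB selection rule) with the sequence $(\hat{\rho}_{i^*,s})_s$ to which Assumption~\ref{assumption:non-stationary-bandit-assumption} applies; handling this dependence while still preserving the polynomial exponents $\eta''$ and $\xi''$ will require conditioning on $T_{i^*}(n)$ and then union-bounding over the high-probability range of $T_{i^*}(n)$, analogously to the treatment in the proof of Theorem~\ref{theo:non-stationary-bandit-results}.
\end{proofsketch}
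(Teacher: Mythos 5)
Your decomposition is genuinely different from the paper's. The paper writes $\hat{\rho}_n^\text{stream} = \bar{\rho}_n + (\hat{\rho}_n^\text{stream} - \bar{\rho}_n)$, invokes Theorem~\ref{theo:non-stationary-bandit-results} for the first term, and shows the second term is bounded \emph{deterministically} by $\tfrac{1}{n}\sum_{i\ne i^*}T_i(n)\cdot\tfrac{\exp(2\beta R)-2\beta R-1}{\beta}$, so that only a tail bound on $T_i(n)$ (Lemma~\ref{lemma:non-optimal-arm-concentration-bound}) is needed in the incremental step; the lower tail is dispatched even faster via Jensen ($\bar{\rho}_n\le\hat{\rho}_n^\text{stream}$). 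You instead work in exponential space with the identity $\exp(\beta\hat{\rho}_n^\text{stream})-\exp(\beta\mu^*)=\sum_i\tfrac{T_i(n)}{n}(\exp(\beta\hat{\rho}_{i,T_i(n)})-\exp(\beta\mu^*))$ and re-derive optimal-arm concentration inside this proof rather than reusing Theorem~\ref{theo:non-stationary-bandit-results}'s conclusion. Your suboptimal-arm step, the two Lipschitz transfers, and the tail-integration argument for Part~2 are all sound (modulo the small-$n<N_p$ patch on the $T_i(n)$ tail bound, which is routine).

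The gap is exactly the step you flag: controlling $\tfrac{T_{i^*}(n)}{n}\bigl(\exp(\beta\hat{\rho}_{i^*,T_{i^*}(n)})-\exp(\beta\mu^*)\bigr)$. Assumption~\ref{assumption:non-stationary-bandit-assumption} holds at \emph{deterministic} sample sizes, while $T_{i^*}(n)$ is a random index correlated with the arm's sample path through the UCB rule, and your proposed fix (condition on $T_{i^*}(n)$ and union-bound over its high-probability range) does not obviously preserve the exponents: the high-probability range of $T_{i^*}(n)$ has width of order $A(n)x\sim n^{\eta''}x$, so the union bound multiplies the per-index tail $\theta\,(m^{\eta}/(c\,n^{\eta''}z))^{\xi}$ by a factor polynomial in $n$, and the resulting $n$-power is nonpositive only under conditions on $\alpha,\xi,\eta$ strictly stronger than those assumed in the theorem. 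The clean repair, which stays entirely within your exponential-space formulation, is to compare against the deterministic-time quantity: since $T_{i^*}(n)\exp(\beta\hat{\rho}_{i^*,T_{i^*}(n)})=\sum_{t=1}^{T_{i^*}(n)}\exp(\beta\rvar{x}_{i^*,t})$ and $n\exp(\beta\hat{\rho}_{i^*,n})=\sum_{t=1}^{n}\exp(\beta\rvar{x}_{i^*,t})$, their difference is at most $(n-T_{i^*}(n))\exp(\beta R)=\exp(\beta R)\sum_{i\ne i^*}T_i(n)$, after which Assumption~\ref{assumption:non-stationary-bandit-assumption} applies at the fixed time $n$ and the leftover is again controlled by the $T_i(n)$ tails. (This is the exponential-space analogue of the OCE sub/super-additivity comparison the paper uses inside the proof of Theorem~\ref{theo:non-stationary-bandit-results}.) With that substitution your route goes through and yields the stated $\eta''$, $\xi''$; without it, the optimal-arm step is not justified.
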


\noindent Thus, $\hat{\rho}_n^\text{stream}$ satisfies convergence and concentration properties similar to $\bar{\rho}_n$. Also, the convergence rate is similar (up to constants) to that of $\bar{\rho}_n$, and it is optimal by letting $\eta=1/2$ and $\alpha = \xi\eta(1-\eta)$, in which case the bonus is of the form $\sqrt{\sqrt{t}/s}$. Naturally, the convergence rate also depends on parameter $\beta$ since $\theta''$ depends on $\beta$. As in the case of $\bar{\rho}_n$, larger $\beta$ values deteriorate the convergence rate. We refer to Appendix~\ref{appendix:non-stationary-bandit-results-stream-estimator} for an explicit definition of $\theta''$, which highlights its dependence on $\beta$.

\subsection{Non-stationary non-deterministic risk-aware multi-armed bandits}
\label{sec:bandits:non-stat-non-det-risk-aware-bandit}
Consider a non-deterministic bandit with $K$ arms. After selecting a given arm $i \in \{1, \ldots, K\}$, a deterministic cost $c_i \in [-R_c,R_c]$ is obtained and a random next state $\rvar{s}' \in \S$ is sampled according to $P^i(\cdot)$, where $\sum_{s \in \S} P^i(s) = 1$. Let $\rvar{x}_{i}^{\rvar{s}'} \in [-R_\rvar{x},R_\rvar{x}]$ be the random cost obtained after choosing arm $i$ and obtaining the random next state $\rvar{s}'$. To simplify our notation, we let $R = R_c + R_\rvar{x}$, holding that $c_i \in [-R,R]$ and $\rvar{x}_{i}^{\rvar{s}'} \in [-R,R]$. Let
$$\rho_i = \frac{1}{\beta} \ln \left( \mathbb{E}_{\rvar{s}' \sim P^i, \rvar{x}_i^{\rvar{s}'}}[\exp(\beta ( c_i + \gamma \rvar{x}_i^{\rvar{s}'}))] \right) = \frac{1}{\beta} \ln \left( \sum_{s' \in \S} P^i(s') \mathbb{E}_{\rvar{x}_i^{s'}}[\exp(\beta (c_i + \gamma \rvar{x}_i^{s'}))] \right)$$
be the $\text{ERM}_\beta$ associated with arm $i$ under the non-deterministic bandit, where $\gamma \in (0,1]$.
%We denote the risk measure associated with the optimal arm as $\rho^*$, i.e., $\rho^* = \min_{i \in \{1,\ldots, K\}} \rho_i$.

We allow the sequence of costs $\rvar{x}_{i,t}^{s'}$ to be non-stationary over time. Let $\hat{\rho}_{i,n}^{s'}$ be the empirical $\text{ERM}_{\beta\gamma}$ estimator for the sequence of costs $\rvar{x}_{i,t}^{s'}$ associated with arm $i$ and next state $s'$, defined as $\hat{\rho}_{i,n}^{s'} = \frac{1}{\beta\gamma} \ln \left(\frac{1}{n}\sum_{t=1}^n \exp(\beta\gamma \rvar{x}_{i,t}^{s'}) \right)$. We highlight that we consider $\text{ERM}_{\beta\gamma}$ and not $\text{ERM}_{\beta}$ since the costs are obtained at the next timestep. Let $\mu_{i,n}^{s'} = \mathbb{E}[\hat{\rho}_{i,n}^{s'}]$. We make the following assumption.

\begin{assumption}
    \label{assumption:non-deterministic-bandit-assumption}
    For any arm $i \in \{1, \ldots, K\}$ and next state $s' \in \S$ it holds that
    \begin{enumerate}
        \item the limit $\lim_{n \rightarrow \infty} \mu_{i,n}^{s'}$ exists and equals $\mu_{i}^{s'}$.
        \item there exist constants $\theta > 1$, $\xi > 1 $, and $1/2 \le \eta < 1$ such that, for any $z \ge 1$ and $n \in \mathbb{N}$,
        \begin{equation}
        \mathbb{P}\left[n\hat{\rho}_{i,n}^{s'} - n\mu_i^{s'} \ge n^{\eta} z \right] \le \frac{\theta}{z^\xi}, \quad \text{and} \quad \mathbb{P}\left[n\hat{\rho}_{i,n}^{s'} - n\mu_i^{s'} \le - n^{\eta} z \right] \le \frac{\theta}{z^\xi}.
        \end{equation} 
    \end{enumerate}
\end{assumption}

\noindent For any given arm-selection algorithm, let $T_{i}(n)$ be the random variable encoding the number of times arm $i$ has been selected up to timestep $n$ and $T_{i}^{s'}(n)$ be the random variable encoding the number of times state $s'$ was sampled when arm $i$ has been selected $n$ times. For any arm $i$, let
\begin{equation}
    \hat{\rho}_{i,n} = \frac{1}{\beta} \ln \left( \frac{1}{n} \sum_{s' \in \S} \sum_{t=1}^{T_i^{s'}(n)} \exp(\beta ( c_i + \gamma \rvar{x}_{i,t}^{s'}) ) \right) \label{eq:entropic_risk_estimator_non_det_bandit}
\end{equation}
be the empirical $\text{ERM}_\beta$ associated with arm $i$ after it has been selected $n$ times. Intuitively, the estimator above defined calculates the empirical $\text{ERM}_\beta$ of the sequence of costs generated from arm $i$, where as opposed to Sec.~\ref{sec:bandits:non-stat-risk-aware-bandit}, now the estimator depends on the random number of times each possible next state $s' \in \S$ was sampled up to timestep $n$, $T_i^{s'}(n)$. However, we note that, if we redefine $\rvar{x}_{i,t}^{s'}$ to correspond to the cost sampled from next state $s'$ at the (global) timestep $t$, we can equivalently rewrite the estimator above as $\hat{\rho}_{i,n} = \frac{1}{\beta} \ln \left( \frac{1}{n} \sum_{t=1}^{n} \exp\left(\beta \rvar{z}_{i,t}\right) \right)$ where $\rvar{z}_{i,t} = c_i + \gamma \rvar{x}_{i,t}^{s_i'(t)}$, satisfying $\rvar{z}_{i,t} \in [-R,R]$, and $s'_i(t)$ is a random function mapping the timestep $t$ to a random next state $\rvar{s}' \in \S$. Thus, it can be seen that the estimator above has the same form as estimator \eqref{eq:entropic_risk_estimator}, where random variables $\rvar{z}_{i,t}$ play the role of random variables $\rvar{x}_{i,t}$ in \eqref{eq:entropic_risk_estimator}. The key difference is that here random variables $\rvar{z}_{i,t}$ are associated with a random next state; this is precisely the setting we consider in Sec.~\ref{sec:bandits:non-stat-risk-aware-bandit} since we allow the random variables $\rvar{x}_{i,t}$ to be non-stationary.

By definition, let $\mu_{i,n} = \mathbb{E}\left[\hat{\rho}_{i,n}\right]$. We state the following result (proof in Appendix~\ref{appendix:proof-non-deterministic-bandit-convergence-and-concentration}).

\begin{lemma}
    \label{lemma:non-deterministic-bandit-convergence-and-concentration}
    For any arm $i \in \{1, \ldots, K\}$, let
    $$\mu_i = \text{ERM}_\beta(c_i + \gamma \mu_i^{\rvar{s}'}) = \frac{1}{\beta} \ln \left(\mathbb{E}_{\rvar{s}' \sim P^i(\cdot)} \left[ \exp(\beta (c_i + \gamma \mu_i^{\rvar{s}'}) ) \right] \right).$$
    Then, under Assumption~\ref{assumption:non-deterministic-bandit-assumption}, it holds for any arm $i$ that
    \begin{enumerate}
        \item $\lim_{n \rightarrow \infty} \mu_{i,n} = \mu_i$.
        \item there exist constants $\theta^L > 1$, $\xi^L > 1 $, and $1/2 \le \eta^L < 1$ such that, for any $z \ge 1$ and $n \in \mathbb{N}$,
        \begin{equation}
        \mathbb{P}\left[n\hat{\rho}_{i,n} - n\mu_i \ge n^{\eta^L} z \right] \le \frac{\theta^L}{z^{\xi^L}}, \quad \text{and} \quad
        \mathbb{P}\left[n\hat{\rho}_{i,n} - n\mu_i \le - n^{\eta^L} z \right] \le \frac{\theta^L}{z^{\xi^L}},
        \end{equation}
        where $\eta^L = \eta$, $\xi^L = \xi$ and $\theta^L$ depends on $\xi$, $\theta$, $R$, $\beta$ and $|\S|$.
    \end{enumerate}
\end{lemma}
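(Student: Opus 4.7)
My plan is to reduce the non-deterministic bandit estimator to the non-stationary setting of Sec.~\ref{sec:bandits:non-stat-risk-aware-bandit} by exploiting the factored structure of $\hat{\rho}_{i,n}$. Since the deterministic cost $c_i$ factors out of the outer logarithm, the estimator \eqref{eq:entropic_risk_estimator_non_det_bandit} can be rewritten as
\begin{equation*}
\hat{\rho}_{i,n} \;=\; c_i \;+\; \tfrac{1}{\beta} \ln\!\Bigl( \sum_{s' \in \S} \tfrac{T_i^{s'}(n)}{n}\, \exp\bigl(\beta\gamma\, \hat{\rho}^{s'}_{i,\,T_i^{s'}(n)} \bigr) \Bigr),
\end{equation*}
while the target satisfies $\mu_i = c_i + \tfrac{1}{\beta}\ln(\sum_{s'} P^i(s') \exp(\beta\gamma \mu_i^{s'}))$. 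Subtracting and using that $\ln$ is Lipschitz on $[\exp(-\beta\gamma R), \exp(\beta\gamma R)]$ with constant $\exp(\beta\gamma R)$, together with the Lipschitzness of $\exp$ with constant $\beta\gamma\exp(\beta\gamma R)$ on $[-R,R]$, a standard add-and-subtract argument yields a deterministic bound of the form
\begin{equation*}
|\hat{\rho}_{i,n} - \mu_i| \;\le\; C_1 \sum_{s' \in \S} \Bigl| \tfrac{T_i^{s'}(n)}{n} - P^i(s') \Bigr| \;+\; C_2 \sum_{s' \in \S} \tfrac{T_i^{s'}(n)}{n}\, \bigl|\hat{\rho}^{s'}_{i,\,T_i^{s'}(n)} - \mu_i^{s'}\bigr|,
\end{equation*}
where $C_1, C_2$ depend only on $\beta, \gamma, R$. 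This cleanly separates the two sources of error: sampling noise in the empirical transitions, and estimation error in the per-state ERMs.

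For the concentration claim, I will union-bound over $s' \in \S$ (contributing a factor $|\S|$ to $\theta^L$) the event that either term exceeds a quantity of order $n^{\eta-1}z$. The empirical-proportion deviation $T_i^{s'}(n)/n - P^i(s')$ is a sum of i.i.d.\ Bernoulli increments (one per play of arm $i$), so Hoeffding's inequality yields a sub-exponential tail, far tighter than required. The per-state ERM term is where the main obstacle lies: Assumption~\ref{assumption:non-deterministic-bandit-assumption} is stated for a \emph{fixed} sample count, whereas here $T_i^{s'}(n)$ is itself random. I will handle this by splitting on the high-probability event $E_n = \{T_i^{s'}(n) \ge nP^i(s')/2 \text{ for all } s' \in \mathrm{supp}(P^i)\}$; on $E_n$, conditioning on the sequence of sampled next states (which is independent of the per-state cost streams) and invoking Assumption~\ref{assumption:non-deterministic-bandit-assumption} with sample size $T_i^{s'}(n) \ge nP^i(s')/2$ preserves the exponents $\eta$ and $\xi$ while absorbing a constant loss into $\theta^L$. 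The complementary event $E_n^c$ has sub-exponential probability by Hoeffding's inequality and is dominated by the polynomial rate.

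The convergence statement (part 1) then follows routinely: the concentration bound implies $\hat{\rho}_{i,n}\to\mu_i$ in probability, and because $|\hat{\rho}_{i,n}|\le R$ (by boundedness of costs and monotonicity of the ERM estimator), dominated convergence upgrades this to convergence in expectation. The identification of the limit with $\mu_i = \mathrm{ERM}_\beta(c_i + \gamma\mu_i^{\rvar{s}'})$ is built into the first-paragraph decomposition.

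The main technical difficulty is thus the transfer of Assumption~\ref{assumption:non-deterministic-bandit-assumption}'s concentration bound from a deterministic to a random sample size while retaining the exact exponents $\eta^L = \eta$ and $\xi^L = \xi$. Independence of the transition-outcome process from the per-state cost sequences is what makes the conditioning clean; without it one would need a peeling argument over the values of $T_i^{s'}(n)$, which would risk inflating $\xi^L$ by logarithmic factors.
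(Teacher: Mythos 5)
Your proposal is correct and follows essentially the same route as the paper: the paper likewise factors out $c_i$ (via an auxiliary lemma for the undiscounted, cost-free estimator), decomposes the error into a transition-frequency term and a per-state ERM term, union-bounds over $\S$, and transfers Assumption~\ref{assumption:non-deterministic-bandit-assumption} to the random sample size $T_i^{s'}(n)$ by exploiting $T_i^{s'}(n)\le n$ together with the retained weight $T_i^{s'}(n)/n$ (which is also why your event $E_n$ is not actually needed). The only cosmetic differences are that the paper bounds the log-ratio terms via $\ln(1+x)\le x$ rather than Lipschitz constants, and controls $T_i^{s'}(n)-nP^i(s')$ with a cited polynomial concentration lemma rather than Hoeffding.
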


\noindent An important conclusion that can be derived from the Lemma above is that, in the context of a non-stationary non-deterministic bandit, estimator \eqref{eq:entropic_risk_estimator_non_det_bandit} satisfies Assumption~\ref{assumption:non-stationary-bandit-assumption}. Furthermore, we have that, for each arm $i$, the expectation of the estimator, $\mu_{i,n}$, converges to $\mu_i$, which corresponds to the true $\text{ERM}_\beta$  value due to the stochastic transitions for the limiting expected estimator values at the next states, $\mu^{s'}_i$. Since, as discussed above, estimator \eqref{eq:entropic_risk_estimator_non_det_bandit} has a similar form as estimator \eqref{eq:entropic_risk_estimator}, this will allow us to reuse Theo~\ref{theo:non-stationary-bandit-results} to prove convergence of a UCB-based algorithm for risk-aware non-stationary non-deterministic bandits below.

\paragraph{A UCB-based algorithm for the risk-aware non-stationary non-deterministic bandit}
A UCB-based algorithm for the $\text{ERM}_\beta$ under a non-stationary non-deterministic bandit replaces estimator \eqref{eq:entropic_risk_estimator} with estimator \eqref{eq:entropic_risk_estimator_non_det_bandit}. More precisely, the algorithm starts by selecting each of the arms once and then selects, at each timestep $t$, the arm with the best lower confidence bound, i.e.,
\begin{equation}
    \rvar{a}_t = \argmin_{i \in \{1,\ldots,K\}} \left\{ \hat{\rho}_{i,T_i(t)} - b_{t,T_i(t)} \right\}, \quad \quad b_{t,s} = \frac{(\theta^L)^{1/\xi^L} t^{\alpha/\xi^L}}{s^{1-\eta^L}}, \label{eq:ucb_erm_algorithm_2}
\end{equation}
where $\eta^L = \eta$, $\xi^L = \xi$, $\theta^L$ is given by Lemma~\ref{lemma:non-deterministic-bandit-convergence-and-concentration}, and $\alpha > 0$ is an hyperparameter.

Without loss of generality, we assume there exists a unique optimal arm. Let $\mu^* = \min_{i \in \{1, \ldots, K\}} \mu_i$, where $\mu_i$ is defined in Lemma~\ref{lemma:non-deterministic-bandit-convergence-and-concentration}, be the limiting value of the expectation of the estimator for the optimal arm. Similar to Sec.~\ref{sec:bandits:non-stat-risk-aware-bandit}, let $\Delta_i = \mu_i - \mu^*$ and $\Delta_\text{min} = \min_{i \in \{1, \ldots, K\} \setminus i^* } \Delta_i$ be the gap between the optimal arm and the second optimal arm.

\begin{theorem} \label{theo:non-stationary-non-deterministic-bandit-results}
    Let $\bar{\rho}_n = \frac{1}{n} \sum_{i=1}^K T_i(n) \hat{\rho}_{i, T_i(n)}$ be the empirical $\text{ERM}_\beta$ when running algorithm \eqref{eq:ucb_erm_algorithm_2} under a non-stationary non-deterministic bandit satisfying Assumption~\ref{assumption:non-deterministic-bandit-assumption}. Furthermore, for any $1/2 \le \eta < 1$, assume Assumption~\ref{assumption:non-deterministic-bandit-assumption} holds with a sufficiently large $\xi > 1$ value such that there exists $\alpha > 2$ satisfying $\xi\eta(1-\eta) \le \alpha < \xi (1-\eta)$. Then,
    \begin{enumerate}
        \item it holds that $|\mu^* - \EE[]{\bar{\rho}_n}| \le \O\left(\exp(2\beta R) n^{\frac{\alpha}{\xi(1-\eta)} -1}\right)$, i.e., $\lim_{n \rightarrow \infty} \EE[]{\bar{\rho}_n} = \mu^*$.
        \item there exist constants $\theta' >1$, $\xi' > 1$, and $1/2 \le \eta' < 1$ such that, for any $z \ge 1$ and $n \in \mathbb{N}$,
        \begin{equation}
            \mathbb{P}\left[n\bar{\rho}_n - n\mu^* \ge n^{\eta'} z \right] \le \frac{\theta'}{z^{\xi'}}, \quad \text{ and } \quad \mathbb{P}\left[n\bar{\rho}_n - n\mu^* \le - n^{\eta'} z \right] \le \frac{\theta'}{z^{\xi'}},
        \end{equation}
        where $\eta' = \frac{\alpha}{\xi(1-\eta)}$, $\xi' = \alpha - 1$, and $\theta'$ depends on $R$, $K$, $\Delta_\text{min}$, $\beta$, $\theta^L$, $\xi$, $\alpha$, and $\eta$.
    \end{enumerate}
\end{theorem}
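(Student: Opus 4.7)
The plan is to reduce the statement to Theorem~\ref{theo:non-stationary-bandit-results}. The key observation, already highlighted in the discussion preceding the theorem, is that the estimator \eqref{eq:entropic_risk_estimator_non_det_bandit} can be rewritten in the form \eqref{eq:entropic_risk_estimator} by introducing auxiliary random costs $\rvar{z}_{i,t} = c_i + \gamma \rvar{x}_{i,t}^{s'_i(t)} \in [-R,R]$, so that $\hat{\rho}_{i,n} = \frac{1}{\beta}\ln\bigl(\frac{1}{n}\sum_{t=1}^n \exp(\beta \rvar{z}_{i,t})\bigr)$. Thus the non-stationary non-deterministic bandit can be viewed as an instance of the non-stationary bandit of Sec.~\ref{sec:bandits:non-stat-risk-aware-bandit}, \emph{provided} the estimator applied to the sequence $\{\rvar{z}_{i,t}\}_t$ satisfies Assumption~\ref{assumption:non-stationary-bandit-assumption}.

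Next I would invoke Lemma~\ref{lemma:non-deterministic-bandit-convergence-and-concentration} to verify exactly this. The lemma provides: (i) convergence $\lim_{n\to\infty} \mu_{i,n} = \mu_i = \text{ERM}_\beta(c_i + \gamma \mu_i^{\rvar{s}'})$, and (ii) polynomial concentration with constants $(\theta^L,\xi^L,\eta^L) = (\theta^L,\xi,\eta)$. Together these are precisely the two clauses of Assumption~\ref{assumption:non-stationary-bandit-assumption} with $\theta$ replaced by $\theta^L$. Since $\xi^L = \xi$ and $\eta^L = \eta$, the hypothesis that there exists $\alpha > 2$ satisfying $\xi\eta(1-\eta) \le \alpha < \xi(1-\eta)$ carries over verbatim to the reduced problem.

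The UCB-style algorithm \eqref{eq:ucb_erm_algorithm_2} is then exactly the algorithm \eqref{eq:ucb_erm_algorithm} applied to this reduced bandit: its confidence bonus $b_{t,s} = (\theta^L)^{1/\xi^L} t^{\alpha/\xi^L}/s^{1-\eta^L}$ coincides with the bonus of \eqref{eq:ucb_erm_algorithm} after the substitution of constants. Consequently, Theorem~\ref{theo:non-stationary-bandit-results} applies directly with $(\theta,\xi,\eta)$ replaced by $(\theta^L,\xi,\eta)$, and both conclusions of the target theorem follow immediately: the $\O\!\bigl(\exp(2\beta R)\,n^{\alpha/(\xi(1-\eta)) - 1}\bigr)$ bound on $|\mu^* - \EE[]{\bar{\rho}_n}|$ and the polynomial concentration with $\eta' = \alpha/(\xi(1-\eta))$ and $\xi' = \alpha - 1$. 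The only visible bookkeeping is that the new $\theta'$ now depends on $\theta^L$ instead of $\theta$, and that $\Delta_\text{min}$ is measured with respect to the new limits $\mu_i$ from Lemma~\ref{lemma:non-deterministic-bandit-convergence-and-concentration}.

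The main substantive work — and the only genuinely non-trivial step — will have already been absorbed into Lemma~\ref{lemma:non-deterministic-bandit-convergence-and-concentration}: showing that composing the per-next-state estimators $\hat{\rho}_{i,n}^{s'}$, with random visitation counts $T_i^{s'}(n)$ aggregated via $\exp/\ln$, still yields a polynomially concentrated estimator whose expectation converges to the \emph{nested} composition $\text{ERM}_\beta(c_i + \gamma\,\text{ERM}_{\beta\gamma}(\cdot))$. Once that lemma is established, the proof of Theorem~\ref{theo:non-stationary-non-deterministic-bandit-results} is essentially a reduction step identifying it as a special case of Theorem~\ref{theo:non-stationary-bandit-results}, with only the propagation of the constants into $\theta'$ (now depending on $R$, $|\S|$, $\beta$ through $\theta^L$) needing to be tracked.
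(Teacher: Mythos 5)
Your proposal is correct and follows essentially the same route as the paper: both reduce the non-deterministic bandit to the non-stationary bandit of Sec.~\ref{sec:bandits:non-stat-risk-aware-bandit} by rewriting estimator \eqref{eq:entropic_risk_estimator_non_det_bandit} in the form \eqref{eq:entropic_risk_estimator}, use Lemma~\ref{lemma:non-deterministic-bandit-convergence-and-concentration} to verify Assumption~\ref{assumption:non-stationary-bandit-assumption} with parameters $(\theta^L,\xi^L,\eta^L)=(\theta^L,\xi,\eta)$, and then invoke Theorem~\ref{theo:non-stationary-bandit-results} directly. Your additional bookkeeping remarks (the dependence of $\theta'$ on $\theta^L$ and the fact that $\Delta_\text{min}$ is measured against the limits $\mu_i$ from the lemma) match the paper's treatment.
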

\begin{proof}
    As previously discussed, estimator \eqref{eq:entropic_risk_estimator_non_det_bandit} has the same form as estimator \eqref{eq:entropic_risk_estimator}. Furthermore, action-selection \eqref{eq:ucb_erm_algorithm_2} is equivalent to action-selection \eqref{eq:ucb_erm_algorithm}, where the bonus term considers a different set of parameters. Given Assumption~\ref{assumption:non-deterministic-bandit-assumption}, from Lemma~\ref{lemma:non-deterministic-bandit-convergence-and-concentration} we know that estimator \eqref{eq:entropic_risk_estimator_non_det_bandit} (equiv., estimator \eqref{eq:entropic_risk_estimator}) satisfies, for any arm $i$: (i) $\lim_{n \rightarrow \infty} \mathbb{E}\left[\hat{\rho}_{i,n}\right] = \mu_i$; and (ii) there exist constants $\theta^L > 1$, $\xi^L > 1$, and $1/2 \le \eta^L < 1$ such that, for any $z \ge 1$ and $n \in \mathbb{N}$, $\mathbb{P}\left[n\hat{\rho}_{i,n} - n\mu_i \ge n^{\eta^L} z \right] \le \frac{\theta^L}{z^{\xi^L}},$ and $\mathbb{P}\left[n\hat{\rho}_{i,n} - n\mu_i \le - n^{\eta^L} z \right] \le \frac{\theta^L}{z^{\xi^L}}.$ We recall that $\xi^L = \xi$, $\eta^L = \eta$ and $\theta^L$ is given by Lemma~\ref{lemma:non-deterministic-bandit-convergence-and-concentration}. Hence, we meet Assumption~\ref{assumption:non-stationary-bandit-assumption} with parameters $\theta^L$, $\eta^L$, $\xi^L$ and the result follows from Theo.~\ref{theo:non-stationary-bandit-results}.
\end{proof}

\noindent Similar to Sec.~\ref{sec:bandits:non-stat-risk-aware-bandit}, the result above shows that the expected empirical $\text{ERM}_\beta$ of algorithm \eqref{eq:ucb_erm_algorithm_2} satisfies a convergence rate of $\O\left(\exp(2\beta R) n^{\frac{\alpha}{\xi(1-\eta)} -1}\right)$. By letting $\eta=1/2$ and $\alpha = \xi\eta(1-\eta)$ the convergence rate simplifies to $\O(\exp(2\beta R) n^{- 1/2})$. The result above extends the result of the previous section to the non-deterministic case. Similar to Sec.~\ref{sec:bandits:non-stat-risk-aware-bandit}, we can also consider estimator 
\begin{align}
    \hat{\rho}_n^\text{stream} &= \frac{1}{\beta} \ln\left( \frac{1}{n} \sum_{i=1}^K \sum_{s' \in \S} \sum_{t=1}^{T_i^{s'}(T_i(n))} \exp(\beta( c_i + \gamma \rvar{x}_{i,t}^{s'}))\right)  \label{eq:entropic_risk_estimator_non_det_bandit_stream_estimator} \\
    &= \frac{1}{\beta} \ln\left( \frac{1}{n} \sum_{i=1}^K T_i(n) \exp(\beta \hat{\rho}_{i,T_i(n)} ) \right) \nonumber ,
\end{align}
which computes the $\text{ERM}_\beta$ for the stream of costs obtained at the root of the bandit. The second equality above holds from algebraic manipulation and the definition of $\hat{\rho}_{i,n}$. Again, due to the non-linearity introduced by the $\ln$ and $\exp$ operators, $\hat{\rho}_n^\text{stream}$ differs in general from $\bar{\rho}_n$ considered in Theo.~\ref{theo:non-stationary-non-deterministic-bandit-results}. Nevertheless, we can exploit Theo.~\ref{theo:non-stationary-bandit-results-stream-estimator} and state the following result.

\begin{theorem} \label{theo:non-stationary-non-deterministic-bandit-results-stream-estimator}
    Let $\hat{\rho}_n^\text{stream}$, as defined in \eqref{eq:entropic_risk_estimator_non_det_bandit_stream_estimator}, be the empirical $\text{ERM}_\beta$ of the stream of costs at the root node of the bandit when running algorithm \eqref{eq:ucb_erm_algorithm_2} under a non-stationary non-deterministic bandit satisfying Assumption~\ref{assumption:non-deterministic-bandit-assumption}. Also, for $1/2 \le \eta < 1$, assume Assumption~\ref{assumption:non-deterministic-bandit-assumption} holds with a sufficiently large $\xi > 1$ value such that there exists $\alpha > 2$ satisfying $\xi\eta(1-\eta) \le \alpha < \xi (1-\eta)$. Then,
    \begin{enumerate}
        \item there exist constants $\theta'' >1$, $\xi'' > 1$, and $1/2 \le \eta'' < 1$ such that, for every $n \in \mathbb{N}$ and $z \ge 1$,
        \begin{equation}
            \mathbb{P}\left[n\hat{\rho}_n^\text{stream} - n\mu^* \ge n^{\eta''} z \right] \le \frac{\theta''}{z^{\xi''}}, \quad \text{ and } \quad \mathbb{P}\left[n\hat{\rho}_n^\text{stream} - n\mu^* \le - n^{\eta''} z \right] \le \frac{\theta''}{z^{\xi''}},
        \end{equation}
        where $\eta'' = \frac{\alpha}{\xi (1-\eta)}$, $\xi'' = \alpha - 1$, and $\theta'''$ depends on $R$, $K$, $\Delta_\text{min}$, $\beta$, $\theta$, $\theta''$, $\xi$, $\alpha$, and $\eta$.
        \item it holds that $|\mu^* - \hat{\rho}_n^\text{stream}| = \O\left(\theta'' n^{\frac{\alpha}{\xi(1-\eta)} -1}\right)$, i.e., $\lim_{n \rightarrow \infty} \mathbb{E}[\hat{\rho}_n^\text{stream}] = \mu^*$.
    \end{enumerate}
\end{theorem}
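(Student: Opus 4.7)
The plan is to reduce the statement directly to Theorem~\ref{theo:non-stationary-bandit-results-stream-estimator}, reusing the exact same reduction that was already used to prove Theorem~\ref{theo:non-stationary-non-deterministic-bandit-results}. The first step is to rewrite $\hat{\rho}_n^\text{stream}$ in \eqref{eq:entropic_risk_estimator_non_det_bandit_stream_estimator} so that it matches the form of \eqref{eq:entropic_risk_estimator_stream_estimator}. The second equality of \eqref{eq:entropic_risk_estimator_non_det_bandit_stream_estimator} already does most of this by expressing $\hat{\rho}_n^\text{stream}$ in terms of the per-arm estimators $\hat{\rho}_{i,T_i(n)}$ from \eqref{eq:entropic_risk_estimator_non_det_bandit}. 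Adopting the ``collapsed'' random variables $\rvar{z}_{i,t} = c_i + \gamma \rvar{x}_{i,t}^{s'_i(t)} \in [-R,R]$ introduced immediately after \eqref{eq:entropic_risk_estimator_non_det_bandit}, the per-arm estimator takes exactly the form of the non-stationary (deterministic-transition) estimator \eqref{eq:entropic_risk_estimator}, so the stream estimator \eqref{eq:entropic_risk_estimator_non_det_bandit_stream_estimator} coincides in form with \eqref{eq:entropic_risk_estimator_stream_estimator}.

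The second step is to verify the hypotheses of Theorem~\ref{theo:non-stationary-bandit-results-stream-estimator} in this reduced setting. By Lemma~\ref{lemma:non-deterministic-bandit-convergence-and-concentration}, Assumption~\ref{assumption:non-deterministic-bandit-assumption} implies that $\hat{\rho}_{i,n}$ satisfies Assumption~\ref{assumption:non-stationary-bandit-assumption} with parameters $(\theta^L, \xi^L, \eta^L) = (\theta^L, \xi, \eta)$, where $\theta^L$ inherits a dependence on $\theta$, $\xi$, $R$, $\beta$, and $|\S|$. Moreover, algorithm \eqref{eq:ucb_erm_algorithm_2} is exactly algorithm \eqref{eq:ucb_erm_algorithm} instantiated with the parameter triple $(\theta^L, \xi^L, \eta^L)$, and the range restriction $\xi\eta(1-\eta) \le \alpha < \xi(1-\eta)$ with $\alpha > 2$ carries over verbatim. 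Invoking Theorem~\ref{theo:non-stationary-bandit-results-stream-estimator} under this reduction yields the desired polynomial concentration for $\hat{\rho}_n^\text{stream}$ with $\eta'' = \alpha/(\xi(1-\eta))$ and $\xi'' = \alpha - 1$. The constant $\theta''$ inherits its dependence from the deterministic case, so it depends on $R$, $K$, $\Delta_\text{min}$, $\beta$, $\theta^L$ (hence implicitly on $\theta$, $\xi$, $|\S|$), $\xi$, $\alpha$, and $\eta$. Part~2, the $\O(\theta'' n^{\alpha/(\xi(1-\eta))-1})$ convergence rate, then follows by integrating the polynomial tails from part~1, exactly as in the proof of Theorem~\ref{theo:non-stationary-bandit-results-stream-estimator}.

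The only conceptual point requiring care, and the main potential obstacle, is confirming that the additional randomness introduced by the random next-state trajectory $s'_i(t)$ does not invalidate the reduction. This is precisely what Lemma~\ref{lemma:non-deterministic-bandit-convergence-and-concentration} guarantees: averaging the next-state estimators $\hat{\rho}^{s'}_{i,n}$ over the stochastic transitions $\rvar{s}' \sim P^i(\cdot)$ still produces a per-arm estimator that concentrates polynomially around the correct value $\mu_i = \text{ERM}_\beta(c_i + \gamma \mu_i^{\rvar{s}'})$, with the same exponents $(\xi, \eta)$ as in the deterministic-transition case and a constant $\theta^L$ that has absorbed the $|\S|$ and $\beta$ dependence arising from the nested ERM. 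Once this absorption is in place, the non-deterministic stream result follows as an immediate corollary of the deterministic-transition stream result with no additional technical work.
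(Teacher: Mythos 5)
Your proposal is correct and follows essentially the same route as the paper: the paper's proof likewise observes that $\hat{\rho}_n^\text{stream}$ in \eqref{eq:entropic_risk_estimator_non_det_bandit_stream_estimator} has exactly the form of \eqref{eq:entropic_risk_estimator_stream_estimator}, uses Lemma~\ref{lemma:non-deterministic-bandit-convergence-and-concentration} to certify that Assumption~\ref{assumption:non-stationary-bandit-assumption} holds with parameters $(\theta^L,\xi,\eta)$, and then invokes Theorem~\ref{theo:non-stationary-bandit-results-stream-estimator}. Your write-up simply spells out the reduction in more detail than the paper's two-sentence argument.
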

\begin{proof}
    The proof is similar to the proof of Theo.~\ref{theo:non-stationary-non-deterministic-bandit-results} and works by exploiting Theo.~\ref{theo:non-stationary-bandit-results-stream-estimator} given that, as highlighted in \eqref{eq:entropic_risk_estimator_non_det_bandit_stream_estimator}, estimator $\hat{\rho}_n^\text{stream}$ has precisely the same form as that introduced in \eqref{eq:entropic_risk_estimator_stream_estimator}. % in the context of non-stationary bandits.
\end{proof}

\section{Entropic risk-aware Monte Carlo tree search} \label{sec:risk-aware-mcts}
In this section, we introduce a risk-aware MCTS algorithm for the $\text{ERM}_\beta$ that is provably correct. We exploit the results from the previous section to provide a non-asymptotic analysis of the proposed algorithm. Below, we provide our risk-aware MCTS algorithm and the main result of this paper, Theo.~\ref{theo:mcts-result}, which establishes a polynomial regret bound for our proposed algorithm.

The risk-aware MCTS algorithm for the $\text{ERM}_\beta$ we herein introduce incrementally constructs a search tree that alternates between decision nodes, where the agent can select between the different actions, and chance nodes, representing stochastic environment transitions. Each decision node $s_h \in \S$ of depth $h \in \{0, \ldots, H-1\}$ in the search tree stores $N(s_h)$ and $N(s_h,a)$, which, respectively, correspond to the number of times node $s_h$ has been visited and to the the number of times action $a \in \A$ has been selected while in node $s_h$. Furthermore, for each action $a \in \A$, decision node $s_h$ stores a list $\X_{(s_h,a)}$ containing the random discounted cumulative costs that have been sampled by starting from $s_h$ and taking action $a$ until a leaf node is reached. We denote with $\rvar{x}_{(s_h,a),t} = c_h(s_h,a) + \gamma c_{h+1}(\rvar{s}_{h+1}^t, \rvar{a}_{h+1}^t) + \ldots + \gamma^{H-h} c_H(\rvar{s}_H^t)$ the $t$-th sampled discounted cumulative cost starting from state-action pair $(s_h,a)$. We also let $\rvar{x}_{(s_h,a),t}^{s_{h+1}} = c_{h+1}(s_{h+1}^t, \rvar{a}_{h+1}^t) + \ldots + \gamma^{H-(h+1)} c_H(\rvar{s}_H^t)$ be the $t$-th random discounted cumulative cost starting from next state $s_{h+1} \in \S$ after taking action $a \in \A$ while in state $s_h \in \S$. Now, for each $s_h \in \S$ and $a \in \A$, let
\begin{equation}
    \hat{\rho}_{(s_h,a),n} = \frac{1}{\beta_h} \ln \left( \frac{1}{n} \sum_{t=1}^{n} \exp\left(\beta_h ( \underbrace{c_{h}(s_h,a) + \gamma \rvar{x}_{(\rvar{s}_h,a),t}^{\rvar{s}_{h+1}} }_{= \; \rvar{x}_{(\rvar{s}_h,a),t}} ) \right) \right) \label{eq:mcts_state_node_erm_estimator}
\end{equation}
be the $\text{ERM}_\beta$ for the discounted cumulative costs starting from state-action pair $(s_h,a)$, where $\beta_h = \beta \cdot \gamma^h$ is the entropic-risk parameter adjusted for depth $h$. Since $\rvar{x}_{(\rvar{s}_h,a),t} = c_{h}(s_h,a) + \gamma \rvar{x}_{(\rvar{s}_h,a),t}^{\rvar{s}_{h+1}}$, the estimator above essentially calculates the empirical $\text{ERM}_\beta$ for the sampled discounted cumulative sum of costs starting at state-action pair $(s_h,a)$. At each iteration of the algorithm, a sequence of actions is selected at the decision nodes starting from the root node $s_0$ until depth $H$ is reached, inducing a random sequence of states and actions $(\rvar{s}_0, \rvar{a}_0, \rvar{s}_1, \rvar{a}_1, \ldots, \rvar{s}_H)$ that traverses the search tree. Action-selection at the decision nodes is given by
\begin{equation}
    \rvar{a}_{h} = \argmin_{a \in \A} \Bigg\{ \hat{\rho}_{(\rvar{s}_h,a),N(\rvar{s}_h,a)} - \frac{(\theta_{h+1}^L)^{1/{\xi_{h+1}}} N(\rvar{s}_h)^{\alpha_{h+1}/{\xi_{h+1}}}}{N(\rvar{s}_h,a)^{1-\eta_{h+1}}} \Bigg\} \label{eq:mcts_action-selection},
\end{equation}
where $\{\alpha_h\}_{h \in \{1, \ldots, H\}}$, $\{(\theta_h^L)\}_{h \in \{1, \ldots, H\}}$, $\{\eta_h\}_{h \in \{1, \ldots, H\}}$, and $\{\xi_h\}_{h \in \{1, \ldots, H\}}$ are parameters of the algorithm. We assume that if $N(\rvar{s}_h,a) = 0$ for some action $a \in \A$, then $a$ is selected. During backpropagation, the visitation counts $N(\rvar{s}_h)$ and $N(\rvar{s}_h,\rvar{a}_h)$ are incremented and the algorithm updates lists $\X_{(\rvar{s}_h,\rvar{a}_h)}$ for all nodes $\rvar{s}_h$ along the traversed path by storing the respective sampled discounted cumulative costs starting from $\rvar{s}_h$. We refer to Appendix~\ref{appendix:mcts_pseudocode} for the pseudocode of the risk-aware MCTS algorithm. We state the following result (full proof in Appendix~\ref{appendix:proof_of_mcts_result}) on the non-asymptotic performance of the risk-aware MCTS described above. %We provide a detailed discussion on how we exploited the results from Sec.~\ref{sec:bandits} to prove the result below in the next section.

\begin{theorem} \label{theo:mcts-result}
    For a discounted finite-horizon MDP satisfying Assumption~\ref{assumption:mdp_assumptions} and $1/2 \le \eta < 1$, let
    \begin{align*}
        \eta_h &= \eta, \; \forall h \in \{1, \ldots H\} \\
        \xi_h &= \alpha_{h+1} - 1, \; \forall h \in \{1, \ldots, H-1\}, \text{ and } \xi_{H} > 1 \text{ is arbitrary}, \\
        \alpha_h &= \eta(1-\eta)\xi_h, \; \forall h \in \{1, \ldots, H\}.
    \end{align*}
    Suppose $\xi_{H} > 1$ is chosen large enough\footnote{Choosing a large enough $\xi_H$ value is not a restrictive condition since $\xi_H$ can be set arbitrarily at the leaf nodes.} such that for every depth $h \in \{1, \ldots, H\}$ it holds that $\xi_h > 1$ and $\alpha_h > 2$. Then, there exist constants $\{\theta_h^L\}_{h \in \{1, \ldots, H\}}$ such that the risk-aware MCTS algorithm with action-selection given by \eqref{eq:mcts_action-selection} satisfies
    $\lim_{n \rightarrow \infty} \mathbb{E}[\hat{V}_{n}(s_0)] = V^*_0(s_0),$
    where $\hat{V}_{n}(s_0) = \frac{1}{\beta} \ln \left( \frac{1}{n} \sum_{a \in \A} \sum_{t=1}^{T_a(n)} \exp\left(\beta \rvar{x}_{(s_0,a),t}\right) \right)$ is the empirical $\text{ERM}_\beta$ obtained by the risk-aware MCTS algorithm at the root after $n$ iterations. Furthermore, there exist constants $\theta_0 > 1$, $\xi_0 > 1$ and $1/2 \le \eta_0 < 1$ such that, for every $n \in \mathbb{N}$ and $z \ge 1$ it holds that
    \begin{equation*}
        \mathbb{P}\left[n\hat{V}_{n}(s_0) - nV^*_0(s_0) \ge n^{\eta_0} z \right] \le \frac{\theta_0}{z^{\xi_0}}, \quad \text{ and } \quad \mathbb{P}\left[n\hat{V}_{n}(s_0) - nV^*_0(s_0) \le - n^{\eta_0} z \right] \le \frac{\theta_0}{z^{\xi_0}}.
    \end{equation*}
\end{theorem}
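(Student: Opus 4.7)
The plan is to prove the theorem by \emph{backward induction on the depth} $h$, walking from the leaves $h=H$ up to the root $h=0$, and applying the non-stationary non-deterministic bandit results of Sec.~\ref{sec:bandits:non-stat-non-det-risk-aware-bandit} at each decision node. The inductive claim at depth $h$ is: for every state $s_h\in\S$, the stream $\mathrm{ERM}_{\beta_h}$ estimator computed by the MCTS algorithm rooted at $s_h$ both satisfies $\lim_{n\to\infty}\mathbb{E}[\cdot]=V^*_h(s_h)$ and enjoys polynomial concentration in the sense of Assumption~\ref{assumption:non-deterministic-bandit-assumption}, with parameters $(\eta_h,\xi_h,\theta_h^L)$ prescribed by the theorem's recurrences. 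The base case at $h=H$ is immediate since $V^*_H(s)=c_H(s)$ is deterministic.

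For the inductive step, fix $h<H$ and a state $s_h$, and view its decision node as a non-stationary non-deterministic bandit in the sense of Sec.~\ref{sec:bandits:non-stat-non-det-risk-aware-bandit}: the arms are actions $a\in\A$, selecting $a$ yields a deterministic cost $c_h(s_h,a)$ and samples $\rvar{s}'\sim P^a(s_h,\cdot)$, and the subsequent random cost $\rvar{x}_{(s_h,a),t}^{s'}$ is the discounted cumulative cost returned by the depth-$(h{+}1)$ subtree rooted at $s'$. Since $\gamma\beta_h=\beta_{h+1}$, the $\mathrm{ERM}_{\beta_{h+1}}$ estimator of this sequence is exactly the MCTS stream estimator at $s'$, which by the inductive hypothesis verifies Assumption~\ref{assumption:non-deterministic-bandit-assumption} with the depth-$(h{+}1)$ parameters. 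Lemma~\ref{lemma:non-deterministic-bandit-convergence-and-concentration} then lifts this to the state-action estimator: $\lim_{n\to\infty}\mathbb{E}[\hat{\rho}_{(s_h,a),n}]=\mathrm{ERM}_{\beta_h}(c_h(s_h,a)+\gamma V^*_{h+1}(\rvar{s}'))$, together with polynomial concentration carrying $\theta_{h+1}^L,\xi_{h+1},\eta_{h+1}$.

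Because action-selection \eqref{eq:mcts_action-selection} coincides with \eqref{eq:ucb_erm_algorithm_2} for exactly those parameters, Theo.~\ref{theo:non-stationary-non-deterministic-bandit-results-stream-estimator} then applies at $s_h$, and the stream estimator at $s_h$ converges in expectation to $\min_{a\in\A}\mathrm{ERM}_{\beta_h}(c_h(s_h,a)+\gamma V^*_{h+1}(\rvar{s}'))=V^*_h(s_h)$, where the last equality is the $\mathrm{ERM}$ Bellman optimality identity recalled in the background. The concentration parameters at level $h$ update as $\eta_h=\alpha_{h+1}/(\xi_{h+1}(1-\eta))=\eta$ (so $\eta$ is depth-invariant), $\xi_h=\alpha_{h+1}-1$, and a fresh constant $\theta_h^L$, which is precisely the recurrence prescribed in the theorem statement; the hypothesis that $\xi_H$ is chosen large enough guarantees $\xi_h>1$ and $\alpha_h>2$ at every level, so the hypotheses of Theo.~\ref{theo:non-stationary-non-deterministic-bandit-results-stream-estimator} keep holding. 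Collapsing the induction down to $h=0$ and observing that $\hat{V}_n(s_0)$ is exactly the root-level stream estimator \eqref{eq:entropic_risk_estimator_non_det_bandit_stream_estimator} with $\beta_0=\beta$ yields both claims of the theorem.

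The main obstacle is the non-stationarity itself: the sequences of cumulative subtree costs seen at depth $h$ are not i.i.d., because UCB selection in every subtree depends on the past. The entire induction relies on the fact that, at every level, the polynomial concentration delivered by Theo.~\ref{theo:non-stationary-non-deterministic-bandit-results-stream-estimator} is exactly the shape required by Assumption~\ref{assumption:non-deterministic-bandit-assumption} one level above; verifying this while tracking that $\theta_h^L$ stays finite despite its exponential dependence on $\beta$ and its multiplicative accumulation across depths is the bulk of the technical work. Because ERM estimators (unlike sample means) behave differently under the stream-versus-averaged distinction explained in Sec.~\ref{sec:bandits:non-stat-risk-aware-bandit}, the induction must invoke both the averaged result (Theo.~\ref{theo:non-stationary-non-deterministic-bandit-results}, implicit in Lemma~\ref{lemma:non-deterministic-bandit-convergence-and-concentration}) and the stream result (Theo.~\ref{theo:non-stationary-non-deterministic-bandit-results-stream-estimator}) in the correct order at each depth, which is the key departure from the risk-neutral analyses of \citet{shah_2020} and \citet{comer_2025}.
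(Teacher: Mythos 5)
Your proposal is correct and mirrors the paper's own argument: a backward induction from the leaves to the root, verifying Assumption~\ref{assumption:non-deterministic-bandit-assumption} at depth $H$, lifting it to the state-action estimators via Lemma~\ref{lemma:non-deterministic-bandit-convergence-and-concentration}, invoking Theo.~\ref{theo:non-stationary-non-deterministic-bandit-results-stream-estimator} at each decision node (identifying the limits with the ERM Bellman optimality equations), and checking that the stated parameter recurrences keep the hypotheses satisfiable down to the root. The only cosmetic difference is that the paper treats possibly random terminal costs at the leaves via the exponential-concentration lemma for the i.i.d. ERM estimator, whereas you dispatch the base case by noting $c_H$ is deterministic, which the paper explicitly covers as a special case.
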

\begin{proofsketch}
    The proof follows by recursively applying Theo.~\ref{theo:non-stationary-non-deterministic-bandit-results-stream-estimator} in the context of trees. We start by verifying that Assumption~\ref{assumption:non-deterministic-bandit-assumption}, as required by Theo.~\ref{theo:non-stationary-non-deterministic-bandit-results-stream-estimator}, is satisfied at the leaf nodes $s_H$ (base case). This is because the ERM estimator for the stream of costs obtained at the leaves enjoys exponential concentration, implying it also satisfies the convergence and polynomial concentration properties stated in Assumption~\ref{assumption:non-deterministic-bandit-assumption}. Then, from the perspective of node $s_h$ at depth $h \in \{0, \ldots, H-1\}$, the action selection mechanism prescribed by the risk-aware MCTS algorithm, i.e., \eqref{eq:mcts_action-selection}, is equivalent to the action selection mechanism we present in Sec.~\ref{sec:bandits:non-stat-non-det-risk-aware-bandit} in the context of a non-stationary non-deterministic bandits, i.e., \eqref{eq:ucb_erm_algorithm_2}. Intuitively, the non-stationarity here is introduced by changes in action selection at the decision nodes below $s_h$ in the search tree. Therefore, as an induction step, we assume layer $h+1$ satisfies Assumption~\ref{assumption:non-deterministic-bandit-assumption} and apply Theo.~\ref{theo:non-stationary-non-deterministic-bandit-results-stream-estimator} to obtain convergence and concentration at the decision node $s_h$. We recursively apply Theo.~\ref{theo:non-stationary-non-deterministic-bandit-results-stream-estimator} until we establish convergence and polynomial concentration at the root node $s_0$. We also note that, at each decision node $s_h$, the limiting empirical $\text{ERM}_{\beta_h}$ satisfies the Bellman optimality equations \eqref{eq:erm_bellman_optimality equations} and, thus, the actions selected by the algorithm are optimal and the empirical $\text{ERM}_{\beta}$ for the stream of costs at the root node $s_0 \in \S$ converges to $V^*_0(s_0)$. Finally, we also show that there exists a set of parameters $\{\alpha_h\}_{h \in \{1, \ldots, H\}}, \{\theta_h\}_{h \in \{1, \ldots, H\}}, \{\eta_h\}_{h \in \{1, \ldots, H\}}, \text{ and } \{\xi_h\}_{h \in \{1, \ldots, H\}}$ satisfying all required assumptions at each level of the tree, as well as the recursive relations imposed by Theo.~\ref{theo:non-stationary-non-deterministic-bandit-results-stream-estimator}.
\end{proofsketch}

\noindent Theo.~\ref{theo:mcts-result} shows that our proposed risk-aware MCTS algorithm not only provably solves risk-aware MDPs with ERM objectives, but also enjoys polynomial regret concentration. Theo.~\ref{theo:mcts-result} heavily relies on the results from Sec.~\ref{sec:bandits:non-stat-non-det-risk-aware-bandit}. Thus, from the perspective of a decision node $s_h \in \S$ in the tree, the convergence rate satisfies $\O\left(n^{\frac{\alpha_{h+1}}{\xi_{h+1} (1-\eta_{h+1})} -1}\right)$, which can be simplified to $\O\left(n^{\eta - 1}\right)$ given our choice of parameters. Letting $\eta=1/2$ yields an optimal convergence rate of $\O\left(n^{-1/2}\right)$, in which case the bonus terms are of the form $\sqrt{\sqrt{N(s_h)}/N(s_h,a)}$. Also, as discussed in Sec.~\ref{sec:bandits:non-stat-non-det-risk-aware-bandit}, larger $\beta$ values are detrimental to the convergence rate. However, note that a decision node $s_h \in \S$ at depth $h \in \{0, \ldots, H-1\}$ uses the adjusted parameter $\beta_h = \beta \cdot \gamma^h$. Consequently, we expect $\beta$ to have a stronger impact on the convergence rate at shallower nodes of the tree.

\section{Illustrative experiments} \label{sec:experiments}
\vspace{-0.1cm}
We present a set of illustrative experiments comparing our algorithm, \textsc{ERM-MCTS}, against: (i) an oracle baseline, \textsc{ERM-BI}, that computes the optimal policy via backward induction using \eqref{eq:erm_bellman_optimality equations} as introduced by \citet{hau_2023}; and (ii) our adaptation of the algorithm proposed by \citet{hayes_2023}, \textsc{Acc-MCTS}, to the ERM case. For the purpose of our illustrative experiments, we implement \textsc{ERM-MCTS} as described in the previous section with $\eta = 1/2$ and let $ \theta^L_h = 2^{\xi_h/2}$ at every depth such that constants become $\sqrt{2}$, as commonly used in the MCTS literature. We consider two environments. The MDP-4 environment consists of a four-state MDP where the agent needs to tradeoff, at the initial state, between a safe action that leads to a medium cost state and a risky action that can lead to both low or high cost states. The agent resets to the initial state with a given probability. The Grid-MDP is a $5 \times 3$ grid environment with an additional absorbing \textit{pitfall} state. From the initial state, the agent must choose between a safer and a riskier path to the target state, trading off a lower expected cost against an increased probability of transitioning to the pitfall state. For the MDP-4, we let the horizon $H=20$ and $\gamma = 0.9$. For the Grid-MDP, we let $H=15$ and $\gamma=0.99$. We run 100 seeds for each experimental setup and report the bootstrapped ERM confidence intervals in the tables and figures. We refer to Appendix~\ref{appendix:empirical-results} for more details regarding our environments and the complete experimental results. Our code can be found \href{https://github.com/PPSantos/risk-aware-mcts}{here}.

\begin{figure}[t]
\centering

\begin{minipage}{0.24\linewidth}
\centering
\includegraphics[height=3cm]{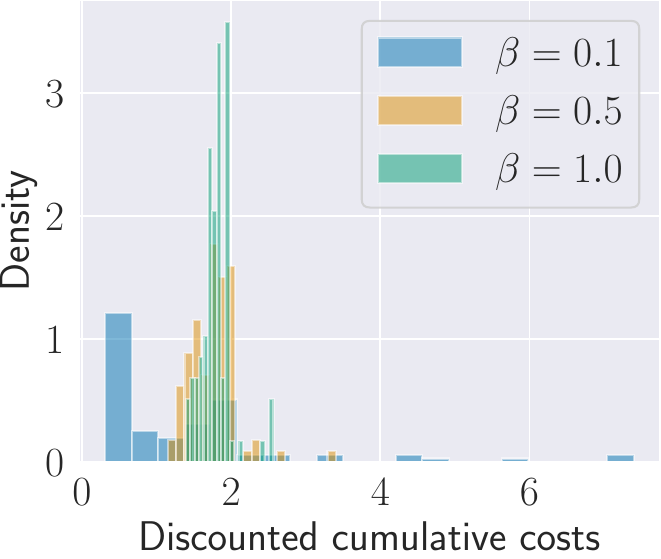}

\vspace{0.3em}
\footnotesize (a) MDP-4.
\end{minipage}
\hfill
\begin{minipage}{0.24\linewidth}
\centering
\includegraphics[height=3cm]{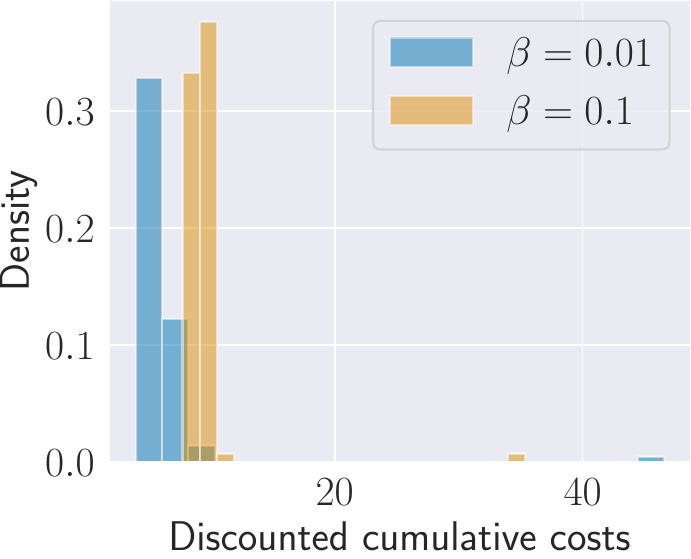}

\vspace{0.3em}
\footnotesize (b) Grid-MDP.
\end{minipage}
\hfill
\begin{minipage}{0.24\linewidth}
\centering
\includegraphics[height=3cm]{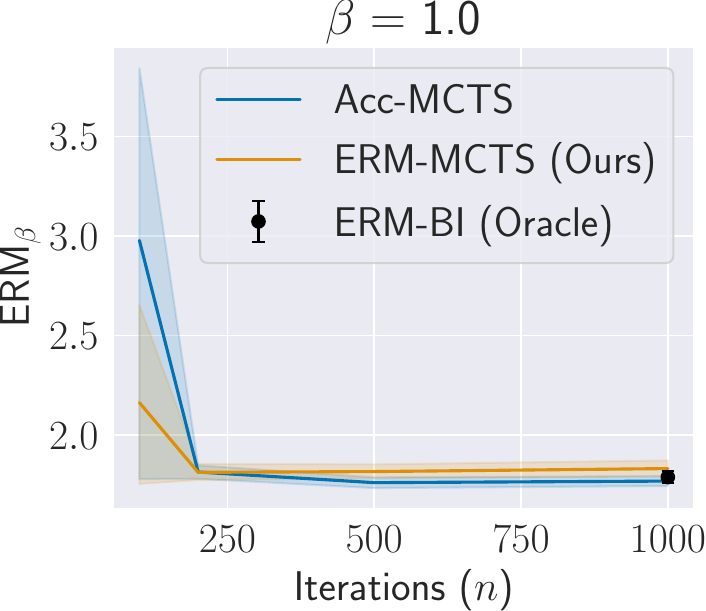}

\vspace{0.3em}
\footnotesize (c) MDP-4.
\end{minipage}
\hfill
\begin{minipage}{0.24\linewidth}
\centering
\includegraphics[height=3cm]{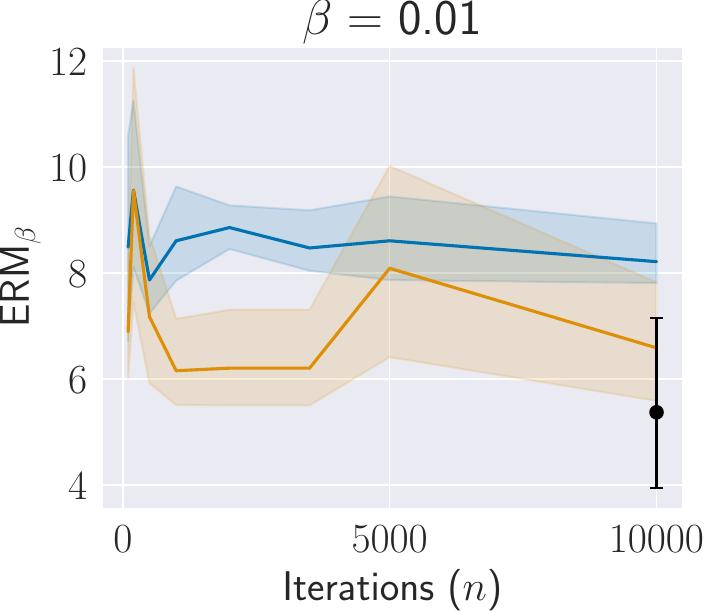}

\vspace{0.3em}
\footnotesize (d) Grid-MDP.
\end{minipage}

\caption{%
(a-b) Distribution of discounted costs obtained by \textsc{ERM-MCTS} for different $\beta$ values; (c–d) $\text{ERM}_\beta$ for different algorithms as a function of the number of MCTS iterations ($n$).
}
\label{fig:experimental-results}
\end{figure}

In Figures~\ref{fig:experimental-results}~(a–b), we report the distributions of cumulative discounted costs produced by \textsc{ERM-MCTS} for both environments after $1000$ MCTS iterations. As the risk sensitivity parameter $\beta$ increases, the induced distributions progressively shift away from high-cost outcomes, indicating that \textsc{ERM-MCTS} exhibits increasingly risk-averse behavior. Figures~\ref{fig:experimental-results}~(c–d) compare the $\text{ERM}_{\beta}$ values obtained by different algorithms as a function of the number of MCTS iterations. As seen, under MDP-4, both algorithms match the $\text{ERM}_{\beta}$ obtained by the oracle baseline. However, for the Grid-MDP environment, $\textsc{\text{ERM}-MCTS}$ converges faster to the optimal $\text{ERM}_{\beta}$ than the \textsc{Acc-MCTS} baseline. This observation suggests that exploiting the dynamic programming decompositions of the $\text{ERM}_{\beta}$, as done by $\textsc{\text{ERM}-MCTS}$, may lead to practical empirical improvements.

% In the Grid-MDP environment, $\textsc{\text{ERM}-MCTS}$ converges faster to the optimal $\text{ERM}_{\beta}$ than the baseline \textsc{Acc-MCTS}, suggesting that exploiting the dynamic decomposition property of the ERM objective improves sample efficiency.

\vspace{-0.1cm}

\section{Conclusion}
\vspace{-0.1cm}

We propose the first theoretically-grounded MCTS algorithm to solve risk-aware MDPs with ERM objectives. We demonstrate that our algorithm is correct and satisfies polynomial regret concentration. We present illustrative experiments showing that our method successfully trades off between risk-neutral and risk-averse behaviors, while comparing its performance against other baselines.

% Acknowledgments---Will not appear in anonymized version
% TODO
% \acks{We thank a bunch of people and funding agency.}

\newpage
\bibliography{main}

\newpage
\appendix
\section{Extended related work}
\label{appendix:extended-related-work}
In this section, we discuss previous works that are related to our study, reviewing works on risk-neutral MCTS, risk-aware multi-armed bandits, and risk-aware and safe MCTS. 

\subsection{Risk-neutral MCTS}
Seminal works explored the extension of the UCB \citep{auer_2002} algorithm to sequential decision-making settings \citep{kearns_1999,chang_2005,kocsis_2006,kocsis_2006_b}. We highlight the works of \citet{kocsis_2006,kocsis_2006_b}, which proposed a UCB algorithm for trees named UCT and argued for its convergence to optimal behavior. UCT considers logarithmic bonuses of the form $\sqrt{\ln(N(s_h))/N(s_h,a)}$. The work of \citep{munos_2014} provides a broader discussion beyond UCT, investigating how optimistic planning can be used to solve sequential decision-making problems.

Subsequent works \citep{shah_2020,comer_2025} provide non-asymptotic analyses of MCTS with polynomial bonuses similar to the bonuses we consider in our work. In particular, \citet{shah_2020} claim that the analysis provided by \citet{kocsis_2006,kocsis_2006_b} is incorrect since it relies on independence assumptions that do not hold in practice and, therefore, MCTS with logarithmic bonuses does not enjoy exponential concentration. With this in mind, the authors provide convergence and concentration results for an MCTS algorithm with polynomial bonuses under MDPs with deterministic dynamics. Later, \citet{comer_2025} extend the results of \citet{shah_2020} to the case of MDPs with non-deterministic dynamics. Our work borrows ideas from the results and proofs put forth by both \citet{shah_2020} and \citet{comer_2025}. We follow a similar line of reasoning as that of \citet{shah_2020,comer_2025} given that we first prove convergence and concentration results for non-stationary bandits and then exploit such results to prove convergence and concentration in the context of trees. However, as discussed thoughout our article, there are fundamental differences between our analysis and that provided by the aforementioned works, majorly because the mean estimator and the ERM estimator are inherently different (e.g., the mean estimator is linear whereas the ERM estimator is non-linear, the ERM is not positively-homogeneous while the mean estimator is, etc.). Hence, the extension of the results put forth by \citet{shah_2020,comer_2025} to the ERM case is non-trivial and forced us to either substantially adapt the proofs of the aforementioned articles, or to prove additional results (e.g., Theo.~\ref{theo:non-stationary-bandit-results-stream-estimator} or Theo.~\ref{theo:non-stationary-non-deterministic-bandit-results-stream-estimator}). Finally, we emphasize that we are explicit in all our proofs whenever we reuse sub-results from the previously cited articles.

We refer to the surveys on risk-neutral MCTS \citep{browne_2012,swiechowski_2022} for a detailed discussion of previous works studying MCTS and applications.

% Distributional *risk-neutral* MCTS paper: 
% https://openreview.net/forum?id=CECVgSZbLW

% Other MCTS works:
% \citep{schadd_2008} (single player MCTS with a different bonus term); \citep{sturtevant_2008} (MCTS for multi-player games); \citep{jiang_2018} (provide the first sample complexity bounds for a tree search-based RL algorithm);    \citep{teraoka_2014,kaufmann_2017} (analyze MCTS in the context of zero-sum games); \citep{mao_2020} (MCTS for continuous state spaces with non-asymptotic analysis); \citep{choudhury_2021} (multi-agent MCTS variant).

\subsection{Risk-aware multi-armed bandits}
Previous works investigate risk-aware bandits \citep{sani_2012,kagrecha_2019,cassel_2018,chang_2022,farsang_2023,liang_2021,tan_2022_survey}. We refer to \citet{tan_2022_survey} for a survey of such works. In particular, as discussed by \citet{tan_2022_survey}, previous works considered both regret minimization, where the objective is to minimize the expected regret defined as a function of the risk measure of the distribution associated with each of the arms, as well as pure exploration settings, where the objective is to find the arm associated with the optimal risk measure value.  \citet{tan_2022_survey} present a risk UCB algorithm for regret minimization that, while assuming that the distributions of the arms are sub-Gaussian and the risk measure is Lipschitz, achieves a sub-linear regret guarantee with a regret bound similar to that of risk-neutral UCB. Other works, such as \citep{cassel_2018}, provide a systematic approach to analyzing risk-aware stochastic multi-armed bandits, focusing on UCB-like policies. \citet{chang_2022} explore Thompson sampling-based solutions to risk-averse multi-armed bandits. \citet{farsang_2023} focus on a risk-averse contextual bandits. Regarding the ERM case, \citet{liang_2021} propose a Thompson sampling-based algorithm for entropic risk bandits. The authors provide an analysis of the asymptotic regret of the proposed algorithm, showing that the algorithm is asymptotically optimal.

In our work, we investigate, in Sec.~\ref{sec:bandits}, UCB-based algorithms for ERM bandits. We show that our proposed algorithm is asymptotically optimal and that it satisfies polynomial regret concentration. Our main objective with Sec.~\ref{sec:bandits} is to provide the necessary results to then prove convergence of our MCTS algorithm to solve ERM risk-aware MDPs. Nevertheless, the results we present in Sec.~\ref{sec:bandits} contribute with a complementary analysis of UCB-based algorithms for ERM bandits to that presented in the aforementioned works. In particular, we highlight that we are the first to provide convergence and concentration results in the context of non-stationary ERM bandits.

\subsection{Risk-aware MCTS}
\citet{hayes_2023} propose an MCTS-based algorithm to solve expected scalarized objectives with non-linear utility functions. More precisely, the authors aim to solve
\begin{equation}
    \pi^* \in \argmin_{\pi} \mathbb{E}\left[u\left(\sum_{t=0}^\infty \gamma^t c(\rvar{s}_t, \rvar{a}_t)\right)\right], \label{eq:hayes_ESR_objective}
\end{equation}
where $u$ is a utility function. The authors focus on both scalar and vectorized (multi-objective setting) cost functions. In the context of scalar cost functions, the authors study risk-aware sequential decision making by setting the utility function $u$ to particular risk-seeking or risk-averse functions. \citet{hayes_2023} propose two MCTS-based algorithms to solve the problem above, one based on UCB-like exploration strategies with logarithmic bonuses and the other based on Thompson sampling. The authors experiment with both MCTS variants in the context of a stock exchange environment put forth by \cite{shen_2014}.

We now highlight some key connections and differences between our work and that of \citet{hayes_2023}. We start by noting that the risk-aware formulations considered by \citet{hayes_2023} differ from our work, as we consider the case of the ERM and \citet{hayes_2023} consider particular utility functions that promote risk-seeking or risk-averse behaviors that differ from the ERM formulation. Nevertheless, since \citet{hayes_2023} allow for any non-linear utility function $u$, we can let $u(x) = \exp(\beta x)$ and thus recover the ERM setting given that solving \eqref{eq:hayes_ESR_objective} with $u(x) = \exp(\beta x)$ is equivalent to solving (due to the fact that the $\ln$ operator is monotonically non-decreasing)
$$\pi^* \in \argmin_{\pi} \frac{1}{\beta} \ln\left( \mathbb{E}\left[u\left(\sum_{t=0}^\infty \gamma^t c(\rvar{s}_t, \rvar{a}_t)\right)\right] \right).$$
Therefore, in theory, the MCTS algorithms proposed by \citet{hayes_2023} can be used to solve risk-aware settings under entropic risk measures, even though the authors did not explore this research direction in their work. While \citet{hayes_2023} did not provide any kind of convergence or concentration guarantees for their proposed algorithms, we argue that their proposed MCTS variants, in particular NLU-MCTS that is based on UCB-exploration, is no different from standard MCTS with logarithmic bonuses. This is because the approach proposed by the authors to solve \eqref{eq:hayes_ESR_objective} can be equivalently seen as applying a standard MCTS algorithm to a particular MDP, derived from the original problem, where the agent keeps track of the accrued cumulative costs/rewards up to any timestep (i.e., the state space is extended to accommodate this information) and where the cost/reward function is zero except at the terminal timestep where it equals $u$ evaluated at the accrued costs/rewards throughout the entire trajectory. Therefore, NLU-MCTS enjoys similar convergence and concentration properties as standard MCTS algorithms.

However, we note that the concentration properties are associated with solving a rather sparse cost/reward finite-horizon MDP derived from the original risk-aware MDP (i.e., the concentration properties are not directly associated with solving the original risk-aware MDP). On the other hand, our approach directly solves the original risk-aware MDP, providing a principled MCTS algorithm that exploits the dynamic programming decompositions of the the ERM put forth by \cite{hau_2023}. We also theoretically analyze our algorithm while borrowing ideas from previous results regarding the risk-neutral MCTS algorithm \citep{kocsis_2006,shah_2020,comer_2025}, successfully extending the proofs and results of previous articles to the risk-aware setting. We close by highlighting that the main goal of \citet{hayes_2023} is to propose a practical MCTS algorithm to solve expected scalarized objectives, and not to provide a theoretical analysis of their approach (as it is the main goal of our work in the context of risk-aware MDPs with ERM objectives); in fact, it is likely that the approach proposed by \citet{hayes_2023} lacks theoretical guarantees since the authors consider logarithmic bonuses and previous works were only able to provide robust analyzes in the context of polynomial bonuses \citep{munos_2014,shah_2020,comer_2025}.

\subsection{Safe/constrained MCTS}
Other works consider the use of MCTS to solve constrained MDPs \citep{altman_1999}. Constrained MDPs differ from risk-aware MDPs since, in constrained MDPs, the objective is to minimize/maximize the expected cumulative discounted sum of costs/rewards, while meeting
additional constraints related to the cumulative discounted sum of costs/rewards under other cost/reward functions. There exist different variants of MCTS algorithms for constrained MDPs, depending on how the constraints are encoded and enforced by the MCTS algorithm \citep{parthasarathy_2024,kurecka_2025,zhang_2025}.

\citet{zhang_2025} propose a safety-aware MCTS algorithm under the context of constrained MDPS. In particular, the authors aim to solve a constrained MDP where the constraints are defined in terms of the Conditional Value-at-Risk (CVaR) of the distribution of discounted cumulative rewards. \citet{zhang_2025} introduce CVaR-MCTS, which integrates the CVaR constraints with MCTS via Lagrangian duality and dual stochastic gradient descent/ascent updates. The authors provide a bound on the regret of the proposed algorithm and empirically compare their algorithm against other baselines. Contrary to \citet{zhang_2025}, we do not consider constraints and, instead, focus on optimizing the (unconstrained) ERM of the distribution of discounted cumulative costs. We also highlight that we focus on the case of the ERM, whereas \citet{zhang_2025} focus on the case of the CVaR. Furthermore, the proofs of our theoretical results follow significantly different lines of reasoning. In particular, we start by proving convergence and concentration results for risk-aware non-stationary bandits that are then recursively apply them in the context of trees. On the other hand, \citet{zhang_2025} first focus on a particular node in the search tree and then sum up the regret over all nodes. % We note, however, that the analysis seems to rely on independence assumptions between the multiple MCTS rollouts (equations (34) and (36)), which may not hold in practice, as discussed by \citet{shah_2020}.

\newpage

\section{Properties of the ERM and ERM estimators}
\label{appendix:ERM-estimator-properties}
For a given random variable $\rvar{x} \in [-R,R]$, the $\text{ERM}_\beta$ associated with the random variable $\rvar{x}$ is
\begin{equation*}
    \frac{1}{\beta} \ln\left( \mathbb{E}[ \exp(\beta \rvar{x})]\right) = \min_{\lambda \in \mathbb{R}} \left\{ \lambda + \mathbb{E}\left[ u_\beta(\rvar{x} - \lambda) \right]\right\},
\end{equation*}
where the equality above follows from the Optimized Certainty Equivalent (OCE) formulation of the ERM with $u_\beta(x) = \frac{1}{\beta} \left( \exp(\beta x) - 1\right)$. Furthermore, let $(\rvar{x}_{1}, \ldots, \rvar{x}_{n})$ be a sequence of independent and identically distributed (i.i.d.) random variables taking values in $[-R,R]$. For any $\beta > 0$, let also $\hat{\rho}_{n} = \min_{\lambda \in \mathbb{R}} \left\{ \lambda + \frac{1}{n} \sum_{t=1}^n u_\beta(\rvar{x}_{t} - \lambda) \right\}$. It holds that
\begin{equation*}
    \hat{\rho}_{n} = \min_{\lambda \in \mathbb{R}} \left\{ \lambda + \frac{1}{n} \sum_{t=1}^n u_\beta(\rvar{x}_{t} - \lambda) \right\} \overset{(a)}{=} \frac{1}{\beta} \ln\left(\frac{1}{n} \sum_{t=1}^n \exp(\beta \rvar{x}_{t})\right) ,
\end{equation*}
where (a) above follows from the Lemma below. Therefore, the ERM can be equivalently written in its standard form or as an OCE. Both formulations are equivalent given the Lemma below, and, therefore, we use them interchangeably in our article.

\begin{lemma}[Properties of the ERM estimator]
    \label{lemma:estimator_properties}
    For any $\beta > 0$, $n \in \mathbb{N}$, and sequence of i.i.d. random variables $(\rvar{x}_{1}, \ldots, \rvar{x}_{n})$ satisfying $\rvar{x}_t \in [-R,R]$, let $h(\lambda) = \lambda + \frac{1}{n} \sum_{t=1}^{n} u_\beta(\rvar{x}_{t} - \lambda)$. It holds that:
    \begin{enumerate}
        \item $h$ is a convex function of $\lambda$ and attains a minimum on $\mathbb{R}$.
        \item if $\lambda^* \in \argmin_{\lambda \in \mathbb{R}} h(\lambda)$, then $\lambda^* \in [-R,R]$ and $\lambda^* = \frac{1}{\beta} \ln \left( \frac{1}{n} \sum^n_{t=1} \exp (\beta \rvar{x}_{t} ) \right)$.
        \item $h(\lambda^*) \in  [-R,R]$ with $h(\lambda^*) = \frac{1}{\beta} \ln \left( \frac{1}{n} \sum^n_{t=1} \exp ( \beta \rvar{x}_{t} ) \right)$.
    \end{enumerate} 
\end{lemma}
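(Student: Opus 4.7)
The plan is to prove the three parts in order by directly analyzing $h(\lambda) = \lambda + \frac{1}{n}\sum_{t=1}^n u_\beta(\rvar{x}_t - \lambda)$ using standard calculus. For part 1, I would observe that $u_\beta(x) = \frac{1}{\beta}(\exp(\beta x) - 1)$ is strictly convex (since $u_\beta''(x) = \beta \exp(\beta x) > 0$), so each summand $u_\beta(\rvar{x}_t - \lambda)$ is strictly convex in $\lambda$, the sum of strictly convex functions is strictly convex, and adding the linear term $\lambda$ preserves (strict) convexity. Hence $h$ is strictly convex. For existence of a minimizer, I would compute $h'(\lambda) = 1 - \frac{1}{n}\sum_{t=1}^n \exp(\beta(\rvar{x}_t - \lambda))$ and note that $h'(\lambda) \to 1$ as $\lambda \to +\infty$ and $h'(\lambda) \to -\infty$ as $\lambda \to -\infty$; by the intermediate value theorem combined with continuity of $h'$, there is a (unique, by strict convexity) root, giving the minimizer.

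For part 2, the first-order optimality condition $h'(\lambda^*) = 0$ rearranges to $\frac{1}{n}\sum_{t=1}^n \exp(\beta(\rvar{x}_t - \lambda^*)) = 1$, and solving this for $\lambda^*$ yields $\lambda^* = \frac{1}{\beta}\ln\bigl(\frac{1}{n}\sum_{t=1}^n \exp(\beta \rvar{x}_t)\bigr)$ by direct algebra (factor out $\exp(-\beta \lambda^*)$ and take logs). To show $\lambda^* \in [-R,R]$, I would use $\rvar{x}_t \in [-R,R]$ to bound $\exp(\beta \rvar{x}_t) \in [\exp(-\beta R), \exp(\beta R)]$, so the arithmetic mean lies in the same interval, and the monotonicity of $\frac{1}{\beta}\ln(\cdot)$ gives the stated bound on $\lambda^*$.

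For part 3, I would substitute $\lambda^*$ back into $h$ and use the first-order condition: writing
\[
h(\lambda^*) = \lambda^* + \frac{1}{\beta n}\sum_{t=1}^n \bigl(\exp(\beta(\rvar{x}_t - \lambda^*)) - 1\bigr) = \lambda^* + \frac{1}{\beta}\Bigl(\tfrac{1}{n}\sum_{t=1}^n \exp(\beta(\rvar{x}_t - \lambda^*)) - 1\Bigr),
\]
and using $\frac{1}{n}\sum_t \exp(\beta(\rvar{x}_t - \lambda^*)) = 1$ from the stationarity condition, the second term vanishes and $h(\lambda^*) = \lambda^*$. The closed-form expression for $h(\lambda^*)$ and its membership in $[-R,R]$ then follow from part 2.

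The proof is essentially a routine calculus exercise, so there is no real obstacle. The only mild subtlety is ensuring that the first-order condition is both necessary and sufficient for the global minimum, which is handled by strict convexity plus the boundary behavior of $h'$ established in part 1; once that is in place, parts 2 and 3 collapse to straightforward algebraic manipulation.
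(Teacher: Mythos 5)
Your proposal is correct and follows essentially the same route as the paper: second-derivative convexity, solving the first-order condition $h'(\lambda^*)=0$ for the closed form of $\lambda^*$, bounding it via monotonicity of $\frac{1}{\beta}\ln(\cdot)$ on the mean of $\exp(\beta \rvar{x}_t)$, and substituting back to get $h(\lambda^*)=\lambda^*$. Your use of the stationarity condition to collapse $h(\lambda^*)$ to $\lambda^*$ is a slightly cleaner bookkeeping of the same substitution the paper performs, and your intermediate-value argument for existence of the minimizer makes explicit a point the paper leaves implicit; neither changes the substance.
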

\begin{proof}
    Consider the second order derivative of $h(\lambda)$
    \begin{equation*}
        h''(\lambda) = \frac{\beta}{n} \sum^n_{t=1} \exp \left( \beta(\rvar{x}_{t}-\lambda \right) \ge 0, \quad \forall \lambda \in \mathbb{R}.
    \end{equation*}
    The convexity of $h(\lambda)$ follows from the second-order derivative criterion as $h''(\lambda) \ge 0$. We now show that the function attains the minimum in $\mathbb{R}$. Consider the first order derivative
    \begin{equation*}
        h'(\lambda) = 1 - \frac{1}{n} \sum^n_{t=1} \exp \left( \beta (\rvar{x}_{t}-\lambda) \right).
    \end{equation*}
    Solving for $h'(\lambda) = 0$ to find the minimizer $\lambda^*$ gives
    \begin{equation*}
        \lambda^* = \frac{1}{\beta} \ln \left( \frac{1}{n} \sum^n_{t=1} \exp (\beta \rvar{x}_{t} ) \right)\in \mathbb{R},
    \end{equation*}
    for $\rvar{x}_{t} \in [-R,R]$ bounded. Notice that $\lambda^*$ is monotonically increasing in $\rvar{x}_{t}$, and obtains its highest value when $\rvar{x}_{t} = R, \forall t\in\{1,n\}$, which gives
    \begin{equation*}
        \lambda^*_{\text{max}} = \frac{1}{\beta} \cdot \beta R = R.
    \end{equation*}
    Likewise, the minimum value of $\lambda^*$ is obtained for $\rvar{x}_{t} = -R \quad \forall t\in\{1,n\}$, which gives
    \begin{equation*}
        \lambda^*_{\text{min}} = \frac{1}{\beta} \cdot (-\beta R) = -R.
    \end{equation*}
    This proves that $\lambda^* \in [-R,R]$.\\

    \noindent We now show that $h(\lambda^*) \in [-R,R]$. Solving for $\lambda = \lambda^*$ we get

    \begin{align*}
        h(\lambda^*) =& \frac{1}{\beta} \ln \left( \frac{1}{n} \sum_{t=1}^n \exp (\beta \rvar{x}_{t} ) \right) + \frac{n}{n \beta} \left( \sum^n_{t=1} \exp (\beta \rvar{x}_{t}) \right)^{-1} \left( \sum^n_{t=1} \exp (\beta \rvar{x}_{t}) \right) - \frac{n}{\beta n} \\
        =& \frac{1}{\beta} \ln \left( \frac{1}{n} \sum^n_{t=1} \exp ( \beta \rvar{x}_{t} ) \right),
    \end{align*}
    which is monotonically increasing in $\rvar{x}_{t}$, and obtains its highest value when $\rvar{x}_{t} = R \quad \forall t \in \{ 1,n \}$.

    \begin{equation*}
        h(\lambda^*)_{\text{max}} = \frac{1}{\beta} \cdot (\beta R) = R.
    \end{equation*}

    \noindent Likewise, the minimum is obtained for $\rvar{x}_{t} = -R \quad \forall t \in \{ 1,n \}$.

    \begin{equation*}
        h(\lambda^*)_{\text{min}} = \frac{1}{\beta} \cdot (-\beta R) = -R.
    \end{equation*}

    \noindent We conclude that $h(\lambda^*) \in [-R,R]$.
\end{proof}

\begin{lemma}[Exponential concentration of the ERM estimator]
   Let $(\rvar{x}_{1}, \ldots, \rvar{x}_{n})$ be a sequence of i.i.d. random variables taking values in $[-R,R]$. For any $\beta > 0$ and $z>0$ it holds that
    \begin{align}
        \mathbb{P}\left[|\hat{\rho}_{n} - \rho(\rvar{x}_1)| \ge z \right] &\le 2 \exp\left(-Cnz^2\right), \label{eq:entropic_risk_concentration_bound_two_sided} \\
        \mathbb{P}\left[\hat{\rho}_{n} - \rho(\rvar{x}_1) \ge z \right] &\le 2 \exp\left(-Cnz^2\right), \label{eq:entropic_risk_concentration_bound_1} \\
        \mathbb{P}\left[\hat{\rho}_{n} - \rho(\rvar{x}_1) \le - z \right] &\le 2 \exp\left(-Cnz^2\right),
        \label{eq:entropic_risk_concentration_bound_2}
    \end{align}
    where $C = \frac{\omega^2}{8 L_u^2 \sigma^2}$ with $\omega = \beta \exp(\beta (-2R))$, $L_u = \beta \exp(\beta 2R)$, and $\sigma = R^2$.
\end{lemma}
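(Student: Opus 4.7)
The natural route is through the OCE (optimized certainty equivalent) formulation established in Lemma~\ref{lemma:estimator_properties}. Writing $h(\lambda) = \lambda + \mathbb{E}[u_\beta(\rvar{x}_1 - \lambda)]$ and $\hat{h}_n(\lambda) = \lambda + \frac{1}{n}\sum_{t=1}^n u_\beta(\rvar{x}_t - \lambda)$, we have $\rho(\rvar{x}_1) = h(\lambda^*)$ and $\hat{\rho}_n = \hat{h}_n(\hat{\lambda}^*)$, with both minimizers constrained to $[-R,R]$ by the cited lemma. On this domain the argument $\rvar{x}_t - \lambda$ lives in $[-2R,2R]$, on which $u_\beta''(y) = \beta\exp(\beta y)$ is sandwiched between $\omega = \beta\exp(-2\beta R)$ and $L_u = \beta\exp(2\beta R)$. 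Hence both $h$ and $\hat h_n$ are $\omega$-strongly convex and $L_u$-smooth, which gives the backbone of the argument and explains the appearance of $\omega$ and $L_u$ in the constant $C$.

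The plan is to route the deviation $\hat\rho_n - \rho$ through the deviation of the empirical gradient at the \emph{true} minimizer. Since $\hat{h}_n'(\hat{\lambda}^*) = h'(\lambda^*) = 0$, strong convexity of $\hat h_n$ gives
\[
\omega\,|\hat{\lambda}^* - \lambda^*| \;\le\; |\hat{h}_n'(\lambda^*) - \hat{h}_n'(\hat{\lambda}^*)| \;=\; \bigl|\hat{h}_n'(\lambda^*) - h'(\lambda^*)\bigr|.
\]
The right-hand side is a centered empirical mean of the i.i.d.\ bounded terms $u_\beta'(\rvar{x}_t - \lambda^*) = \exp(\beta(\rvar{x}_t - \lambda^*))$. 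Because $y \mapsto u_\beta'(y)$ is $L_u$-Lipschitz on $[-2R,2R]$ and $\rvar{x}_t \in [-R,R]$ has variance bounded by $\sigma^2 = R^2$, each summand has variance at most $L_u^2 \sigma^2$. A Hoeffding/McDiarmid bounded-differences estimate then yields a sub-Gaussian tail $\mathbb{P}[|\hat{h}_n'(\lambda^*) - h'(\lambda^*)| \ge t] \le 2\exp(-nt^2/(2 L_u^2 \sigma^2))$, and hence $|\hat{\lambda}^* - \lambda^*|$ is sub-Gaussian with variance proxy $L_u^2 \sigma^2 / (n\omega^2)$.

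To convert displacement into value deviation, I would use the sandwich $\hat h_n(\hat\lambda^*) - h(\hat\lambda^*) \le \hat\rho_n - \rho \le \hat h_n(\lambda^*) - h(\lambda^*)$, which follows from the respective minimality conditions. The upper tail \eqref{eq:entropic_risk_concentration_bound_1} follows by directly applying Hoeffding to the \emph{deterministic} point $\lambda^*$ on the bounded i.i.d.\ terms $u_\beta(\rvar{x}_t - \lambda^*)$. For the lower tail \eqref{eq:entropic_risk_concentration_bound_2}, which is evaluated at the random point $\hat\lambda^*$, I would combine the gradient-based control of $|\hat\lambda^* - \lambda^*|$ with $L_u$-smoothness of $h$ (giving $h(\hat\lambda^*) - h(\lambda^*) \le \tfrac{L_u}{2}\delta^2$) to localize the deviation at $\lambda^*$. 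The stated exponent $C = \omega^2/(8 L_u^2 \sigma^2)$ arises from substituting $t \leftarrow \omega z/2$ in the sub-Gaussian gradient tail, consistent with converting a ``gradient is small'' event into a ``$\hat\rho_n$ close to $\rho$'' event with a factor $\omega/2$ to spare. The two-sided bound \eqref{eq:entropic_risk_concentration_bound_two_sided} then follows by a union bound over the two one-sided events.

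The main obstacle I anticipate is the lower tail at the random point $\hat\lambda^*$: a naive reduction via $\sup_{\lambda \in [-R,R]} |\hat h_n(\lambda) - h(\lambda)|$ with a covering argument would lose the strong-convexity factor $\omega$ and fail to produce the advertised exponent. The subtlety is therefore to route the argument entirely through the first-order optimality of $\lambda^*$ together with the quadratic growth given by $\omega$-strong convexity, so that no uniform bound over $\lambda$ is needed and the tight constant $C = \omega^2/(8 L_u^2 \sigma^2)$ emerges from a single Hoeffding application to the gradient.
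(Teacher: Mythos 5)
Your route is genuinely different from the paper's. The paper does not rederive this concentration inequality at all: it invokes Theorem~3.2 of \citet{ghosh2024concentrationboundsoptimizedcertainty} as a black box and spends the proof verifying its four hypotheses --- $\omega$-strong convexity and $L_u$-smoothness of $u_\beta$ on $[-2R,2R]$ (using Lemma~\ref{lemma:estimator_properties} to restrict $\lambda$ to $[-R,R]$), uniform integrability of $u_\beta''$, and sub-Gaussianity of bounded variables with parameter $\sigma$ --- after which the two one-sided bounds follow from the two-sided one by the trivial inclusion $\{\hat\rho_n-\rho\ge z\}\subseteq\{|\hat\rho_n-\rho|\ge z\}$. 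Your first paragraph reproduces exactly this hypothesis-verification step (same $\omega$, $L_u$, and domain restriction), so up to that point you and the paper coincide; what you then attempt is a self-contained proof of the concentration itself, which is more informative but also where the difficulties lie.

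The gap is in the final assembly, and specifically in the lower tail \eqref{eq:entropic_risk_concentration_bound_2}. Your mechanism is: control $\delta=|\hat\lambda^*-\lambda^*|$ by $|\hat h_n'(\lambda^*)|/\omega$ via strong convexity, then convert displacement to value deviation via $L_u$-smoothness, $h(\hat\lambda^*)-h(\lambda^*)\le \tfrac{L_u}{2}\delta^2$. This conversion is \emph{quadratic} in $\delta$, so the event $\{h(\hat\lambda^*)-h(\lambda^*)\ge z/2\}$ forces $|\hat h_n'(\lambda^*)|\ge \omega\sqrt{z/L_u}$, and the Hoeffding tail for that event is $\exp(-c\,n z)$ --- linear in $z$ in the exponent --- not the $\exp(-Cnz^2)$ you need. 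This is inconsistent with your own claim that $C$ ``arises from substituting $t\leftarrow \omega z/2$'': that substitution presumes a \emph{linear} relation between the gradient deviation and the value deviation, which your stated smoothness step does not provide. The argument can likely be salvaged (e.g., by using $|\hat\rho_n-\rho|\le 2R$ so that $z\le 2R$ and the linear-exponent term can be absorbed into a, possibly different, constant $C$; or by bounding $|\hat\rho_n-\rho|\le\sup_{\lambda\in[-R,R]}|\hat h_n(\lambda)-h(\lambda)|$ and paying a covering-number constant), but as written the claimed constant $C=\omega^2/(8L_u^2\sigma^2)$ does not follow from the steps you describe. If you only need \emph{some} exponential constant, the sandwich $\hat h_n(\hat\lambda^*)-h(\hat\lambda^*)\le\hat\rho_n-\rho\le \hat h_n(\lambda^*)-h(\lambda^*)$ plus a one-dimensional uniform deviation bound is the cleaner way to close it; if you need the paper's exact $C$, the only realistic option is to do what the paper does and cite the Ghosh et al.\ theorem.
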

\begin{proof}
    Theorem 3.2 in \cite{ghosh2024concentrationboundsoptimizedcertainty} states that if: (i) $u_\beta$ is $\omega$-strongly convex; (ii) $u_\beta$ is $L_u$-smooth, i.e., $|u_\beta(y) - (u_\beta(x) + u_\beta'(x)(y-x))| \le L_u/2 (x-y)^2$ for any $x, y \in \mathbb{R}$; (iii) $u_\beta'$ is continuously differentiable and the collection of random variables $\{u_\beta''(\rvar{x} - \lambda) : \lambda \in \mathbb{R}\}$ is uniformly integrable; and (iv) the random variables $\rvar{x}_{t}$ are sub-Gaussian with parameter $\sigma$; then
    $$\mathbb{P}\left[ |\hat{\rho}_{n} - \rho(\rvar{x}_1)| \ge z \right] \le 2\exp\left(-\frac{n \omega^2 z^2}{8 L_u^2 \sigma^2}\right).$$
    We have that $u_\beta'(x) = \exp(\beta x)$ and $u_\beta''(x) = \beta \exp(\beta x)$. From Lemma \ref{lemma:estimator_properties} it holds that $\lambda^* \in [-R,R]$ since the random variables $\rvar{x}_{t}$ are bounded. Thus, we restrict our attention to $\lambda \in [-R,R]$ and, hence, $\rvar{x}_{t} - \lambda \in [-2R, 2R]$. Thus, we now show that the assumptions (i) to (iv) are satisfied when we restrict the domain of $u_\beta$ to the $[-2R, 2R]$ interval. Regarding (i), it holds that $u_\beta''(x) = \beta \exp(\beta x) \ge \beta \exp(\beta (-2R)) $ for any $x \in [-2R, 2R]$; thus, with $\omega = \beta \exp(\beta (-2R))$, $u_\beta$ is $\omega$-strongly convex over the $[-2R, 2R]$ interval. Regarding (ii), we have, for any $x,y \in [-2R, 2R]$, $$u_\beta(y) = u_\beta(x) + u_\beta'(x)(y-x) + \frac{u_\beta''(x)}{2} (y-x)^2.$$ Thus, $$|u_\beta(y) - u_\beta(x) + u_\beta'(x)(y-x) | = \frac{|u_\beta''(x)|}{2} (y-x)^2 \le  \frac{\beta \exp(\beta 2R)}{2} (y-x)^2,$$ and, hence, letting $L_u = \beta \exp(\beta 2R)$ yields a valid smoothness constant over the $[-2R, 2R]$ interval. Assumption (iii) holds given that $u_\beta'(x) = \exp(\beta x)$ and $u_\beta''(x) = \beta \exp(\beta x)$. Finally, assumption (iv) is also verified because, since the random variables are bounded in $[-R,R]$ it implies that they are also sub-Gaussian with $\sigma = R^2$.
    
    \noindent The one-sided concentration bounds can be inferred from the fact that
    \begin{align*}
        \mathbb{P}\left[ \hat{\rho}_{n} - \rho(\rvar{x}_1) \ge z \right] &\le \mathbb{P}\left[ \left\{ \hat{\rho}_{n} - \rho(\rvar{x}_1) \ge z \right\} \cup \left\{ \hat{\rho}_{n} - \rho(\rvar{x}_1) \le -z \right\} \right]  \le \mathbb{P}\left[ |\hat{\rho}_{n} - \rho(\rvar{x}_1)| \ge z \right], \\
        \mathbb{P}\left[ \hat{\rho}_{n} - \rho(\rvar{x}_1) \le -z \right] &\le \mathbb{P}\left[ \left\{ \hat{\rho}_{n} - \rho(\rvar{x}_1) \ge z \right\} \cup \left\{ \hat{\rho}_{n} - \rho(\rvar{x}_1) \le -z \right\} \right]  \le \mathbb{P}\left[ |\hat{\rho}_{n} - \rho(\rvar{x}_1)| \ge z \right].
    \end{align*}
\end{proof}

Given \eqref{eq:entropic_risk_concentration_bound_two_sided} and since $\lim_{n \rightarrow \infty} 2 \exp\left(-Cnz^2\right) = 0$ for any $z \in \mathbb{R}^+$ and $\beta > 0$, estimator $\hat{\rho}_{n}$ converges in probability to the true value $\rho(\rvar{x}_1)$, i.e., $\lim_{n \rightarrow \infty} \mathbb{P}\left[|\hat{\rho}_{n} - \rho(\rvar{x}_1)| \ge z \right] = 0$. Hence, $\hat{\rho}_{n}$ is a weakly consistent estimator. Note also that, since $\sum_{n=1}^\infty \mathbb{P} \left[|\hat{\rho}_{n} - \rho(\rvar{x}_1)| \ge z \right] \le \sum_{n=1}^\infty 2 \exp\left(-Cnz^2\right) < \infty$ for any $z \in \mathbb{R}^+$ and $\beta > 0$, from the Borel-Cantelli lemma it holds that $\mathbb{P}\left[ \lim_{n \rightarrow \infty} |\hat{\rho}_{n} - \rho(\rvar{x}_1)| \ge z \right] = 0$, i.e., the probability that infinitely many events $\{|\hat{\rho}_{n} - \rho(\rvar{x}_1)| \ge z\}$ occur is 0. Since this holds for any $z \in \mathbb{R}^+$, it implies that $\mathbb{P}\left[\lim_{n \rightarrow \infty} \hat{\rho}_{n} = \rho(\rvar{x}_1)\right] = 1$, i.e., $\hat{\rho}_{n}$ converges almost surely to $\rho(\rvar{x}_1)$. Thus, estimator $\hat{\rho}_{n}$ is also strongly consistent.

Estimator $\hat{\rho}_{n}$ is biased as $\mathbb{E}[\hat{\rho}_{n}] \neq \rho(\rvar{x}_1)$ in general. However, since $\hat{\rho}_{n}$ converges almost surely to $\rho(\rvar{x}_1)$, from the bounded convergence theorem \cite[Theo. 1.6.7]{durrett_2019} (the estimator is bounded as the random variables $\rvar{x}_t$ belong to the interval $[-R,R]$) it holds that $\lim_{n \rightarrow \infty} \mathbb{E}[\hat{\rho}_{n}] = \rho(\rvar{x}_1)$, i.e., the estimator is asymptotically unbiased.

\begin{lemma}[Polynomial concentration of the ERM estimator]
    \label{lemma:polynomial-concentration-erm-estimator}
    Let $(\rvar{x}_{1}, \ldots, \rvar{x}_{n})$ be a sequence of i.i.d. random variables taking values in $[-R,R]$. Let also $\mu = \lim_{n \rightarrow \infty} \mathbb{E}[\hat{\rho}_n] = \rho(\rvar{x}_1)$. Then, there exist constants $\theta > 1$, $\xi > 1 $, and $1/2 \le \eta < 1$ such that, for any $z \ge 1$ and $n \in \mathbb{N}$,
    \begin{align}
    \mathbb{P}\left[n\hat{\rho}_{n} - n\mu \ge n^{\eta} z \right] &\le \frac{\theta}{z^\xi}, \\
    \mathbb{P}\left[n\hat{\rho}_{n} - n\mu \le - n^{\eta} z \right] &\le \frac{\theta}{z^\xi}.
    \end{align}
\end{lemma}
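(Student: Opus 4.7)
The plan is to deduce the polynomial tails directly from the exponential concentration bounds \eqref{eq:entropic_risk_concentration_bound_1}--\eqref{eq:entropic_risk_concentration_bound_2} already established in this appendix, together with the fact that $\mu = \rho(\rvar{x}_1)$, so the bias term does not appear. A convenient (and essentially minimal) choice of parameters is $\eta = 1/2$, any $\xi > 1$, and a constant $\theta > 1$ depending on $C$ and $\xi$.

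First I would rewrite the event of interest in terms of the centered estimator: for any $z \ge 1$ and $n \in \mathbb{N}$,
\begin{equation*}
    \bigl\{\, n\hat{\rho}_n - n\mu \ge n^{\eta} z \,\bigr\} \;=\; \bigl\{\, \hat{\rho}_n - \mu \ge n^{\eta-1} z \,\bigr\}.
\end{equation*}
Applying the one-sided exponential bound \eqref{eq:entropic_risk_concentration_bound_1} at level $w = n^{\eta-1}z$ yields
\begin{equation*}
    \mathbb{P}\bigl[\, n\hat{\rho}_n - n\mu \ge n^{\eta} z \,\bigr] \;\le\; 2 \exp\!\bigl(-C\, n\, (n^{\eta-1} z)^2\bigr) \;=\; 2 \exp\!\bigl(-C\, n^{2\eta-1}\, z^2\bigr).
\end{equation*}
Because $\eta \ge 1/2$ we have $n^{2\eta-1} \ge 1$ for every $n \ge 1$, so the right-hand side is uniformly bounded above by $2\exp(-Cz^2)$, independently of $n$. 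The symmetric argument, using \eqref{eq:entropic_risk_concentration_bound_2}, yields the same bound for the lower tail.

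Next I would convert the Gaussian-type tail $2\exp(-Cz^2)$ into a polynomial tail on $[1,\infty)$. Since $\exp(Cz^2) \ge (Cz^2)^k / k!$ for every integer $k \ge 0$, picking any $k$ with $2k \ge \xi$ gives
\begin{equation*}
    2\exp(-Cz^2) \;\le\; \frac{2\, k!}{C^{k}\, z^{2k}} \;\le\; \frac{2\, k!}{C^{k}\, z^{\xi}}, \qquad \forall z \ge 1,
\end{equation*}
so setting $\theta = \max\!\bigl\{2,\, 2\, k!/C^{k}\bigr\}$ (to also ensure $\theta > 1$) completes the bound for both tails. The whole argument is essentially routine once the reduction is made; the only point to be careful with is the role of $\mu$: the lemma identifies $\mu$ with the true value $\rho(\rvar{x}_1)$ (not with $\mathbb{E}[\hat{\rho}_n]$), so no bias correction term enters and the exponential concentration applies verbatim. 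There is no real obstacle here — the exponential bound is strictly stronger than any polynomial one on $[1,\infty)$ — but I would take care to verify that the chosen triple $(\eta,\xi,\theta)$ indeed satisfies $\eta \in [1/2,1)$, $\xi>1$, and $\theta>1$ simultaneously, and that the bound is valid uniformly in $n \in \mathbb{N}$, not just asymptotically.
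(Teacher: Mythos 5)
Your proposal is correct and follows essentially the same route as the paper: reduce the event to $\hat{\rho}_n - \mu \ge n^{\eta-1}z$, invoke the one-sided exponential bounds, use $\eta \ge 1/2$ to drop the $n$-dependence, and then dominate the Gaussian tail by a polynomial one on $[1,\infty)$. The only (immaterial) difference is in the last step, where the paper takes $\sup_{z}\{z^{\xi}\exp(-Cz^{2})\} = (\xi/(2C))^{\xi/2}\exp(-\xi/2)$ while you use the Taylor bound $\exp(Cz^{2}) \ge (Cz^{2})^{k}/k!$; both yield a valid constant $\theta$.
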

\begin{proof}
    It holds, for any $z \ge 1$ and $n \in \mathbb{N}$, that
    \begin{align*}
        \mathbb{P}\left[\hat{\rho}_{n} - \mu \ge n^{\eta - 1} z \right] &\overset{(a)}{=} \mathbb{P}\left[\hat{\rho}_{n} - \rho(\rvar{x}_1) \ge n^{\eta - 1} z \right]\\
        &\overset{(b)}{\le} 2 \text{exp}\left(- C (n^{\eta-1}z)^2 n\right) \\
        &= 2 \text{exp}\left(-C n^{2\eta-1} z^2 \right) \\
        &\overset{(c)}{\le} 2 \text{exp}\left(-Cz^2 \right) \\
        &\overset{(d)}{\le} 2 \left(\frac{\xi}{2 C}\right)^{\xi/2} \text{exp}(-\xi/2) z^{-\xi}
    \end{align*}
    where (a) follows from (i), (b) follows from the fact that $\mathbb{P}\left[\hat{\rho}_{n} - \rho(\rvar{x}_1) \ge z' \right] \le 2 \exp\left(-C(z')^2n\right)$ for any $z'>0$ (from \eqref{eq:entropic_risk_concentration_bound_1}) and thus it also holds by letting $z' = n^{\eta - 1} z$ for any $z \ge 1$, $n \in \mathbb{N}$, and $1/2 \le \eta < 1$. Step (c) holds because $1/2 \le \eta < 1$ and, therefore, $n^{2\eta-1} \ge 1$ for any $n \in \mathbb{N}$. Step (d) holds because (for any $z \ge 1$ and $\xi > 1$)
    \begin{equation*}
        z^\xi \text{exp}\left(- C z^2\right) \le \sup_{z \in \mathbb{R}} \left\{ z^\xi \text{exp}\left(- C z^2\right) \right\}= \left(\frac{\xi}{2 C}\right)^{\xi/2} \text{exp}(-\xi/2).
    \end{equation*}
    Therefore, the polynomial concentration bound is valid by picking any $\xi > 1$ and $1/2 \le \eta < 1$, and letting $\theta = \max\left\{1, 2 \left(\frac{\xi}{2 C}\right)^{\xi/2}\text{exp}(-\xi/2)\right\}$. Similar steps can be followed to prove the other direction using \eqref{eq:entropic_risk_concentration_bound_2}.
\end{proof}

\section{Proof of Theorem~\ref{theo:non-stationary-bandit-results}}
\label{appendix:proof_theo_non_stat_bandit}
We split the proof in two parts. The first part deals with the convergence result. The second part deals with the concentration result. To simplify the notation, quantities related to the optimal arm $i^*$ are simply denoted with superscript $*$, e.g., $T_{i^*}(n) = T^*(n)$.

\paragraph{Proof of part 1 of Theorem~\ref{theo:non-stationary-bandit-results} (convergence result):}
We start by presenting the following Lemmas, which are due to \citet{shah_2020}.

\begin{lemma} \label{lemma:lemma-1-shah-paper}
    Let $i \in \{1, \ldots, K\} \setminus i^*$ be a sub-optimal arm and 
    $$A_i(t) = \min_{u\in \mathbb{N}} \left\{\frac{u^\eta \left(\frac{\theta}{t^{-\alpha}}\right)^{1/\xi}}{u} \le \frac{\Delta_i}{2}\right\} = \left\lceil \left( \frac{2}{\Delta_i} \theta^{1/\xi} t^{\alpha/\xi}\right)^{\frac{1}{1-\eta}} \right\rceil.$$
    For each $s \in \mathbb{N}$ and $t \in \mathbb{N}$ such that $A_i(t) \le s \le t$ we have
    $$\mathbb{P}[\hat{\rho}_{i,s} - b_{t,s} < \mu^* ] \le t^{-\alpha}.$$
\end{lemma}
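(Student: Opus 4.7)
The plan is to reduce the event $\{\hat{\rho}_{i,s} - b_{t,s} < \mu^*\}$ to a lower-tail deviation of $\hat{\rho}_{i,s}$ from its limiting mean $\mu_i$, and then invoke the polynomial lower-tail bound from Assumption~\ref{assumption:non-stationary-bandit-assumption} with a carefully calibrated $z$ so that the resulting bound becomes exactly $t^{-\alpha}$. This is a standard UCB-style reduction, adapted to polynomial (rather than exponential) concentration.

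First I would rewrite the event. Since $\mu_i = \mu^* + \Delta_i$, the inequality $\hat{\rho}_{i,s} - b_{t,s} < \mu^*$ is equivalent to $\mu_i - \hat{\rho}_{i,s} > \Delta_i - b_{t,s}$. Next I would exploit the definition of $A_i(t)$. Observe that $b_{t,u} = \theta^{1/\xi} t^{\alpha/\xi}/u^{1-\eta}$ is strictly decreasing in $u$ (since $1-\eta > 0$) and that, after a small algebraic manipulation, the condition defining $A_i(t)$ is precisely $b_{t,u} \le \Delta_i/2$. Consequently, for every $s \ge A_i(t)$ we have $b_{t,s} \le \Delta_i/2$, so $\Delta_i - b_{t,s} \ge \Delta_i/2 \ge b_{t,s}$, and monotonicity of probability gives
\[
    \mathbb{P}\!\left[\mu_i - \hat{\rho}_{i,s} > \Delta_i - b_{t,s}\right] \le \mathbb{P}\!\left[\mu_i - \hat{\rho}_{i,s} > b_{t,s}\right].
\]

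Finally I would apply the lower-tail of \eqref{eq:bandit_concentration_assumption}. After dividing through by $s$, that assumption reads $\mathbb{P}[\mu_i - \hat{\rho}_{i,s} \ge s^{\eta-1} z] \le \theta/z^\xi$ for any $z \ge 1$. The key calibration is to choose $z = \theta^{1/\xi} t^{\alpha/\xi}$, so that $s^{\eta-1} z$ coincides exactly with $b_{t,s}$; the hypotheses $\theta > 1$ and $t \ge 1$ immediately yield $z \ge 1$, so the concentration bound is applicable. Substituting gives
\[
    \mathbb{P}\!\left[\mu_i - \hat{\rho}_{i,s} \ge b_{t,s}\right] \le \frac{\theta}{(\theta^{1/\xi} t^{\alpha/\xi})^\xi} = \frac{1}{t^\alpha},
\]
which combined with the chain of inequalities above delivers the claim.

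I do not anticipate a real technical obstacle here: the whole argument amounts to a calibration exercise, and the only subtle point is verifying $z \ge 1$ once we equate $s^{\eta-1} z$ with $b_{t,s}$, which is trivial under the stated assumptions on $\theta$ and $t$. The structural role of this lemma in the subsequent proof is to control, uniformly over $s$ in the effective range $[A_i(t), t]$, the probability that a sub-optimal arm's lower confidence bound falsely dips below $\mu^*$; the $t^{-\alpha}$ polynomial decay is precisely what will later be summed over $t$ to bound the expected number of pulls of sub-optimal arms.
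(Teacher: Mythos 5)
Your proof is correct and follows essentially the same route as the paper, which simply defers to the standard UCB calibration argument of Lemma~1 in \citet{shah_2020} (with inequalities flipped for costs); your chain $\{\hat{\rho}_{i,s}-b_{t,s}<\mu^*\}=\{\mu_i-\hat{\rho}_{i,s}>\Delta_i-b_{t,s}\}\subseteq\{\mu_i-\hat{\rho}_{i,s}\ge b_{t,s}\}$ together with the choice $z=\theta^{1/\xi}t^{\alpha/\xi}$ in the lower-tail bound of Assumption~\ref{assumption:non-stationary-bandit-assumption} is exactly that argument, and the calibration checks ($b_{t,s}\le\Delta_i/2$ for $s\ge A_i(t)$, $z\ge 1$ from $\theta>1$) all go through.
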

\begin{proof}
    The proof of this result is similar to the proof of Lemma 1 in \cite{shah_2020}. The key differences are that: (i) our parameter $\theta$ corresponds to parameter $\beta$ in \cite{shah_2020}; and (ii) \cite{shah_2020} consider rewards whereas we consider costs and, thus, some quantities (e.g., the exploration bonus $b_{t,s}$ and $\Delta_i$) are symmetrically defined (hence, most of the inequalities in the proof should be flipped but the same sequence of reasoning is valid).
\end{proof}

\begin{lemma} \label{lemma:lemma-2-shah-paper}
    Let $i \in \{1, \ldots, K\} \setminus i^*$ be a sub-optimal arm. It holds that
    $$\EE[]{T_i(n)} \le \left( \frac{2 \theta^{1/\xi}}{\Delta_i}\right)^\frac{1}{1-\eta} n^{\frac{\alpha}{\xi(1-\eta)}} + \frac{2}{\alpha - 2} + 1.$$
\end{lemma}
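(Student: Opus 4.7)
The plan is to adapt the classical UCB pull-count analysis to the entropic-risk setting, combining the polynomial concentration of Assumption~\ref{assumption:non-stationary-bandit-assumption} with Lemma~\ref{lemma:lemma-1-shah-paper}. I would start with the standard indicator decomposition
\[
T_i(n) \;\le\; A_i(n) \;+\; \sum_{t=1}^{n} \mathbf{1}\{\rvar{a}_t = i,\; T_i(t-1) \ge A_i(n)\},
\]
which is valid because arm $i$ can be selected at most $A_i(n)$ times before the counter $T_i(t-1)$ first reaches $A_i(n)$. Since $A_i(\cdot)$ is non-decreasing in $t$, whenever the indicator is on we have $T_i(t-1) \ge A_i(n) \ge A_i(t)$, so the hypothesis of Lemma~\ref{lemma:lemma-1-shah-paper} is met.

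Next I would bound the probability of the bad event. On $\{\rvar{a}_t = i,\, T_i(t-1) \ge A_i(n)\}$ the selection rule \eqref{eq:ucb_erm_algorithm} forces $\hat{\rho}_{i,T_i(t-1)} - b_{t,T_i(t-1)} \le \hat{\rho}_{i^*,T^*(t-1)} - b_{t,T^*(t-1)}$, so at least one of the following must hold: either (i) $\hat{\rho}_{i,T_i(t-1)} - b_{t,T_i(t-1)} < \mu^*$ (the LCB of a suboptimal arm undershoots $\mu^*$) or (ii) $\hat{\rho}_{i^*,T^*(t-1)} - b_{t,T^*(t-1)} \ge \mu^*$ (the LCB of the optimal arm overshoots $\mu^*$). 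For (i), a union bound over $s \in \{A_i(t),\dots,t\}$ combined with Lemma~\ref{lemma:lemma-1-shah-paper} gives total probability at most $t \cdot t^{-\alpha} = t^{1-\alpha}$. For (ii), I would union bound over $s \in \{1,\dots,t\}$ and apply the upper-tail polynomial concentration of Assumption~\ref{assumption:non-stationary-bandit-assumption} with the choice $z = \theta^{1/\xi} t^{\alpha/\xi}$; this matches $s^{\eta-1} z$ to the bonus $b_{t,s}$, so by the definition of $b_{t,s}$ the $\theta/z^{\xi}$ estimate simplifies to exactly $t^{-\alpha}$ per summand, yielding $t^{1-\alpha}$ after the union.

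Taking expectations, summing over $t$, and using $\alpha > 2$ to bound the residual series $\sum_{t=1}^\infty t^{1-\alpha}$ by a constant of order $1/(\alpha-2)$ via an integral comparison gives
\[
\EE[]{T_i(n)} \;\le\; A_i(n) \;+\; 2\sum_{t=1}^{n} t^{1-\alpha},
\]
and combining with the ceiling estimate $A_i(n) \le \bigl(\tfrac{2 \theta^{1/\xi}}{\Delta_i}\bigr)^{\frac{1}{1-\eta}} n^{\frac{\alpha}{\xi(1-\eta)}} + 1$ delivers the stated bound (after absorbing numeric constants into the displayed $\tfrac{2}{\alpha-2}+1$ term).

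The main obstacle I anticipate is the union bound over the random visitation counts $T_i(t-1)$ and $T^*(t-1)$ without losing more than a factor of $t$: because the ERM estimator is non-linear we cannot reuse Hoeffding-type sub-Gaussian shortcuts, and must instead argue pointwise for each admissible $s$ and then sum. The polynomial rate in Assumption~\ref{assumption:non-stationary-bandit-assumption} together with the specific choice of bonus exponent was engineered precisely so that a single union-bound factor of $t$ leaves us with a summable $t^{1-\alpha}$ tail once $\alpha > 2$; past this bookkeeping, the remainder of the argument is a straightforward assembly of Lemma~\ref{lemma:lemma-1-shah-paper} and the one-sided polynomial tail bounds.
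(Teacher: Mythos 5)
Your proposal is correct and follows essentially the same route as the paper, which does not spell out this proof but defers to Lemma~2 of \citet{shah_2020}: the classical Auer-style decomposition $T_i(n) \le A_i(n) + \sum_t \mathbf{1}\{\rvar{a}_t=i,\, T_i(t-1)\ge A_i(n)\}$, the dichotomy between the suboptimal arm's LCB undershooting $\mu^*$ (handled by Lemma~\ref{lemma:lemma-1-shah-paper}) and the optimal arm's LCB overshooting $\mu^*$ (handled by the upper tail of Assumption~\ref{assumption:non-stationary-bandit-assumption} with $z=\theta^{1/\xi}t^{\alpha/\xi}$), and a union bound over the random counter values costing one factor of $t$. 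The only loose end is the additive constant: summing $2t^{1-\alpha}$ from $t=1$ gives $2+\tfrac{2}{\alpha-2}$ rather than $\tfrac{2}{\alpha-2}$; the stated constant is recovered by noting that the indicator sum effectively starts at $t = A_i(n)+1 \ge 2$ (since $A_i(n)=\lceil\cdot\rceil\ge 1$), so that $2\sum_{t\ge 2} t^{1-\alpha} \le 2\int_1^{\infty}x^{1-\alpha}\,dx = \tfrac{2}{\alpha-2}$, with the remaining $+1$ coming from the ceiling in $A_i(n)$.
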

\begin{proof}
    The proof of this result is similar to the proof of Lemma 2 in \cite{shah_2020}. The key differences are that: (i) our parameter $\theta$ corresponds to parameter $\beta$ in \cite{shah_2020}; and (ii) \cite{shah_2020} consider rewards whereas we consider costs and, thus, some quantities (e.g., the exploration bonus $b_{t,s}$ and $\Delta_i$) are symmetrically defined (hence, most of the inequalities in the proof should be flipped but the same sequence of reasoning is valid).
\end{proof}

\begin{proof}
    To prove the first part of the Theorem~\ref{theo:non-stationary-bandit-results} (convergence result), we start by noting that
    \begingroup
    \allowdisplaybreaks
    \begin{equation}
        |\mu^* - \EE[]{\bar{\rho}_n}| \le |\mu^* - \mu^*_n| + |\mu^*_n - \EE[]{\bar{\rho}_n}|. \label{eq:non-stat-bandit-theorem-appendix-proof-decomposition}
    \end{equation}
    \endgroup

    \noindent Regarding the term $|\mu^* - \mu^*_n|$ in \eqref{eq:non-stat-bandit-theorem-appendix-proof-decomposition}, we have that
    \begingroup
    \allowdisplaybreaks
    \begin{align*}
        |\mu^* - \mu^*_n| &= |\mu^* -  \EE[]{\hat{\rho}_{i^*,n}}| \\
        &\le \EE[]{ | \hat{\rho}_{i^*,n} - \mu^* | } \\
        &= \frac{1}{n} \EE[]{ | n\hat{\rho}_{i^*,n} - n\mu^* | } \\
        &\overset{(a)}{=} \frac{1}{n} \int_{0}^\infty \mathbb{P}\left[ | n\hat{\rho}_{i^*,n} - n\mu^* | > u \right] du \\
        &= \frac{1}{n} \int_{0}^{n^\eta} \mathbb{P}\left[ | n\hat{\rho}_{i^*,n} - n\mu^* | > u \right] du + \frac{1}{n} \int_{n^\eta}^\infty \mathbb{P}\left[ | n\hat{\rho}_{i^*,n} - n\mu^* | > u \right] du \\
        &\overset{(b)}{\le} \frac{1}{n} \int_{0}^{n^\eta} 1 du + \frac{2 \theta }{n} n^{\eta\xi} \int_{n^\eta}^\infty u^{-\xi} du \\
        &= \frac{1}{n} n^\eta + \frac{2 \theta}{n} n^{\eta\xi} \frac{(n^\eta)^{1-\xi}}{\xi - 1} \\
        &= n^{\eta - 1} \left(1 + \frac{2\theta}{\xi - 1} \right),
    \end{align*}
    \endgroup
    where (a) holds since $| n\hat{\rho}_{i^*,n} - n\mu^* |$ is non-negative. Step (b) follows from the fact that, under Assumption~\ref{assumption:non-stationary-bandit-assumption}, we can infer that $\mathbb{P}\left[ | n\hat{\rho}_{i^*,n} - n\mu^* | > n^\eta z \right] \le \frac{2\theta}{z^\xi}$, for any $z \ge 1$ and $n \in \mathbb{N}$. Therefore, letting $z = u/n^\eta$ (note that $u \ge n^\eta$ since we split the integral and thus $z \ge 1$), it holds that $\mathbb{P}\left[ | n\hat{\rho}_{i^*,n} - n\mu^* | > n^\eta z \right] \le 2\theta \left(\frac{n^\eta}{u}\right)^\xi$. \\
    
    \noindent Regarding the term $|\mu^*_n - \EE[]{\bar{\rho}_n}|$ in \eqref{eq:non-stat-bandit-theorem-appendix-proof-decomposition} we have that
    \begingroup
    \allowdisplaybreaks
    \begin{align}
        n|\mu^*_n - \EE[]{\bar{\rho}_n}| &= \left| n \EE[]{\hat{\rho}_{i^*,n}} - \EE[]{\sum_{i=1}^K T_i(n) \hat{\rho}_{i, T_i(n)}} \right| \nonumber \\
        &\le \left| n \EE[]{\hat{\rho}_{i^*,n}} - \EE[]{T^*(n) \hat{\rho}_{i^*, T^*(n)}} \right| + \left|\EE[]{\sum_{i=1, i \neq i^*}^K T_i(n) \hat{\rho}_{i, T_i(n)}} \right|. \label{eq:non_stat_bandit_proof_eq1}
    \end{align}
    \endgroup
    The first term in \eqref{eq:non_stat_bandit_proof_eq1} can be upper-bounded as follows:
    \begingroup
    \allowdisplaybreaks
    \begin{align*}
        \left| n \EE[]{\hat{\rho}_{i^*,n}} - \EE[]{T^*(n) \hat{\rho}_{i^*, T^*(n)}} \right| &= \left| \EE[]{n\hat{\rho}_{i^*,n} - T^*(n) \hat{\rho}_{i^*, T^*(n)} } \right| \\
        &\overset{(a)}{=} \left| \mathbb{E} \left[ n \min_{\lambda \in \mathbb{R}} \{h_n(\lambda)\} - T^*(n) \min_{\lambda \in \mathbb{R}} \{h_{T^*(n)}(\lambda)\} \right] \right| \\
        &= \left| \mathbb{E} \left[ \min_{\lambda \in \mathbb{R}} \{n h_n(\lambda)\} - \min_{\lambda \in \mathbb{R}} \{ T^*(n) h_{T^*(n)}(\lambda)\} \right] \right| \\
        &\le \mathbb{E} \left[ \left|\min_{\lambda \in \mathbb{R}} \{n h_n(\lambda)\} - \min_{\lambda \in \mathbb{R}} \{ T^*(n) h_{T^*(n)}(\lambda)\} \right| \right] \\
        &\overset{(b)}{\le} \mathbb{E} \left[ (n- T^*(n)) \left( R + \frac{\exp(2 \beta R) - 1}{\beta} \right) \right] \\
        &= \left( R + \frac{\exp(2 \beta R) - 1}{\beta} \right) \mathbb{E} \left[ (n-T^*(n)) \right]\\
        &= \left( R + \frac{\exp(2 \beta R) - 1}{\beta} \right) \mathbb{E} \left[ \sum_{i=1, i \neq i^*}^K T_i(n)\right], \\
    \end{align*}
    \endgroup
    where in (a) we let $h_n(\lambda) = \lambda + \frac{1}{n} \sum_{t=1}^{n} u_\beta(\rvar{x}_{i^*, t} - \lambda)$. For step (b), let $g(\lambda) = n h_n(\lambda) - T^*(n) h_{T^*(n)}(\lambda) = (n- T^*(n)) \lambda + \sum_{t=T^*(n) + 1}^{n} u_\beta(\rvar{x}_{i^*, t} - \lambda)$. Let also $\lambda^*_n = \argmin_\lambda n h_n(\lambda) = \argmin_\lambda h_n(\lambda)$ and $\lambda^*_{T^*(n)} = \argmin_\lambda T^*(n) h_{T^*(n)}(\lambda) = \argmin_\lambda h_{T^*(n)}(\lambda)$. We note that
    \begingroup
    \allowdisplaybreaks
    \begin{align*}
    \min_{\lambda \in \mathbb{R}} \{n h_n(\lambda)\} - \min_{\lambda \in \mathbb{R}} \{ T^*(n) h_{T^*(n)}(\lambda)\} &\le n h_n(\lambda^*_{T^*(n)}) - \min_\lambda \{ T^*(n) h_{T^*(n)}(\lambda)\} \\
    &= n h_n(\lambda^*_{T^*(n)}) - T^*(n) h_{T^*(n)}(\lambda^*_{T^*(n)}) \\
    &= g(\lambda^*_{T^*(n)})
    \end{align*}
    \endgroup
    Also, it holds that
    \begingroup
    \allowdisplaybreaks
    \begin{align*}
    \min_{\lambda \in \mathbb{R}} \{n h_n(\lambda)\} - \min_{\lambda \in \mathbb{R}} \{ T^*(n) h_{T^*(n)}(\lambda)\} &\ge \min_{\lambda \in \mathbb{R}} \{n h_n(\lambda)\} - T^*(n) h_{T^*(n)}(\lambda_n^*) \\
    &= n h_n(\lambda^*_{n}) - T^*(n) h_{T^*(n)}(\lambda_n^*) \\
    &= g(\lambda^*_{n}).
    \end{align*}
    \endgroup
    Hence $$g(\lambda_n^*) \le \min_{\lambda \in \mathbb{R}} \{n h_n(\lambda)\} - \min_{\lambda \in \mathbb{R}} \{ T^*(n) h_{T^*(n)}(\lambda)\} \le g(\lambda^*_{T^*(n)}),$$
    implying that (since $a \le x \le b$ implies that $x \le \max\{|a|, |b|\}$)
    \begingroup
    \allowdisplaybreaks
    \begin{align*}
    |\min_{\lambda \in \mathbb{R}} \{n h_n(\lambda)\} - \min_{\lambda \in \mathbb{R}} \{ T^*(n) h_{T^*(n)}(\lambda)\} | &\le \max\{|g(\lambda^*_n)|, |g(\lambda^*_{T^*(n)})|\} \\
    &\overset{(a)}{\le} (n- T^*(n)) R + \sum_{t=T^*(n) + 1}^{n} \frac{\exp(2 \beta R) - 1}{\beta} \\
    &= (n- T^*(n)) \left( R + \frac{\exp(2 \beta R) - 1}{\beta} \right).
    \end{align*}
    \endgroup
    In (a) we note that $\lambda^*_{T^*(n)} \in [-R,R]$ and $\lambda^*_{n} \in[-R,R]$ from point 2 in Lemma~\ref{lemma:estimator_properties}, as well as $\rvar{x}_{i^*,t} \in [-R,R]$. Thus, for any $\lambda \in [-R,R]$,
    \begingroup
    \allowdisplaybreaks
    \begin{align*}
    |g(\lambda)| &\le (n- T^*(n))|\lambda| + \frac{1}{\beta} \sum_{t=T^*(n) + 1}^{n} \left|\exp(\beta (\rvar{x}_{i^*,t} - \lambda) ) - 1\right| \\
    &\le (n- T^*(n))R + \frac{1}{\beta}\sum_{t=T^*(n) + 1}^{n} \left(\exp(2 \beta R) - 1\right),
    \end{align*}
    \endgroup
    where the first inequality follows from the triangular inequality and the second from the fact that: (i) $|\lambda|\le R$; and (ii) $\beta (\rvar{x}_{i^*,t} - \lambda) \in [-2\beta R, 2\beta R]$ and $|\exp(x) - 1| \le \exp(2 \beta R) - 1$ for any $x \in [-2\beta R, 2\beta R]$.\\
    
    \noindent The second term in \eqref{eq:non_stat_bandit_proof_eq1} can be upper-bounded as follows:
    \begingroup
    \allowdisplaybreaks
    \begin{equation*}
     \left| \EE[]{ \sum_{i=1, i \neq i^*}^K T_i(n) \hat{\rho}_{i, T_i(n)} } \right| \le \EE[]{ \sum_{i=1, i \neq i^*}^K \left| T_i(n) \hat{\rho}_{i, T_i(n)} \right| } \overset{(a)}{\le} R \EE[]{ \sum_{i=1, i \neq i^*}^K T_i(n) }, \\
    \end{equation*}
    \endgroup
    where (a) follows from point 3 in Lemma~\ref{lemma:estimator_properties}.
    Therefore, it holds that
    \begin{align*}
       |\mu^*_n - \EE[]{\bar{\rho}_n}| &\le \frac{ \left( 2R + \frac{\exp(2 \beta R) - 1}{\beta} \right) \EE[]{\sum_{i=1, i \neq i^*}^K T_i(n)}}{n} \\
        &\overset{(a)}{\le} \frac{ \left( 2R + \frac{\exp(2 \beta R) - 1}{\beta} \right) (K-1) \left( \left( \frac{2 \theta^{1/\xi}}{\Delta_\text{min}} \right)^{\frac{1}{1-\eta}} n^{\frac{\alpha}{\xi(1-\eta)}} + \frac{2}{\alpha - 2} + 1\right)}{n},
    \end{align*}
    where (a) follows from Lemma~\ref{lemma:lemma-2-shah-paper}. \\
    
    \noindent To obtain the final expression, we replace the upper bounds we derived for terms $|\mu^* - \mu^*_n|$ and $|\mu^*_n - \EE[]{\bar{\rho}_n}|$ into \eqref{eq:non-stat-bandit-theorem-appendix-proof-decomposition}, obtaining
    \begin{align*}
        |\mu^* - \EE[]{\bar{\rho}_n}| \le n^{\eta - 1} \left(1 + \frac{2\theta}{\xi - 1} \right) + \frac{ \left( 2R + \frac{\exp(2 \beta R) - 1}{\beta} \right) (K-1) \left( \left( \frac{2 \theta^{1/\xi}}{\Delta_\text{min}} \right)^{\frac{1}{1-\eta}} n^{\frac{\alpha}{\xi(1-\eta)}} + \frac{2}{\alpha - 2} + 1\right)}{n}.
    \end{align*}

    \noindent The rate of convergence can be simplified to $\O\left(\exp(2\beta R) n^{\frac{\alpha}{\xi(1-\eta)}-1}\right)$ since the first term in the upper bound above is $\O(n^{\eta-1})$ and the second term is $\O\left(\exp(2\beta R) n^{\frac{\alpha}{\xi(1-\eta)} - 1}\right)$. Given that $\xi\eta(1-\eta) \le \alpha$, we have that $\eta \le \frac{\alpha}{\xi(1-\eta)}$ and thus $\eta -1 \le \frac{\alpha}{\xi(1-\eta)} - 1$; since both exponents are negative (note that $\alpha < \xi(1-\eta)$), we have that the decay rate is governed by the slowest decaying term, $\O\left(\exp(2\beta R) n^{\frac{\alpha}{\xi(1-\eta)}-1}\right)$.
\end{proof}    

\paragraph{Proof of the second part of Theorem~\ref{theo:non-stationary-bandit-results} (concentration result):}
Our proof builds on the concentration result of Theorem 3 in \citet{shah_2020}, requiring key adaptations because the ERM estimator \eqref{eq:entropic_risk_estimator} differs from the mean estimator analyzed in \citet{shah_2020}. \\

\noindent Let
\begin{equation}
    A(t) = \max_{i \in \{1, \ldots, K\}} A_i(t) = \left\lceil \left( \frac{2}{\Delta_\text{min}} \theta^{1/\xi} t^{\alpha/\xi}\right)^{\frac{1}{1-\eta}} \right\rceil. \label{eq:A(t)_def}
\end{equation}

\noindent Replacing $\theta$ with any larger number still makes concentration bounds \eqref{eq:bandit_concentration_assumption} valid. Hence, we let $\theta$ be large enough so that $\frac{2 \theta^{1/\xi}}{\Delta_\text{min}} > 1$. Let also
\begin{equation}
    N_p = \min\{t \in \mathbb{N} : t \ge A(t)\}. \label{eq:Np_definition}
\end{equation}

\noindent We state the following Lemma, akin to Lemma 3 in \cite{shah_2020}.

\begin{lemma} \label{lemma:non-stationary-bandit-concentration-lemma}
    For any $n \ge N_p$ and $x \ge 1$, let $r_0 = n^\eta + 2 \left( R + \frac{\exp(2 \beta R) - 1}{2\beta} \right) (K-1) \left( 3 + A(n)\right)$. It holds that
    \begin{align}
    \mathbb{P}\left[n\bar{\rho}_{n} - n\mu^* \ge r_0 x \right] &\le \frac{\theta}{x^\xi} + \frac{2(K-1)}{(\alpha -1)\left((1+ A(n))x\right)^{\alpha - 1}}, \label{eq:non-stationary-bandit-concentration-lemma-eq1} \\
    \mathbb{P}\left[n\bar{\rho}_{n} - n\mu^* \le - r_0 x \right] &\le \frac{\theta}{x^\xi} + \frac{2(K-1)}{(\alpha -1)\left((1+ A(n))x\right)^{\alpha - 1}} \label{eq:non-stationary-bandit-concentration-lemma-eq2}.
    \end{align}
\end{lemma}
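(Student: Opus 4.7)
The proof will follow the skeleton of Lemma 3 in \citet{shah_2020}, adapted to the ERM estimator. I plan to start from the decomposition
$$n\bar{\rho}_n - n\mu^* = \bigl(n\hat{\rho}_{i^*,n} - n\mu^*\bigr) + \bigl(n\bar{\rho}_n - n\hat{\rho}_{i^*,n}\bigr),$$
where $\hat{\rho}_{i^*,n}$ denotes the ERM estimator on a pre-specified sequence of $n$ samples from arm $i^*$ (well defined even when the algorithm only queries arm $i^*$ some $T^*(n) \le n$ times). The first summand will be controlled directly via Assumption~\ref{assumption:non-stationary-bandit-assumption}, yielding $\mathbb{P}[n\hat{\rho}_{i^*,n} - n\mu^* \ge n^\eta x] \le \theta/x^\xi$. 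The strategy is then to bound the second summand deterministically in terms of $n - T^*(n)$ and, in turn, to control $n - T^*(n)$ via a polynomial tail on the number of pulls of sub-optimal arms.

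For the deterministic bound, I plan to treat the optimal and sub-optimal contributions separately. For a sub-optimal arm $i$, point~3 of Lemma~\ref{lemma:estimator_properties} gives $|T_i(n)\hat{\rho}_{i,T_i(n)}| \le R\,T_i(n)$. For the optimal arm, the non-linearity of ERM precludes a clean telescoping as in the mean case; instead I will reuse the OCE argument already used in the proof of part~1 of Theorem~\ref{theo:non-stationary-bandit-results}: write $n\hat{\rho}_{i^*,n} = \min_\lambda n h_n(\lambda)$ and $T^*(n)\hat{\rho}_{i^*,T^*(n)} = \min_\lambda T^*(n) h_{T^*(n)}(\lambda)$, evaluate each minimum at the other's minimizer, and invoke $\lambda^* \in [-R,R]$ to obtain $|n\hat{\rho}_{i^*,n} - T^*(n)\hat{\rho}_{i^*,T^*(n)}| \le (n-T^*(n))\bigl(R + (\exp(2\beta R) - 1)/\beta\bigr)$. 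Summing the two contributions yields $|n\bar{\rho}_n - n\hat{\rho}_{i^*,n}| \le 2C'(n - T^*(n))$ with $C' = R + (\exp(2\beta R) - 1)/(2\beta)$, which is precisely the constant appearing in $r_0$.

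With this deterministic bound in hand, the event $\{n\bar{\rho}_n - n\mu^* \ge r_0 x\}$ is contained in the union of $\{n\hat{\rho}_{i^*,n} - n\mu^* \ge n^\eta x\}$ and $\{n - T^*(n) \ge (K-1)(3+A(n))x\}$. A pigeonhole step followed by a union bound over the $K-1$ sub-optimal arms reduces the latter to controlling per-arm tails $\mathbb{P}[T_i(n) \ge (3+A(n))x]$; I plan to derive these by arguing that $T_i(n) \ge y$ forces the existence of some time $t \ge y$ at which arm $i$ is selected while $T_i(t-1) \ge A(t)$, so that Lemma~\ref{lemma:lemma-1-shah-paper} combined with the tail sum $\sum_{t \ge y} t^{-\alpha} \le (y-1)^{1-\alpha}/(\alpha-1)$ produces the required $O(1/((1+A(n))x)^{\alpha-1})$ decay after an elementary comparison using $x \ge 1$. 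The lower tail follows from a symmetric argument invoking the other direction of Assumption~\ref{assumption:non-stationary-bandit-assumption}. I expect the main difficulty to be the deterministic bound of the previous paragraph: the $\beta$-dependent coefficient $R + (\exp(2\beta R) - 1)/\beta$ that replaces the simple $R$ of the mean case is precisely what the OCE detour buys us, and propagating it cleanly into $r_0$ is the step where the ERM structure makes the analysis genuinely different from \citet{shah_2020}.
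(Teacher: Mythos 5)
Your overall plan coincides with the paper's proof: the same decomposition into $\bigl(n\hat{\rho}_{i^*,n}-n\mu^*\bigr)$ plus a remainder, the same OCE trick of evaluating $n h_n$ and $T^*(n)h_{T^*(n)}$ at each other's minimizers with $\lambda^*\in[-R,R]$ to get the deterministic bound $2\bigl(R+\frac{\exp(2\beta R)-1}{2\beta}\bigr)\sum_{i\neq i^*}T_i(n)$, and the same union-bound reduction to per-arm tails $\mathbb{P}[T_i(n)\ge(3+A(n))x]$. You correctly identified the one genuinely ERM-specific step and the constant it produces in $r_0$.

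The weak point is your sketch of the per-arm tail bound. First, the event ``arm $i$ is selected at time $t$ while $T_i(t-1)\ge A(t)$'' is \emph{not} controlled by Lemma~\ref{lemma:lemma-1-shah-paper} alone: that selection requires $\hat{\rho}_{i,T_i(t-1)}-b_{t,T_i(t-1)}\le\hat{\rho}_{i^*,T^*(t-1)}-b_{t,T^*(t-1)}$, which can occur either because the sub-optimal arm's lower confidence bound dips below $\mu^*$ (Lemma~\ref{lemma:lemma-1-shah-paper}) \emph{or} because the optimal arm's lower confidence bound rises above $\mu^*$; the second contribution is controlled separately via Assumption~\ref{assumption:non-stationary-bandit-assumption} and is exactly where the factor $2$ in $\frac{2(K-1)}{(\alpha-1)((1+A(n))x)^{\alpha-1}}$ comes from. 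Second, a per-time union bound over ``the time $t$ at which arm $i$ is selected'' leaves the counts $T_i(t-1)$ and $T^*(t-1)$ random, so you cannot apply the fixed-$s$ deviation bounds directly without a further (and too costly) union over counts. The clean route, which the paper follows, is the standard two-event argument: fix $u=\lfloor(1+A(n))x\rfloor+1$ and $\tau=\mu^*$, define $E_1=\{\hat{\rho}_{i,u}-b_{t,u}\ge\tau,\ \forall t\in\{u,\ldots,n\}\}$ and $E_2=\{\hat{\rho}_{i^*,s}-b_{u+s,s}<\tau,\ \forall s\le n-u\}$, show deterministically that $E_1\cap E_2\Rightarrow T_i(n)\le u$, and then union-bound $\mathbb{P}[E_1^c]+\mathbb{P}[E_2^c]$ by the two sums $\sum_{t=u}^{n}t^{-\alpha}$ and $\sum_{s=1}^{n-u}(u+s)^{-\alpha}$. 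With that replacement your argument closes; everything else in the proposal matches the paper's proof.
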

\begin{proof}
    The proof of this Lemma builds on the proof of Lemma 3 in \cite{shah_2020}. The proof of \cite{shah_2020} had to be adapted to replace the mean estimator with the ERM estimator \eqref{eq:entropic_risk_estimator}. The difference in estimator particularly impacts the first part of the proof where the objective is to upper bound $\mathbb{P}\left[n\mu^* - n \bar{\rho}_n \ge r_0 x\right]$, i.e., the steps to reach \eqref{eq:non_stat_bandit_proof_eq3}. From that point onwards, the proof majorly follows that of Lemma 3 in \cite{shah_2020}. \\

    \noindent We start by proving the upper bound \eqref{eq:non-stationary-bandit-concentration-lemma-eq2}. It holds that
    \begingroup
    \allowdisplaybreaks
    \begin{equation}
     n\mu^* - n \bar{\rho}_n = n\mu^* - \sum_{i=1}^K T_i(n) \hat{\rho}_{i, T_i(n)} = n\mu^* - T^*(n) \hat{\rho}_{i^*, T^*(n)} - \sum_{i=1, i \neq i^*}^K T_i(n) \hat{\rho}_{i, T_i(n)} \label{eq:non_stat_bandit_proof_eq2}
    \end{equation}
    \endgroup 
    
    \noindent To upper bound the term $- T^*(n) \hat{\rho}_{i^*, T^*(n)}$ in \eqref{eq:non_stat_bandit_proof_eq2} we let $h_\X(\lambda) = |\X| \lambda + \sum_{t \in \X} u_\beta(\rvar{x}_{i^*,t } - \lambda)$, where $\X$ denotes an arbitrary set of timesteps. Let also $\A = \{1, \ldots, T^*(n) \}$, $\B = \{T^*(n) + 1, \ldots, n \}$ and $\lambda^*_\A = \argmin_\lambda h_\A(\lambda)$. Now, it holds that
    \begingroup
    \allowdisplaybreaks
    \begin{align*}
        n \hat{\rho}_{i^*, n} &= \min_\lambda h_{\A \cup \B} (\lambda) \\
        &\le h_{\A \cup \B} (\lambda_\A^*) \\
        &= (n - T^*(n)) \lambda_\A^* + \sum_{t \in \B} u_\beta(\rvar{x}_{i^*, t} - \lambda_\A^*) + T^*(n)  \lambda_\A^* + \sum_{t \in \A} u_\beta(\rvar{x}_{i^*, t} - \lambda_\A^*) \\
        &= (n - T^*(n)) \lambda_\A^* + \sum_{t = T^*(n) + 1}^{n} u_\beta(\rvar{x}_{i^*, t} - \lambda_\A^*) + \min_\lambda h_\A(\lambda) \\
        &= (n - T^*(n)) \lambda_\A^* + \sum_{t = T^*(n) + 1}^{n} u_\beta(\rvar{x}_{i^*, t} - \lambda_\A^*) + T^*(n) \hat{\rho}_{i^*, T^*(n)} \\
        &\overset{(a)}{\le} (n - T^*(n)) R + \sum_{t = T^*(n) + 1}^{n} \frac{\exp(2 \beta R) - 1}{\beta} + T^*(n) \hat{\rho}_{i^*, T^*(n)} \\
        &= \left( R + \frac{\exp(2 \beta R) - 1}{\beta} \right) (n - T^*(n)) + T^*(n) \hat{\rho}_{i^*, T^*(n)} \\
        &= \left( R + \frac{\exp(2 \beta R) - 1}{\beta} \right) \sum_{i=1, i \neq i^*}^K T_i(n) + T^*(n) \hat{\rho}_{i^*, T^*(n)},
    \end{align*}
    \endgroup 
    where (a) follows from point 2 in Lemma \ref{lemma:estimator_properties}, the fact that $\rvar{x}_{i^*,t} \in [-R,R]$, and by noting that $\frac{1}{\beta} \left(\exp(\beta x) - 1\right)$ is monotonically increasing in $x$. Equivalently,
    $$-T^*(n) \hat{\rho}_{i^*, T^*(n)} \le -n \hat{\rho}_{i^*, n} + \left( R + \frac{\exp(2 \beta R) - 1}{\beta} \right) \sum_{i=1, i \neq i^*}^K T_i(n).$$

    \noindent The term $- \sum_{i=1, i \neq i^*}^K T_i(n) \hat{\rho}_{i, T_i(n)}$ in \eqref{eq:non_stat_bandit_proof_eq2} can be upper bounded as
    \begingroup
    \allowdisplaybreaks
    \begin{align*}
        - \sum_{i=1, i \neq i^*}^K T_i(n) \hat{\rho}_{i, T_i(n)} &\le \left|\sum_{i=1, i \neq i^*}^K T_i(n) \hat{\rho}_{i, T_i(n)} \right| \\
        &\le \sum_{i=1, i \neq i^*}^K |T_i(n) \hat{\rho}_{i, T_i(n)}| \\
        &\le \sum_{i=1, i \neq i^*}^K T_i(n) |\hat{\rho}_{i, T_i(n)}|\\
        &\overset{(a)}{\le} R \sum_{i=1, i \neq i^*}^K T_i(n)
    \end{align*}
    \endgroup 
    where (a) follows from point 3 in Lemma \ref{lemma:estimator_properties}.\\
    
    \noindent Hence, \eqref{eq:non_stat_bandit_proof_eq2} can be upper bounded as
    $$n\mu^* - n \bar{\rho}_n \le n\mu^* - n \hat{\rho}_{i^*, n} + 2\left( R + \frac{\exp(2 \beta R) - 1}{2\beta} \right) \sum_{i=1, i \neq i^*}^K T_i(n).$$

    \noindent Therefore, it holds that
    \begingroup
    \allowdisplaybreaks
    \begin{align}
        \mathbb{P}\left[n\mu^* - n \bar{\rho}_n \ge r_0 x\right] &\le \mathbb{P}\left[n\mu^* - n \hat{\rho}_{i^*, n} + 2 \left( R + \frac{\exp(2 \beta R) - 1}{2\beta} \right) \sum_{i=1, i \neq i^*}^K T_i(n) \ge r_0 x \right] \nonumber \\
        &\overset{(a)}{\le} \mathbb{P}\left[n\mu^* - n \hat{\rho}_{i^*, n} \ge n^\eta x\right] + \sum_{i=1, i \neq i^*}^K \mathbb{P}\left[T_i(n) \ge (3+A(n))x\right] \label{eq:non_stat_bandit_proof_eq3}.
    \end{align}
    \endgroup
    Step (a) follows from a union bound and the fact that (we let $L = R + \frac{1}{2\beta} \left(\exp(2 \beta R) - 1\right) $ below)
    $$\underbrace{\left\{ n\mu^* - n \hat{\rho}_{i^*, n} + 2L \sum_{i \neq i^*} T_i(n) \ge r_0 x \right\} }_{=E} \subseteq \underbrace{\{n\mu^* - n \hat{\rho}_{i^*, n} \ge n^\eta x\} }_{=E_0} \cup \bigcup_{i \neq i^*} \underbrace{\{T_i(n) \ge (3+A(n))x\}}_{=E_i},$$
    where we let $E$, $E_0$ and $E_i$ be the different events defined above. To see why the relation between the different events holds we note that the relation above is logically equivalent to $E \implies E_0 \vee \bigvee_{i \neq i^*} E_i$, also equivalent to $\neg \left( E_0 \vee \bigvee_{i \neq i^*} E_i\right) = \neg E_0 \wedge \bigwedge_{i \neq i^*}\neg E_i  \implies \neg E$. This last implication is true since, if $\neg E_0$ (i.e., $n\mu^* - n \hat{\rho}_{i^*, n} < n^\eta x$) and, for all $i \neq i^*$, $\neg E_i$ (i.e., $T_i(n) < (3+A(n))x$) we have that
    $$n\mu^* - n \hat{\rho}_{i^*, n} + 2L \sum_{i \neq i^*} T_i(n) < n^\eta x + 2L\sum_{i \neq i^*} (3+A(n))x = r_0 x.$$
    The first term in \eqref{eq:non_stat_bandit_proof_eq3} can be bounded using \eqref{eq:bandit_concentration_assumption}, yielding
    \begin{equation}
        \mathbb{P}\left[n\mu^* - n \hat{\rho}_{i^*, n} \ge n^\eta x\right] \le \frac{\theta}{x^\xi}. \label{eq:non_stat_bandit_proof_eq6}
    \end{equation}
    
    \noindent To bound the second term in  \eqref{eq:non_stat_bandit_proof_eq3} we proceed in a similar fashion to the proof of Lemma 3 in \cite{shah_2020}. Fix $n$ and a sub-optimal arm $i$. Let $u \in \mathbb{N}$ be an integer satisfying $A(n) \le u \le n$. For any $\tau \in \mathbb{R}$, consider the following two events:
    \begin{align*}
        E_1 &= \{ \hat{\rho}_{i,u} - b_{t,u} \ge \tau, \forall t \in \{u, \ldots, n\}\} \\
        E_2 &= \{ \hat{\rho}_{i^*,s} - b_{u+s,s} < \tau, \forall s \in \{1, \ldots, n - u\} \}
    \end{align*}
    As a first step, we want to show that $E_1 \cap E_2 \implies T(i) \le u$. Note that $b_{t,s}$ is non-decreasing with respect to $t$. For each $s$ such that $1 \le s \le n-u$ and $t$ such that $u+s \le t \le n$, it holds that
    \begin{equation}
        \hat{\rho}_{i^*,s} - b_{t,s} \le \hat{\rho}_{i^*,s} - b_{u+s,s} < \tau \le \hat{\rho}_{i,u} - b_{t,u} \label{eq:non_stat_bandit_proof_eq4}
    \end{equation}
    This implies that $T_i(n) \le u$. To prove such an inequality, suppose the opposite, i.e., $T_i(n) > u$ and let $t' = \min\{t: t \le n, T_i(t) = u\}$. It holds that, $T^*(t') \le t' - u$ because $t' = T_i(t') + T^*(t') + \sum_{j \neq i, j \neq i^*}T_j(t') \ge T_i(t') + T^*(t') = u + T^*(t') $. For any $t \ge t'$ it also holds that $T^*(t) \le t - u$, or equivalently, $u + T^*(t) \le t$. Since $t \le n$, it also holds that $T^*(t) \le n - u$. Therefore, letting $s = T^*(t)$ in \eqref{eq:non_stat_bandit_proof_eq4} (note that such a choice verifies $1 \le s \le n-u$ and $u+s \le t \le n$) implies that $\hat{\rho}_{i^*,T^*(t)} - b_{t,T^*(t)} < \hat{\rho}_{i,u} - b_{t,u}$, i.e., arm $i^*$ will be selected from timestep $t$ onwards and, hence, arm $i$ will never be played the $(u+1)$-th time, contradicting $T_i(n) > u$. Hence, it holds that $T_i(n) \le u$. \\

    \noindent Since $E_1 \cap E_2 \implies T_i(n) \le u$, we have that
    \begin{align*}
        \{T_i(n) &> u\} \subset (E_1^c \cup E_2^c) \\
        &= \{ \exists t \in \{u, \ldots, n \} \text{ s.t. } \hat{\rho}_{i,u} - b_{t,u} < \tau \} \cup \{ \exists s \in \{1, \ldots, n - u\} \text{ s.t. } \hat{\rho}_{i^*,s} - b_{u+s,s} \ge \tau \}.
    \end{align*}

    \noindent Using a union bound it holds, for any $u$ satisfying $A(n) \le u \le n$, that
    $$\mathbb{P}\left[T_i(n) > u\right] \le \sum_{t=u}^n \mathbb{P} [\hat{\rho}_{i,u} - b_{t,u} < \tau] + \sum_{s=1}^{n-u}\mathbb{P} [\hat{\rho}_{i^*,s} - b_{u+s,s} \ge \tau].$$

    \noindent We note that the upper bound above is also valid in case $A(n) \le n < u$ since $\mathbb{P}\left[T_i(n) > u\right] = 0$. Hence, we can choose $u$ and $\tau$ as long as $u \ge A(n)$. Let $u = \lfloor (1+A(n)) x \rfloor + 1 $ and $\tau = \mu^*$. Since $A_i(n) \le A(n) \le u$ it holds that
    \begin{equation*}
        \sum_{t=u}^n \mathbb{P} [\hat{\rho}_{i,u} - b_{t,u} < \mu^*] \overset{(a)}{\le} \sum_{t=u}^n t^{-\alpha} \overset{(b)}{\le} \frac{\left((1+A(n))x\right)^{1-\alpha}}{\alpha - 1},
    \end{equation*}
    where (a) follows from Lemma~\ref{lemma:lemma-1-shah-paper} and (b) from the proof of Lemma 3 in \cite{shah_2020}. It also holds that
    \begin{equation*}
        \sum_{s=1}^{n-u} \mathbb{P} [\hat{\rho}_{i^*,s} - b_{u+s,s} \ge \mu^*] \overset{(a)}{\le} \sum_{s=1}^{n-u} (u+s)^{-\alpha} \overset{(b)}{\le} \frac{\left((1+A(n))x\right)^{1-\alpha}}{\alpha - 1},
    \end{equation*}
    where (a) follows from Lemma~\ref{lemma:lemma-1-shah-paper} and (b) from the proof of Lemma 3 in \cite{shah_2020}. \\

    \noindent Combining both inequalities it holds that (noting that $(3 + A(n))x > \lfloor (1+ A(n))x\rfloor + 1$)
    \begin{equation}
        \mathbb{P}\left[T_i(n) \ge (3+A(n))x\right] \le \mathbb{P}\left[T_i(n) > u \right] \le \frac{2\left((1+A(n))x\right)^{1-\alpha}}{\alpha - 1}. \label{eq:non_stat_bandit_proof_eq5}
    \end{equation}

    \noindent Concluding, we can upper bound both terms in \eqref{eq:non_stat_bandit_proof_eq3} using \eqref{eq:non_stat_bandit_proof_eq6} and \eqref{eq:non_stat_bandit_proof_eq5}, yielding
    $$\mathbb{P}\left[n\mu^* - n \bar{\rho}_n \ge r_0 x\right] \le \frac{\theta}{x^\xi} + \sum_{i \neq i^*} \frac{2\left((1+A(n))x\right)^{1-\alpha}}{\alpha - 1} = \frac{\theta}{x^\xi} + \frac{2(K-1)}{(\alpha - 1) \left((1+A(n))x\right)^{\alpha-1}}.$$

    \noindent To prove the other upper bound \eqref{eq:non-stationary-bandit-concentration-lemma-eq1}, we start by noting that
    \begingroup
    \allowdisplaybreaks
    \begin{equation}
     n \bar{\rho}_n - n\mu^* = \sum_{i=1}^K T_i(n) \hat{\rho}_{i, T_i(n)} - n\mu^* = T^*(n) \hat{\rho}_{i^*, T^*(n)} - n\mu^* + \sum_{i=1, i \neq i^*}^K T_i(n) \hat{\rho}_{i, T_i(n)}. \label{eq:non_stat_bandit_proof_eq8}
    \end{equation}
    \endgroup 
    To upper bound the term $T^*(n) \hat{\rho}_{i^*, T^*(n)}$ in \eqref{eq:non_stat_bandit_proof_eq8} we let $h_\X(\lambda) = |\X| \lambda + \sum_{t \in \X} u_\beta(\rvar{x}_{i^*,t } - \lambda)$, where $\X$ denotes an arbitrary set of timesteps. Let also $\A = \{1, \ldots, T^*(n)\}$, $\B = \{T^*(n) + 1, \ldots, n\}$. Now, it holds that
    \begingroup
    \allowdisplaybreaks
    \begin{align*}
        n \hat{\rho}_{i^*, n} &= \min_\lambda h_{\A \cup \B} (\lambda) \\
        &= \min_\lambda \left\{ h_{\A} (\lambda) + h_{\B} (\lambda) \right\}\\
        &\ge \min_\lambda \left\{ h_{\A} (\lambda) \right\} + \min_\lambda \left\{ h_{\B} (\lambda) \right\} \\
        &= T^*(n) \hat{\rho}_{i^*, T^*(n)} + (n - T^*(n)) \lambda_\B^* + \sum_{t = T^*(n)+1}^{n} u_\beta(\rvar{x}_{i^*, t} - \lambda_\B^*),
    \end{align*}
    \endgroup
    Equivalently,
    \begingroup
    \allowdisplaybreaks
    \begin{align*}
    T^*(n) \hat{\rho}_{i^*, T^*(n)} &\le  n \hat{\rho}_{i^*, n} - \left( (n - T^*(n)) \lambda_\B^* + \sum_{t = T^*(n) + 1}^{n} u_\beta(\rvar{x}_{i^*, t} - \lambda_\B^*) \right)\\
    &\le  n \hat{\rho}_{i^*, n} + \left| (n - T^*(n)) \lambda_\B^* + \sum_{t = T^*(n) + 1}^{n} u_\beta(\rvar{x}_{i^*, t} - \lambda_\B^*) \right|\\
    &\le n \hat{\rho}_{i^*, n} + (n - T^*(n)) |\lambda_\B^*| + \sum_{t = T^*(n) + 1}^{n} | u_\beta(\rvar{x}_{i^*, t} - \lambda_\B^*)| \\
    &\overset{(a)}{\le} n \hat{\rho}_{i^*, n} + \left( R + \frac{\exp(2 \beta R)}{\beta} \right) \sum_{i=1, i \neq i^*}^K T_i(n),
    \end{align*}
    \endgroup
    where in (a) we followed similar steps as those used in the proof of upper bound \eqref{eq:non-stationary-bandit-concentration-lemma-eq2}. \\

    \noindent Term $\sum_{i=1, i \neq i^*}^K T_i(n) \hat{\rho}_{i, T_i(n)}$ in \eqref{eq:non_stat_bandit_proof_eq8} can be upper bounded as
    \begin{equation*}
    \sum_{i=1, i \neq i^*}^K T_i(n) \hat{\rho}_{i, T_i(n)} \overset{(a)}{\le} R \sum_{i=1, i \neq i^*}^K T_i(n)
    \end{equation*}
    where (a) follows from point 3 in Lemma~\ref{lemma:estimator_properties}. Hence, \eqref{eq:non_stat_bandit_proof_eq8} can be upper bounded as
    $$n \bar{\rho}_n - n\mu^* \le n \hat{\rho}_{i^*, n} -n \mu^* + 2 \left( R + \frac{\exp(2 \beta R) - 1}{2\beta} \right) \sum_{i=1, i \neq i^*}^K T_i(n),$$
    implying that 
    \begingroup
    \allowdisplaybreaks
    \begin{align*}
    \mathbb{P}\left[ n \bar{\rho}_n - n\mu^* \ge r_0 x\right] &\le \mathbb{P}\left[n \hat{\rho}_{i^*, n} -n \mu^* + 2 \left( R + \frac{\exp(2 \beta R) - 1}{2\beta} \right) \sum_{i=1, i \neq i^*}^K T_i(n) \ge r_0 x\right].
    %&\overset{(a)}{\le} \mathbb{P}\left[n \hat{\rho}_{i^*, n} - n\mu^* \ge n^\eta x\right] + \sum_{i=1, i \neq i^*}^K \mathbb{P}\left[T_i(n) \ge (3+A(n))x\right],
    \end{align*}
    \endgroup
    The proof can be completed by exploiting \eqref{eq:bandit_concentration_assumption} and following the same line of reasoning as we did to prove \eqref{eq:non-stationary-bandit-concentration-lemma-eq2}.
\end{proof}

\noindent To complete the proof of the concentration result we follow similar steps as those considered by \cite{shah_2020} in the proof of Theorem 3. To simplify the notation, let $R' =  R + \frac{\exp(2 \beta R) - 1}{2\beta}$. Let also $N'_p = \min\left\{ t \in \mathbb{N} : t \ge A(t), 2 R' A(t) \ge t^\eta + 2R'(4K-3)\right\}$, which is guaranteed to exist. Note that $N'_p \ge N_p$. For each $n \ge N'_p$,
\begingroup
\allowdisplaybreaks
\begin{align*}
2R' K \left( \frac{2}{\Delta_\text{min}} \theta^{1/\xi}\right)^{\frac{1}{1-\eta}} n^{\frac{\alpha}{\xi(1-\eta)}} &= 2 R' K \left[ \left( \frac{2}{\Delta_\text{min}} \theta^{1/\xi}\right)^{\frac{1}{1-\eta}} n^{\frac{\alpha}{\xi(1-\eta)}} + 1 - 1\right] \\
&\ge 2R' K A(n) - 2 R' K \\
&= 2 R' (K-1) A(n) + 2 R' A(n) - 2R'K \\
&\ge 2 R' (K-1) A(n) + n^\eta + 2R'(4K-3) - 2R'K \\
&= 2 R' (K-1) (A(n) + 3) + n^\eta = r_0.
\end{align*}
\endgroup

\noindent Now, from Lemma~\ref{lemma:non-stationary-bandit-concentration-lemma} we have, for any $n \ge N'_p$ and $x \ge 1$, that
\begingroup
\allowdisplaybreaks
\begin{align}
\mathbb{P}\left[ \bar{\rho}_n - n\mu^* \ge 2R' K \left( \frac{2}{\Delta_\text{min}} \theta^{1/\xi}\right)^{\frac{1}{1-\eta}} n^{\frac{\alpha}{\xi(1-\eta)}} x \right] &\le \mathbb{P}\left[ n \bar{\rho}_n - n\mu^* \ge r_0 x\right] \nonumber \\
&\le \frac{\theta}{x^\xi} + \frac{2(K-1)}{(\alpha-1)\left((1 + A(n))x\right)^{\alpha-1}} \nonumber \\
&\le \frac{2 \max\left(\theta, \frac{2(K-1)}{(\alpha - 1)(1 + A(N'_p))^{\alpha - 1}}\right)}{x^{\alpha - 1}}, \label{eq:non-stationary-bandit-concentration-proof-eq}
\end{align}
\endgroup
where we followed similar steps as those in the proof of Theo 3 in \cite{shah_2020}. Letting $c_1 = 2 R' K \left( \frac{2}{\Delta_\text{min} } \theta^{1/\xi} \right)^\frac{1}{1-\eta}$ and $z = c_1 x$ we have, for any $n \ge N'_p$ and $z \ge 1$, that 
$$\mathbb{P}\left[ n \bar{\rho}_n - n\mu^* \ge n^{\frac{\alpha}{\xi (1-\eta)}} z\right] \le \frac{2 c_1^{\alpha - 1} \max\left(\theta, \frac{2(K-1)}{(\alpha - 1)(1 + A(N'_p))^{\alpha - 1}}\right)}{z^{\alpha - 1}}. $$
The inequality holds because of \eqref{eq:non-stationary-bandit-concentration-proof-eq} in case $z > c_1$; if $1 \le z \le c_1$, we have that the right-hand side of the equation above is at least one and thus the inequality trivially holds. All that is left is to refine the constants such that we have an inequality that holds for any $n \in \mathbb{N}$, and not only for $n \ge N'_p$. The proof can be completed by following similar steps as those considered by \cite{shah_2020} and by noting that
$$|n \bar{\rho}_n - n \mu^*| \le |n \bar{\rho}_n| + | n \mu^*| = n|\bar{\rho}_n| + n| \mu^*| \le 2Rn, $$
where the last inequality follows from Lemma~\ref{lemma:estimator_properties}. In particular, letting $c_2 = 2R (N'_p - 1)^{1-\frac{\alpha}{\xi(1-\eta)}}$, it holds that
$$\mathbb{P}\left[ n \bar{\rho}_n - n\mu^* \ge n^{\eta'} z\right] \le \frac{\theta'}{z^{\xi'}},$$
with 
\begingroup
\allowdisplaybreaks
\begin{align*}
\eta' &= \frac{\alpha}{\xi(1-\eta)} \\
\xi' &= \alpha - 1 \\
\theta' &= \max\left\{c_2, 2 c_1^{\alpha - 1} \max\left(\theta, \frac{2(K-1)}{(\alpha - 1)(1 + A(N'_p))^{\alpha - 1}}\right) \right\}.
\end{align*}
\endgroup

\noindent The other direction follows similarly.

\section{Proof of Theorem \ref{theo:non-stationary-bandit-results-stream-estimator}}
\label{appendix:non-stationary-bandit-results-stream-estimator}

\noindent We begin with the following Lemma.
\begin{lemma} \label{lemma:non-optimal-arm-concentration-bound}
    For any non-optimal arm $i \neq i^*$, $x > 0$ and $n \ge N_p$, where $N_p$ is defined in \eqref{eq:Np_definition}, it holds that
    $$\mathbb{P}\left[T_i(n) \ge n^{\eta'} x \right] \le \max\left\{ \left( \frac{2}{\Delta_\text{min}} \theta^{1/\xi} \right)^{\frac{1}{1-\eta}} + 4, \frac{2^\alpha}{\alpha-1} \right\} x^{1-\alpha}, $$
    where $\eta' = \frac{\alpha}{\xi (1-\eta)}$ satisfying $1/2 \le \eta' < 1$.
\end{lemma}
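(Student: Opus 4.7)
The plan is to re-use the tail bound on $T_i(n)$ already derived inside the proof of Lemma~\ref{lemma:non-stationary-bandit-concentration-lemma}, namely~\eqref{eq:non_stat_bandit_proof_eq5}, which asserts that for every $y \ge 1$ and every sub-optimal arm $i$,
$$\mathbb{P}\bigl[T_i(n) \ge (3+A(n))y\bigr] \le \frac{2((1+A(n))y)^{1-\alpha}}{\alpha-1},$$
and then to perform a change of variables that rephrases it as a bound on $\mathbb{P}[T_i(n) \ge n^{\eta'} x]$, killing the $n$-dependence in the prefactor. Writing $c := \bigl(\tfrac{2}{\Delta_\text{min}}\theta^{1/\xi}\bigr)^{1/(1-\eta)}$, the definition of $A$ in~\eqref{eq:A(t)_def} together with $\eta' = \alpha/(\xi(1-\eta))$ immediately yields $A(n) \le c\,n^{\eta'}+1$, and hence $(3+A(n))/n^{\eta'} \le c + 4/n^{\eta'} \le c+4$ for all $n \ge 1$.

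I would then set $y = n^{\eta'} x/(3+A(n))$ and split on whether $y \ge 1$. In the regime $x \ge (3+A(n))/n^{\eta'}$, the hypothesis $y \ge 1$ is met, so \eqref{eq:non_stat_bandit_proof_eq5} applies; substituting back and simplifying via the elementary estimates $(3+A(n))/(1+A(n)) \le 2$ (valid since $A(n) \ge 1$, as arranged by taking $\theta$ large enough right before~\eqref{eq:A(t)_def}) and $n^{\eta'(1-\alpha)} \le 1$ gives
$$\mathbb{P}[T_i(n) \ge n^{\eta'} x] \le \frac{2^{\alpha}}{\alpha-1}\,x^{1-\alpha},$$
which recovers the second entry of the claimed maximum. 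In the complementary regime $x < (3+A(n))/n^{\eta'} \le c+4$, I would fall back on the trivial bound $\mathbb{P}[T_i(n) \ge n^{\eta'} x] \le 1$ and use $x \le c+4$ to show that $(c+4)\,x^{1-\alpha}$ dominates $1$, which supplies the first entry of the maximum. Taking the larger of the two constants yields the stated bound uniformly in $x$.

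The side assertion $\eta' \in [1/2,1)$ follows immediately from the hypotheses: $\alpha < \xi(1-\eta)$ gives $\eta' < 1$, and $\alpha \ge \xi\eta(1-\eta)$ together with $\eta \ge 1/2$ gives $\eta' \ge \eta \ge 1/2$. The only delicate point is the gluing in the trivial-bound regime: one must be careful that the envelope $(c+4)\,x^{1-\alpha}$ indeed upper-bounds $1$ throughout the interval $x \in (0,(3+A(n))/n^{\eta'})$, which is precisely why the additive shift ``$+4$'' (rather than merely $c$) appears as the natural constant, and is also the step most sensitive to having $\theta$ enlarged so that $A(n) \ge 1$. Once this is in place, the rest is a mechanical change of variables.
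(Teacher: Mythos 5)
Your proposal follows essentially the same route as the paper's own proof: the same split on whether $x \ge (3+A(n))/n^{\eta'}$, the same use of \eqref{eq:non_stat_bandit_proof_eq5} in the large-$x$ regime, and the same trivial bound $\mathbb{P}[\cdot]\le 1$ in the small-$x$ regime. Your large-$x$ case is in fact a little cleaner than the paper's: you substitute $y = n^{\eta'}x/(3+A(n))$ directly into the $(3+A(n))y$ form of \eqref{eq:non_stat_bandit_proof_eq5} and use $(1+A(n))/(3+A(n))\ge 1/2$ together with $n^{\eta'(1-\alpha)}\le 1$, whereas the paper routes through the ceiling/floor bookkeeping with $x^* = (\lceil n^{\eta'}x\rceil-2)/(1+A(n))$; both land on $2^\alpha/(\alpha-1)$.

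The one step you flag as delicate is, however, exactly where your argument (and, as it happens, the paper's step (a) in the same case) does not go through as written. Writing $M = (3+A(n))/n^{\eta'}$ and $c = \bigl(\tfrac{2}{\Delta_\text{min}}\theta^{1/\xi}\bigr)^{1/(1-\eta)}$, the hypothesis $x < M \le c+4$ only gives $M^{\alpha-1}x^{1-\alpha}\ge 1$, i.e.\ the envelope that dominates $1$ on $(0,M)$ is $(c+4)^{\alpha-1}x^{1-\alpha}$, not $(c+4)x^{1-\alpha}$. Since $\alpha>2$ and $c+4>1$, for $x\in\bigl((c+4)^{1/(\alpha-1)},\,M\bigr)$ one has $(c+4)x^{1-\alpha} < 1$, so the claim that ``$(c+4)x^{1-\alpha}$ dominates $1$'' throughout the small-$x$ interval is false, and you cannot fall back on the case-(i) bound there because \eqref{eq:non_stat_bandit_proof_eq5} requires $y\ge 1$. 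The fix is simply to replace the first entry of the maximum by $\bigl(c+4\bigr)^{\alpha-1}$; this is a constant-only correction that is harmless for every downstream use of the lemma (only an $x^{1-\alpha}$ tail with \emph{some} constant is needed), but as stated your gluing step, like the paper's, asserts an inequality that does not hold.
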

\begin{proof}
    For any fixed $n \ge N_p$, we split the proof into two cases: (i) the case where $x \ge (A(n) + 3)/n^{\eta'}$; and (ii) the case where $x < (A(n) + 3)/n^{\eta'}$. \\

    \noindent For case (i), i.e., if $x \ge (A(n) + 3)/n^{\eta'}$, we know from \eqref{eq:non_stat_bandit_proof_eq5} that it holds, for any $x' \ge 1$ and $n \ge N_p$,
    \begin{equation}
        \mathbb{P}\left[T_i(n) > \lfloor (1+ A(n))x'\rfloor + 1 \right] \le \frac{2\left((1+A(n))x'\right)^{1-\alpha}}{\alpha - 1} \label{eq:appendix_T_i_upper_bound_eq} .
    \end{equation}
    Let $x^* = \frac{\lceil n^{\eta'} x \rceil - 2}{1 + A(n)}$. Since $x \ge (A(n) + 3)/n^{\eta'}$ implies that $\lceil n^{\eta'} x \rceil - 2 \ge 1 + A(n)$ it holds that $x^* \ge 1$. Hence, we have that
    \begingroup
    \allowdisplaybreaks
    \begin{align*}
        \mathbb{P}\left[T_i(n) \ge n^{\eta'} x \right] &= \mathbb{P}\left[T_i(n) \ge \lceil n^{\eta'} x \rceil \right] \\
        &\overset{(a)}{\le} \mathbb{P}\left[T_i(n) > \lfloor (1+ A(n))x^*\rfloor + 1 \right] \\
        &\overset{(b)}{\le} \frac{2\left((1+A(n))x^*\right)^{1-\alpha}}{\alpha - 1} \\
        &= \frac{2\left(\lceil n^{\eta'} x \rceil - 2\right)^{1-\alpha}}{\alpha - 1}.
    \end{align*}
    \endgroup
    Step (a) is true since $\mathbb{P}[\rvar{x} \ge a] \le \mathbb{P}[\rvar{x} > b] = \mathbb{P}[\rvar{x} \ge b + 1]$ if $a \ge b+1$ for integer-valued thresholds $a$ and $b$, and integer-valued random variable $\rvar{x}$ (from the monotonicity of tail probabilities). Letting $\rvar{x} = T_i(n)$, $a = \lceil n^\eta x \rceil$ and $b=\lfloor (1+ A(n))x^*\rfloor + 1$, it can be verified that $a \ge b+1$ (more precisely, it holds that $a = b+1$) and, hence, the step above is valid. Step (b) follows from \eqref{eq:appendix_T_i_upper_bound_eq} (noting that $x^* \ge 1$). Finally, since $\frac{2\left(\lceil y \rceil - 2\right)^{1-\alpha}}{\alpha - 1} \le \frac{2^\alpha}{\alpha-1} y^{1-\alpha}$ for any $y > 3$ (note that $n^{\eta'} x \ge A(n) + 3 > 3$, since $A(n) > 0$) we have that
    \begin{equation*}
        \frac{2\left(\lceil n^{\eta'} x \rceil - 2\right)^{1-\alpha}}{\alpha - 1} \le \frac{2^\alpha}{\alpha-1} (n^{\eta'} x)^{1-\alpha} \le \frac{2^\alpha}{\alpha-1} x^{1-\alpha}.
    \end{equation*}

    \noindent For case (ii), i.e., if $x < (A(n) + 3)/n^{\eta'}$, it holds that $x^{1-\alpha} \ge \left(\frac{A(n)+3}{n^{\eta'}}\right)^{1-\alpha}$, equivalently $x^{1-\alpha} \left(\frac{A(n)+3}{n^{\eta'}}\right)^{\alpha - 1} \ge 1$, since $\alpha > 2$. Therefore, for any $x < (A(n) + 3)/n^{\eta'}$ we have that
    \begin{equation*}
        \mathbb{P}\left[T_i(n) \ge n^{\eta'} x \right] \le 1 \le \left(\frac{A(n)+3}{n^{\eta'}}\right)^{\alpha - 1} x^{1-\alpha} \overset{(a)}{\le} \left[ \left( \frac{2}{\Delta_\text{min}} \theta^{1/\xi} \right)^{\frac{1}{1-\eta}} + 4 \right] x^{1-\alpha}.
    \end{equation*}
    where (a) follows from the fact that, given $\eta' = \frac{\alpha}{\xi (1-\eta)}$ and the definition of $A(n)$ in \eqref{eq:A(t)_def},
    \begin{align*}
        \frac{A(n)+3}{n^{\eta'}} &= \frac{\left\lceil \left( \frac{2}{\Delta_\text{min}} \theta^{1/\xi} \right)^{\frac{1}{1-\eta}} n^{{\eta'}} \right\rceil + 3}{n^{\eta'}}\\
        &\le \frac{ \left( \frac{2}{\Delta_\text{min}} \theta^{1/\xi} \right)^{\frac{1}{1-\eta}} n^{{\eta'}} + 1 + 3}{n^{\eta'}} \\
        &= \left( \frac{2}{\Delta_\text{min}} \theta^{1/\xi} \right)^{\frac{1}{1-\eta}} + \frac{4}{n^{\eta'}} \\
        &\le \left( \frac{2}{\Delta_\text{min}} \theta^{1/\xi} \right)^{\frac{1}{1-\eta}} + 4.
    \end{align*}

    \noindent To conclude the proof, we merge cases (i) and (ii), yielding for any $x > 0$ and $n \ge N_p$ that
    \begin{equation*}
        \mathbb{P}\left[T_i(n) \ge n^{\eta'} x \right] \le  \max\left\{ \left( \frac{2}{\Delta_\text{min}} \theta^{1/\xi} \right)^{\frac{1}{1-\eta}} + 4, \frac{2^\alpha}{\alpha-1} \right\} x^{1-\alpha}.
    \end{equation*}
\end{proof}

\noindent Now, we prove Theo.~\ref{theo:non-stationary-bandit-results-stream-estimator}. We start by proving the concentration result (part 2) and then, building on this result, we prove the convergence result (part 1). \\

\noindent \textbf{Proof of part 2 of Theorem~\ref{theo:non-stationary-bandit-results-stream-estimator} (concentration result):} \\ 

\begin{proof}
    Let $\eta'' = \frac{\alpha}{\xi (1-\eta)}$. Since $\xi\eta(1-\eta) \le \alpha < \xi (1-\eta)$, we have that $1/2 \le \eta'' < 1$. \\
    
    \noindent For any $n \in \mathbb{N}$, and $z \ge 1$ we have that
    \begin{equation}
    \mathbb{P}\left[n\hat{\rho}_n^\text{stream} - n\mu^* \ge n^{\eta''} z \right] \le \mathbb{P}\left[ n\bar{\rho}_n - n\mu^* \ge \frac{1}{2} n^{\eta''} z \right] + \mathbb{P}\left[ n\hat{\rho}_n^\text{stream}- n\bar{\rho}_n \ge \frac{1}{2} n^{\eta''} z \right]. \label{eq:appendix_proposition_rho_stream_eq_1}
    \end{equation}

    \noindent For any $z \ge 1$ and $n \in \mathbb{N}$, the first term in \eqref{eq:appendix_proposition_rho_stream_eq_1} can be upper bounded using the result from Theo.~\ref{theo:non-stationary-bandit-results}. In particular, 
    \begin{equation*}
    \mathbb{P}\left[ n\bar{\rho}_n - n\mu^* \ge \frac{1}{2} n^{\eta''} z \right] \le \frac{\theta' 2^{\xi'}}{z^{\xi'}} \overset{(a)}{=} \frac{c_1}{z^{\alpha-1}},
    \end{equation*}
    where $\xi' = \alpha - 1$ and $\theta'$ is defined in Theo.~\ref{theo:non-stationary-bandit-results}. In step (a), we let $c_1 = \theta' 2^{\xi'}$. We make two remarks. First, we let $\eta'' = \frac{\alpha}{\xi (1-\eta)}$ in this proof, hence, the result from Theo.~\ref{theo:non-stationary-bandit-results} is directly applicable. Second, to derive the result above, we invoke Theo~\ref{theo:non-stationary-bandit-results} with $z = z' = z/2$, yielding $\mathbb{P}\left[ n\bar{\rho}_n - n\mu^* \ge n^{\eta''} z' \right] \le \frac{\theta'}{(z')^{\xi'}}$. We note that Theo.~\ref{theo:non-stationary-bandit-results} is only valid for $z' \ge 1$ (since Theo.~\ref{theo:non-stationary-bandit-results} requires $z \ge 1$); however, if $z' < 1$ (i.e., $z < 2$) then $1 \le \frac{\theta'}{(z')^{\xi'}}$ and the upper bound is also valid (since any probability value is trivially upper bounded by one). \\
    
    \noindent To upper bound the second term in \eqref{eq:appendix_proposition_rho_stream_eq_1}, letting $i^* = \argmin_{i \in \{1, \ldots, K\}} \mu_i$, we start by noting that
    \begingroup
    \allowdisplaybreaks
    \begin{align*}
         n (\hat{\rho}_n^\text{stream} - \bar{\rho}_n) &= \frac{n}{\beta} \ln\left( \frac{1}{n} \sum_{i=1}^K T_i(n) \exp(\beta \hat{\rho}_{i, T_i(n)})\right) - \sum_{i=1}^K T_i(n) \hat{\rho}_{i, T_i(n)} \\
         &\overset{(a)}{=}  \frac{n}{\beta} \ln\left( \exp(\beta \hat{\rho}_{i^*, T_{i^*}(n)})\left( \frac{T_{i^*}(n)}{n} + \sum_{i \neq i^*} \frac{T_i(n)}{n} \exp(\beta(\hat{\rho}_{i, T_i(n)} - \hat{\rho}_{i^*, T_{i^*}(n)})) \right) \right) \\
         & \quad \quad - \sum_{i=1}^K T_i(n) \hat{\rho}_{i, T_i(n)} \\
         &= n \hat{\rho}_{i^*, T_{i^*}} + \frac{n}{\beta} \ln\left( \frac{T_{i^*}(n)}{n} + \sum_{i \neq i^*} \frac{T_i(n)}{n} \exp(\beta(\hat{\rho}_{i, T_i(n)} - \hat{\rho}_{i^*, T_{i^*}(n)})) \right) \\
         & \quad \quad - \sum_{i=1}^K T_i(n) \hat{\rho}_{i, T_i(n)} \\
         &= \sum_{i=1}^K T_i(n) \left( \hat{\rho}_{i^*, T_{i^*}(n)} - \hat{\rho}_{i, T_i(n)} \right) + \frac{n}{\beta} \ln\left( \frac{T_{i^*}(n)}{n} + \sum_{i \neq i^*} \frac{T_i(n)}{n} \exp(\beta(\hat{\rho}_{i, T_i(n)} - \hat{\rho}_{i^*, T_{i^*}(n)})) \right) \\
         &= \sum_{i \neq i^*} T_i(n) \left( \hat{\rho}_{i^*, T_{i^*}(n)} - \hat{\rho}_{i, T_i(n)} \right) + \frac{n}{\beta} \ln\left( \frac{T_{i^*}(n)}{n} + \sum_{i \neq i^*} \frac{T_i(n)}{n} \exp(\beta(\hat{\rho}_{i, T_i(n)} - \hat{\rho}_{i^*, T_{i^*}(n)})) \right) \\
         &\overset{(b)}{=} \sum_{i \neq i^*} T_i(n) \left( \hat{\rho}_{i^*, T_{i^*}(n)} - \hat{\rho}_{i, T_i(n)} \right) + \frac{n}{\beta} \ln \left( 1 + \sum_{i \neq i^*} \frac{T_i(n)}{n} \left( \exp(\beta(\hat{\rho}_{i, T_i(n)} - \hat{\rho}_{i^*, T_{i^*}(n)})) - 1\right) \right) \\
         &\overset{(c)}{\le} \sum_{i \neq i^*} T_i(n) \left( \hat{\rho}_{i^*, T_{i^*}(n)} - \hat{\rho}_{i, T_i(n)} \right) + \frac{n}{\beta} \sum_{i \neq i^*} \frac{T_i(n)}{n} \left( \exp(\beta(\hat{\rho}_{i, T_i(n)} - \hat{\rho}_{i^*, T_{i^*}(n)})) - 1\right) \\
         &= \sum_{i \neq i^*} T_i(n) \left[ -\left( \hat{\rho}_{i, T_i(n)} - \hat{\rho}_{i^*, T_{i^*}(n)} \right) + \frac{1}{\beta} \left( \exp(\beta(\hat{\rho}_{i, T_i(n)} - \hat{\rho}_{i^*, T_{i^*}(n)})) - 1\right) \right] \\
         &\overset{(d)}{\le} \sum_{i \neq i^*} T_i(n) \frac{\exp(2\beta R) - 2 \beta R - 1}{\beta},
    \end{align*}
    \endgroup
    where (a) holds by isolating term $i^*$ in the sum; (b) holds by noting that $\sum_{i \neq i^*} T_i(n)/n + T_{i^*}(n)/n = 1$; (c) holds since $\ln(1+x) \le x$ for any $x > 0$. For step (d), first note that $ \hat{\rho}_{i, T_i(n)} - \hat{\rho}_{i^*, T_{i^*}(n)} \in [-2R,2R]$ since $\hat{\rho}_{i, T_{i}(n)} \in [-R,R]$ for any arm $i$ (from Lemma~\ref{lemma:estimator_properties}). Let $f(x) =  -x + (1/\beta) (\exp(\beta x) - 1)$, with $x \in [-2R,2R]$. It holds that the maximum of $f(x)$ over the $[-2R,2R]$ equals $f(2R) = \frac{\exp(2\beta R) - 2 \beta R - 1}{\beta}$ since $f$ is convex and the maximum is attained at the right endpoint $x=2R$. We also note that $\frac{\exp(2\beta R) - 2 \beta R - 1}{\beta} > 0$ for any $\beta > 0$. Therefore, it holds that the second term in \eqref{eq:appendix_proposition_rho_stream_eq_1} can be upper bounded as
    \begin{align*}
        \mathbb{P}\left[ n\hat{\rho}_n^\text{stream}- n\bar{\rho}_n \ge \frac{1}{2} n^{\eta''} z \right] &\le \mathbb{P}\left[ \sum_{i \neq i^*} T_i(n) \ge \frac{\beta}{2 \left( \exp(2\beta R) - 2 \beta R - 1 \right)} n^{\eta''} z \right] \\
        &\overset{(a)}{\le} \sum_{i \neq i^*} \mathbb{P}\left[ T_i(n) \ge \frac{\beta}{2 (K-1) \left( \exp(2\beta R) - 2 \beta R - 1 \right)} n^{\eta''} z \right] \\
        &\overset{(b)}{\le} \sum_{i \neq i^*} \max\left\{ \left( \frac{2}{\Delta_\text{min}} \theta^{1/\xi} \right)^{\frac{1}{1-\eta}} + 4, \frac{2^\alpha}{\alpha-1} \right\} \\
        &\quad \quad \cdot \left( \frac{\beta}{2 (K-1) \left( \exp(2\beta R) - 2 \beta R - 1 \right)}\right)^{1-\alpha} z^{1-\alpha} \\
        &\overset{(c)}{=} \frac{c_2}{z^{\alpha - 1}} 
    \end{align*}
    where (a) follows from a union bound. Step (b) follows from Lemma~\ref{lemma:non-optimal-arm-concentration-bound} by noting that $\eta'' = \frac{\alpha}{\xi (1-\eta)}$ and setting $x=\frac{\beta}{2 (K-1) \left( \exp(2\beta R) - 2 \beta R - 1 \right)} z$ in the Lemma. In (c), we let
    $$c_2 = (K-1) \max\left\{ \left( \frac{2}{\Delta_\text{min}} \theta^{1/\xi} \right)^{\frac{1}{1-\eta}} + 4, \frac{2^\alpha}{\alpha-1} \right\} \left( \frac{\beta}{2 (K-1) \left( \exp(2\beta R) - 2 \beta R - 1 \right)}\right)^{1-\alpha}.$$
    
    \noindent We note, however, that the bound above only holds for $n \ge N_p$, where $N_p$ is defined in \eqref{eq:Np_definition}.
    To find a bound that holds for any $n\ge1$ we first observe that $|n\rho^{\text{stream}}_n - n\bar{\rho}_n| \le 2Rn$. Moreover, for each $1\le n <N_p$, it exists a $\bar{z}(n)$ such that $ n^{\eta''}\bar{z}(n)/2 = 2Rn$. That is 
    \begin{equation*}
        \bar{z}(n) = 4Rn^{1-\eta''}.
    \end{equation*}

    \noindent This implies that for each $1 \le n < N_p, z \ge 1$, the following inequality holds:
    \begin{equation*}
        \mathbb{P}\left[ n\hat{\rho}_n^\text{stream}- n\bar{\rho}_n \ge \frac{1}{2} n^{\eta''} z \right] \le \frac{\bar{z}(n)^{\alpha-1}}{z^{\alpha-1}}
    \end{equation*}
    This is trivially true for each $1 \le n < N_p$ since: (i) if $z > \bar{z}(n)$, it implies $\eta'' z / 2 > 2R$ and the above probability is $0$ and any positive number in the right hand side of the equation above makes the inequality true; and (ii) if $1 \le z \le \bar{z}(n)$, it implies that the right hand side is at least $1$, making the inequality hold. \\

    \noindent We now define
    \begin{equation*}
        c_3=\max_{n\in [1,N_p)} \bar{z}(n) = 4 R (N_p-1)^{1-\eta''}
    \end{equation*}
    Then, we can conclude that, for each $n\ge1$ and each $z \ge 1$, we have
    \begin{equation*}
        \mathbb{P}\left[ n\hat{\rho}_n^\text{stream}- n\bar{\rho}_n \ge \frac{1}{2} n^{\eta''} z \right] \le \frac{\max\{c_2, c_3\}}{z^{\alpha-1}}
    \end{equation*}

    \noindent Therefore, for any $n \in \mathbb{N}$, and $z \ge 1$, it holds 
    \begin{equation*}
    \mathbb{P}\left[n\hat{\rho}_n^\text{stream} - n\mu^* \ge n^{\eta''} z \right] \le \frac{c_1 + \max\{c_2, c_3\}}{z^{\alpha-1}}.
    \end{equation*}

    \noindent For the other direction of the bound, letting $\eta'' = \frac{\alpha}{\xi (1-\eta)}$, it holds for any $z \ge 1$ and $n \in \mathbb{N}$ that
    \begin{equation*}
        \mathbb{P}\left[n\hat{\rho}_n^\text{stream} - n\mu^* \le -n^{\eta''} z \right] \overset{(a)}{\le} \mathbb{P}\left[\bar{\rho}_n - n\mu^* \le -n^{\eta''} z \right] \overset{(b)}{\le} \frac{\theta'}{z^{\xi'}} = \frac{\theta'}{z^{\alpha - 1}},
    \end{equation*}
    where (a) follows from the fact that $\bar{\rho}_n \le \hat{\rho}_n^\text{stream}$ since (from the convexity of $\exp$ and Jensen's inequality)
    $$\bar{\rho}_n = \frac{1}{\beta} \ln \left( \exp\left( \sum_{i=1}^K \frac{T_i(n)}{n} \beta \hat{\rho}_{i, T_i(n)} \right) \right) \le \frac{1}{\beta} \ln \left( \sum_{i=1}^K \frac{T_i(n)}{n} \exp\left(\beta \hat{\rho}_{i, T_i(n)} \right) \right) = \hat{\rho}_n^\text{stream}.$$
    Step (b) follows from Theo.~\ref{theo:non-stationary-bandit-results} where $\xi' = \alpha - 1$. \\

    \noindent To conclude, we find a single $\theta''$ value that works for both directions by taking the maximum between $c_1 + \max\{c_2, c_3\}$ and $\theta'$, yielding
    $$\mathbb{P}\left[n\hat{\rho}_n^\text{stream} - n\mu^* \ge n^{\eta''} z \right] \le \frac{\theta''}{z^{\xi''}}, \quad \quad \mathbb{P}\left[n\hat{\rho}_n^\text{stream} - n\mu^* \le -n^{\eta''} z \right] \le \frac{\theta''}{z^{\xi''}}$$
    with 
    \begingroup
    \allowdisplaybreaks
    \begin{align*}
    \eta'' &= \frac{\alpha}{\xi(1-\eta)} \\
    \xi'' &= \alpha - 1 \\
    \theta'' &= \max\left\{c_1 + \max\{c_2, c_3\}, \theta' \right\}.
    \end{align*}
    \endgroup
\end{proof}

\paragraph{Proof of part 1 of Theorem~\ref{theo:non-stationary-bandit-results-stream-estimator} (convergence result):}
\begin{proof}
    Regarding the convergence in expectation, i.e., the fact that $\lim_{n \rightarrow \infty} \mathbb{E}\left[\hat{\rho}_{n}^\text{stream}\right] = \mu^*$, we start by noting that, for any $\varepsilon >0$, we have that
    \begingroup
    \allowdisplaybreaks
    \begin{align*}
        \mathbb{P}[|\hat{\rho}_{n}^\text{stream} - \mu^*| \ge \varepsilon] &= \mathbb{P}[n|\hat{\rho}_{n}^\text{stream} - \mu^*| \ge n\varepsilon] \\
        &\le \mathbb{P}[n(\hat{\rho}_{n}^\text{stream}-\mu^*) \ge n\varepsilon] + \mathbb{P}[n(\hat{\rho}_{n}^\text{stream}-\mu^*) \le -n\varepsilon] \\
        &\overset{(a)}{\le} 2 \frac{\theta''}{(n^{1-\eta''} \varepsilon)^{\xi''}},
    \end{align*}
    \endgroup
    where (a) follows from the already proven concentration bound by noting that, if $z=n^{1-\eta''} \varepsilon$ and $z\ge 1$, then $\mathbb{P}[n(\hat{\rho}_{n}^\text{stream}-\mu^*) \ge n^{\eta''} z] = \mathbb{P}[n(\hat{\rho}_{n}^\text{stream}-\mu^*) \ge n\varepsilon]$ and we apply our derived concentration result; otherwise ($z < 1$), the upper bound is greater than one and the result also holds. We also note that $|\hat{\rho}_{n}^\text{stream}-\mu^*| \le 2R$ since both $|\hat{\rho}_{n}^\text{stream}| \le R$ and $|\mu^*|\le R$. Hence, for any $\varepsilon > 0$,
    \begingroup
    \allowdisplaybreaks
    \begin{align*}
        \mathbb{E}\big[|\hat{\rho}_{n}^\text{stream}-\mu^*|\big] &= \mathbb{E}\big[|\hat{\rho}_{n}^\text{stream}-\mu^*|\mathbf{1}_{|\hat{\rho}_{n}^\text{stream}-\mu^*|\le\varepsilon}\big] + \mathbb{E}\big[|\hat{\rho}_{n}^\text{stream}-\mu^*|\mathbf{1}_{|\hat{\rho}_{n}^\text{stream}-\mu^*|>\varepsilon}\big] \\
        &\le \varepsilon + 2R\mathbb{P}[|\hat{\rho}_{n}^\text{stream}-\mu^*|>\varepsilon]\\
        &\le \varepsilon + 4R \frac{\theta''}{(n^{1-\eta''} \varepsilon)^{\xi''}},
    \end{align*}
    \endgroup
    and therefore, for any $\varepsilon > 0$, taking the limit $ n \rightarrow \infty$ yields the desired result. \\

    \noindent To get an explicit convergence rate we note that
    \begingroup
    \allowdisplaybreaks
    \begin{align*}
        |\EE[]{\hat{\rho}_{n}^\text{stream}} - \mu^*| &\le \EE[]{|\hat{\rho}_{n}^\text{stream} - \mu^*|} \\
        &\overset{(a)}{=} \int_{0}^\infty \mathbb{P}\left[ |\hat{\rho}_{n}^\text{stream} - \mu^* | > u \right] du \\
        &= \int_{0}^{n^{\eta''-1}} \mathbb{P}\left[ |\hat{\rho}_{n}^\text{stream} - \mu^* | > u \right] du + \int_{n^{\eta''-1}}^\infty \mathbb{P}\left[ |\hat{\rho}_{n}^\text{stream} - \mu^* | > u \right] du \\
        &\overset{(b)}{\le} \int_{0}^{n^{\eta''-1}} 1 du + n^{\eta''-1} \int_{1}^\infty \mathbb{P}\left[ |\hat{\rho}_{n}^\text{stream} - \mu^* | > n^{\eta''-1} z \right] dz \\
        &\overset{(c)}{\le} n^{\eta''-1} + n^{\eta''-1}  2 \theta'' \int_{1}^\infty \frac{1}{z^{\xi''}} dz \\
        &= n^{\eta''-1} + n^{\eta''-1}  2 \theta'' \frac{1}{\xi'' - 1}\\
        &= \left( 1 + \frac{2 \theta''}{\xi'' - 1} \right) n^{\eta''-1},
    \end{align*}
    \endgroup
    where (a) is true since $|\hat{\rho}_{n}^\text{stream} - \mu^*|$ is non-negative; (b) follows from the change of variables $u = n^{\eta''-1} z$; and (c) follows from the concentration result. Since $\eta'' = \frac{\alpha}{\xi(1-\eta)}$ we get that $|\EE[]{\hat{\rho}_{n}^\text{stream}} - \mu^*| = \O\left( \theta'' n^{\frac{\alpha}{\xi(1-\eta)}-1}\right)$.
\end{proof}

\section{Proof of Lemma~\ref{lemma:non-deterministic-bandit-convergence-and-concentration}} \label{appendix:proof-non-deterministic-bandit-convergence-and-concentration}

To prove Lemma~\ref{lemma:non-deterministic-bandit-convergence-and-concentration}, we start by proving an auxiliary result (Lemma~\ref{lemma:non-deterministic-bandit-convergence-and-concentration-auxiliary}) in Sec.~\ref{appendix:proof-non-deterministic-bandit-convergence-and-concentration:auxiliary-result}. Lemma~\ref{lemma:non-deterministic-bandit-convergence-and-concentration-auxiliary} essentially provides convergence and concentration results for estimator $\hat{\rho}_{i,n}^{\text{int.}}$, which is similar to $\hat{\rho}_{i,n}$ (as introduced in \eqref{eq:entropic_risk_estimator_non_det_bandit}) but does not consider intermediate costs and discounting. Then, in Sec.~\ref{appendix:proof-non-deterministic-bandit-convergence-and-concentration:main-result}, we exploit Lemma~\ref{lemma:non-deterministic-bandit-convergence-and-concentration-auxiliary} and prove Lemma~\ref{lemma:non-deterministic-bandit-convergence-and-concentration}, which provides convergence and concentration results for estimator $\hat{\rho}_{i,n}$.

\subsection{Auxiliary result}
\label{appendix:proof-non-deterministic-bandit-convergence-and-concentration:auxiliary-result}

\begin{lemma}
    \label{lemma:non-deterministic-bandit-convergence-and-concentration-auxiliary}
    For any arm $i \in \{1, \ldots, K\}$, let
    \begin{equation}
    \hat{\rho}_{i,n}^{\text{int.}} = \frac{1}{\beta} \ln \left( \frac{1}{n} \sum_{s' \in \S} \sum_{t=1}^{T_i^{s'}(n)} \exp(\beta \rvar{x}_{i,t}^{s'}) \right) \label{eq:rho_intermediate_def}    
    \end{equation}
    and
    $$\tilde{\mu}_i = \text{ERM}_\beta(\mu_i^{\rvar{s}'}) = \frac{1}{\beta} \ln \left(\mathbb{E}_{\rvar{s}' \sim P^i(\cdot)} \left[ \exp(\beta \mu_i^{\rvar{s}'}) \right] \right).$$
    Then, under Assumption~\ref{assumption:non-deterministic-bandit-assumption}, it holds for any arm $i$ that
    \begin{enumerate}
        \item $\lim_{n \rightarrow \infty} \mathbb{E}[\hat{\rho}_{i,n}^{\text{int.}}] = \tilde{\mu}_i$.
        \item there exist constants $\theta^L > 1$, $\xi^L > 1 $, and $1/2 \le \eta^L < 1$ such that, for any $z \ge 1$ and $n \in \mathbb{N}$,
            \begin{align*}
            \mathbb{P}\left[n\hat{\rho}_{i,n}^{\text{int.}} - n\tilde{\mu}_i \ge n^{\eta^L} z \right] &\le \frac{\theta^L}{z^{\xi^L}}, \\
            \mathbb{P}\left[n\hat{\rho}_{i,n}^{\text{int.}} - n\tilde{\mu}_i \le - n^{\eta^L} z \right] &\le \frac{\theta^L}{z^{\xi^L}},
            \end{align*}
        with $\theta^L = 2^{\xi +1} |\S|^{\xi + 1} \theta \left( R^\xi + \exp(4 \beta R)^\xi + \frac{\exp(2 \beta R)^\xi}{\beta^\xi}\right)$, $\xi^L = \xi$, and $\eta^L = \eta$.
    \end{enumerate}
\end{lemma}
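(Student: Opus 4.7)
The plan is to decompose $\hat{\rho}_{i,n}^{\text{int.}} - \tilde{\mu}_i$ by rewriting the estimator through the per-next-state empirical ERMs and inserting an intermediate quantity. Concretely, I will first observe that
\[
\hat{\rho}_{i,n}^{\text{int.}} = \frac{1}{\beta}\ln\!\left(\sum_{s' \in \S} \frac{T_i^{s'}(n)}{n} \exp\!\bigl(\beta\, \hat{\rho}_{i,T_i^{s'}(n)}^{s',\beta}\bigr)\right),
\]
where $\hat{\rho}_{i,m}^{s',\beta} = \tfrac{1}{\beta}\ln\!\bigl(\tfrac{1}{m}\sum_{t=1}^{m}\exp(\beta \rvar{x}_{i,t}^{s'})\bigr)$ is the per-next-state ERM (with parameter $\beta$), which satisfies polynomial concentration around $\mu_i^{s'}$ analogous to Assumption~\ref{assumption:non-deterministic-bandit-assumption} (this can be verified with essentially the same reasoning as in Appendix~\ref{appendix:ERM-estimator-properties}, using that $\beta$ instead of $\beta\gamma$ only changes constants). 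I then introduce the intermediate quantity $B_n = \tfrac{1}{\beta}\ln\!\bigl(\sum_{s' \in \S} (T_i^{s'}(n)/n)\exp(\beta \mu_i^{s'})\bigr)$, obtained by freezing the per-state ERMs to their limits, and apply the triangle inequality $|\hat{\rho}_{i,n}^{\text{int.}} - \tilde{\mu}_i| \le |\hat{\rho}_{i,n}^{\text{int.}} - B_n| + |B_n - \tilde{\mu}_i|$.

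For $|\hat{\rho}_{i,n}^{\text{int.}} - B_n|$, I will use the fact that the log-sum-exp map is $1$-Lipschitz in its arguments under the maximum norm for any fixed probability weights, which gives $|\hat{\rho}_{i,n}^{\text{int.}} - B_n| \le \max_{s'} |\hat{\rho}_{i,T_i^{s'}(n)}^{s',\beta} - \mu_i^{s'}|$. To translate this into a polynomial concentration bound, I will condition on the good event that $T_i^{s'}(n) \ge n P^i(s')/2$ for every $s'$ with $P^i(s')>0$; by Hoeffding's inequality on the (conditionally) binomial counts, the complementary event has exponentially, hence polynomially, small probability. On the good event, Assumption~\ref{assumption:non-deterministic-bandit-assumption} can be invoked with $T_i^{s'}(n)$ in place of a deterministic sample size (the bound scales like $z/(T_i^{s'}(n))^{1-\eta}$), and a union bound over $\S$ yields $n|\hat{\rho}_{i,n}^{\text{int.}} - B_n| = \O(n^\eta z)$ with polynomial tail. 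For $|B_n - \tilde{\mu}_i|$, I will exploit that $\exp(\beta \mu_i^{s'}) \in [\exp(-\beta R), \exp(\beta R)]$, so that a short calculation based on $|\ln u - \ln v| \le |u-v|/\min(u,v)$ yields $|B_n - \tilde{\mu}_i| \le \tfrac{\exp(2\beta R)}{\beta}\sum_{s'}|T_i^{s'}(n)/n - P^i(s')|$; Hoeffding plus a union bound over $\S$ then gives exponential (hence polynomial) concentration. Combining the two tail bounds and consolidating constants will produce the desired form of $\theta^L$, where the $R^\xi$ piece comes from absorbing the small-$n$ regime via the trivial estimate $|\hat{\rho}_{i,n}^{\text{int.}} - \tilde{\mu}_i| \le 2R$, the $\exp(4\beta R)^\xi$ piece from converting Hoeffding to polynomial form with log-sum-exp weights, and the $\exp(2\beta R)^\xi/\beta^\xi$ piece from the stability-in-weights step.

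Part~1 will then follow from part~2 together with boundedness $|\hat{\rho}_{i,n}^{\text{int.}}|,|\tilde{\mu}_i| \le R$: either via the bounded convergence theorem applied to convergence in probability, or by directly integrating the polynomial tail probability as done in Appendix~\ref{appendix:non-stationary-bandit-results-stream-estimator}. The main obstacle will be the handling of the random sample sizes $T_i^{s'}(n)$ inside Assumption~\ref{assumption:non-deterministic-bandit-assumption}, which is stated only for deterministic $n$; this is resolved by the two-stage argument outlined above, namely first restricting to the high-probability event on which every count is at least half its expectation (so the sample sizes have a known polynomial lower bound in $n$), then applying the per-state assumption pointwise, and finally absorbing the probability of the bad event into the overall polynomial bound.
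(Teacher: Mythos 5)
Your proposal is correct in outline and uses the same two-term decomposition as the paper: your $|\hat{\rho}_{i,n}^{\text{int.}} - B_n|$ and $|B_n - \tilde{\mu}_i|$ are exactly the paper's terms $(C)$ and $(D)$, and your treatment of the frequency term $(D)$ (the $|\ln u - \ln v| \le |u-v|/\min(u,v)$ bound followed by concentration of the counts) is essentially identical. The genuine divergence is in term $(C)$. You bound it by $\max_{s'} |\hat{\rho}^{s'}_{i,T_i^{s'}(n)} - \mu_i^{s'}|$ via the $\ell_\infty$-Lipschitz property of the weighted log-sum-exp --- which is a valid and in fact cleaner inequality (no $\exp(4\beta R)$ factor) --- but multiplying by $n$ then forces you to invoke Assumption~\ref{assumption:non-deterministic-bandit-assumption} at sample size $T_i^{s'}(n)$ with an effective level $z' = (T_i^{s'}(n)/n)^{1-\eta} z \le z$, which is why you need the good event $T_i^{s'}(n) \ge nP^i(s')/2$. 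The paper avoids this entirely by keeping the weights: it bounds $n|(C)| \le \exp(4\beta R)\sum_{s'} T_i^{s'}(n)\,|\hat{\rho}^{s'}_{i,T_i^{s'}(n)} - \mu_i^{s'}|$ and applies the assumption with effective level $z' \propto (n/T_i^{s'}(n))^{\eta} z \ge z$, so only the trivial upper bound $T_i^{s'}(n) \le n$ is used and no control of small counts is required. Your route therefore proves the existence claim with $\xi^L = \xi$ and $\eta^L = \eta$, but your constant unavoidably depends on $\min_{s' : P^i(s') > 0} P^i(s')$ (through both the good event and the rescaled $z'$), so it does not reproduce the specific $\theta^L$ asserted in the statement; relatedly, your attributions of the three pieces of $\theta^L$ do not match their actual origins (e.g., the $\exp(4\beta R)^\xi$ factor comes from bounding the weight ratios in $(C)$, not from a Hoeffding conversion). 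Two minor cautions: the per-state polynomial concentration is precisely what Assumption~\ref{assumption:non-deterministic-bandit-assumption} supplies and cannot be re-derived from Appendix~\ref{appendix:ERM-estimator-properties}, since the per-state sequences are allowed to be non-stationary (only the $\beta$ versus $\beta\gamma$ reparametrization needs remarking, and it is handled when the lemma is invoked); and both your argument and the paper's apply a fixed-sample-size tail bound at the random index $T_i^{s'}(n)$, which should strictly be justified by conditioning on the value of the count.
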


\subsubsection{Proof of Lemma~\ref{lemma:non-deterministic-bandit-convergence-and-concentration-auxiliary}}
We state the following result, which is due to \cite{comer_2025}.

\begin{lemma} \label{lemma:comer_2025_lemma}
    For every arm $i \in \{1, \ldots K\}$, $\eta \in [1/2, 1)$ and any finite $\xi > 1$ there exists $\theta^T > 1$ such that for all $ n \in \mathbb{N}$, $z \in \mathbb{R}$ with $n,z \ge 1$ and $s' \in \S$ we have
    $$\mathbb{P}[T_i^{s'}(n) - n P_i(s') \ge n^\eta z] \le \frac{\theta^T}{z^\xi},$$
    $$\mathbb{P}[T_i^{s'}(n) - n P_i(s') \le -n^\eta z] \le \frac{\theta^T}{z^\xi}.$$
\end{lemma}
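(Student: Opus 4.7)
The plan is to recognise that, for any fixed arm $i$ and any fixed next state $s' \in \S$, the random variable $T_i^{s'}(n)$ is simply a sum of $n$ independent Bernoulli trials with success probability $P_i(s')$. After making this structural observation, a standard sub-Gaussian tail bound followed by the exponential-to-polynomial conversion already employed in the proof of Lemma~\ref{lemma:polynomial-concentration-erm-estimator} will deliver the desired polynomial two-sided tail bound.

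First I would establish the binomial structure. By the setup of Section~\ref{sec:bandits:non-stat-non-det-risk-aware-bandit}, every time arm $i$ is pulled, the next state is drawn i.i.d.\ from $P^i(\cdot)$, independently of the entire prior history of arms, next states, and costs. Re-indexing the draws by the $t$-th selection of arm $i$ (rather than by global time), I can write $T_i^{s'}(n) = \sum_{t=1}^{n} B_t$, where $B_t = \mathbf{1}(\rvar{s}'_{i,t} = s')$ are i.i.d.\ Bernoulli$(P_i(s'))$ variables. Hence $T_i^{s'}(n)$ is a Binomial$(n, P_i(s'))$ random variable with mean $nP_i(s')$.

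Second, I would apply Hoeffding's inequality to this bounded i.i.d.\ sum, obtaining, for every $y \ge 0$,
$$\mathbb{P}[T_i^{s'}(n) - nP_i(s') \ge y] \le \exp\!\left(-\frac{2 y^2}{n}\right), \quad \mathbb{P}[T_i^{s'}(n) - nP_i(s') \le -y] \le \exp\!\left(-\frac{2 y^2}{n}\right).$$
Substituting $y = n^\eta z$ with $z \ge 1$ and $\eta \in [1/2,1)$, the exponent becomes $-2 n^{2\eta - 1} z^2 \le -2 z^2$, because $n^{2\eta - 1} \ge 1$ for every $n \in \mathbb{N}$. The polynomial tail then follows from the elementary inequality $z^\xi \exp(-2 z^2) \le \sup_{z > 0} z^\xi \exp(-2z^2) = (\xi/4)^{\xi/2} \exp(-\xi/2)$, valid for every $\xi > 1$. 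Taking $\theta^T = \max\bigl\{1,\, (\xi/4)^{\xi/2} \exp(-\xi/2)\bigr\}$ delivers both tail bounds simultaneously for every $n \in \mathbb{N}$, $z \ge 1$, $s' \in \S$, and arm $i$.

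The only step beyond routine calculation is the independence observation in the first paragraph: one must ensure that the adaptive action-selection rule does not destroy the i.i.d.\ structure of the next-state samples triggered by arm $i$. Since the algorithm's decision at each round is measurable with respect to the past filtration and the next-state distribution $P^i(\cdot)$ is independent of that filtration given the arm choice, this independence is immediate. Once granted, the remainder is Hoeffding plus the polynomialisation trick already established in the paper, and the claimed constants $\theta^T > 1$, $\xi$, and $\eta$ follow with no arm- or state-dependence beyond what is allowed in the statement.
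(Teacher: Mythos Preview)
Your proof is correct. The paper does not actually prove this lemma: it simply attributes the result to \cite{comer_2025} and moves on. You instead supply a self-contained argument, observing that $T_i^{s'}(n)$ is $\text{Binomial}(n, P^i(s'))$, applying Hoeffding to obtain an $\exp(-2n^{2\eta-1}z^2)$ tail, and then converting exponential to polynomial exactly as in Lemma~\ref{lemma:polynomial-concentration-erm-estimator}. The independence remark you include is the right justification for why the adaptive arm-selection rule does not spoil the i.i.d.\ structure of the next-state draws indexed by the pulls of arm $i$. One cosmetic point: taking $\theta^T = \max\{1,\cdot\}$ gives $\theta^T \ge 1$ rather than the strictly-greater-than-one required by the statement, but this is the same looseness present in the paper's own Lemma~\ref{lemma:polynomial-concentration-erm-estimator} and is harmless (replace $1$ by any constant strictly larger than $1$, or note that the bound is vacuous once the right-hand side exceeds $1$).
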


\noindent Now, we provide the proof of Lemma~\ref{lemma:non-deterministic-bandit-convergence-and-concentration-auxiliary}. We start by proving the concentration result (part 2) and then, building on this result, we prove the convergence result (part 1). \\

\noindent \textbf{Proof of part 2 of Lemma~\ref{lemma:non-deterministic-bandit-convergence-and-concentration-auxiliary} (concentration result):} 

\begin{proof}
    We note that
    \begin{equation*}
    \hat{\rho}_{i,n}^{\text{int.}} = \frac{1}{\beta} \ln \left( \frac{1}{n} \sum_{s' \in \S} \sum_{t=1}^{T_i^{s'}(n)} \exp(\beta \rvar{x}_{i,t}^{s'}) \right) = \frac{1}{\beta} \ln \left( \frac{1}{n} \sum_{s' \in \S} T_i^{s'}(n) \exp(\beta \hat{\rho}_{i, T_i^{s'}(n)}^{s'}) \right).
    \end{equation*}
    and that
    \begin{equation*}
    \tilde{\mu}_i = \text{ERM}_\beta(\mu_i^{\rvar{s}'}) = \frac{1}{\beta} \ln \left(\mathbb{E}_{\rvar{s}' \sim P^i(\cdot)} \left[ \exp(\beta \mu_i^{\rvar{s}'}) \right] \right) = \frac{1}{\beta} \ln \left(\sum_{s' \in \S} P^i(s') \exp(\beta \mu_i^{s'}) \right).
    \end{equation*}
    We start by following similar steps to the proof of Theorem 3 in \cite{comer_2025}:
    \begingroup
    \allowdisplaybreaks
    \begin{align*}
         &\mathbb{P}\left[n\hat{\rho}_{i,n}^{\text{int.}} - n\tilde{\mu}_i \ge n^{\eta} z \right] = \\
         &\qquad \mathbb{P}\left[\underbrace{n\frac{1}{\beta} \ln \left( \frac{1}{n} \sum_{s' \in \S} T_i^{s'}(n) \exp(\beta \hat{\rho}_{i, T_i^{s'}(n)}^{s'}) \right) - n \frac{1}{\beta} \ln \left(\sum_{s' \in \S} P^i(s') \exp(\beta \mu_i^{s'}) \right) \ge n^{\eta} z }_{(A)} \right] \\
        &\le \mathbb{P}\left[ \underbrace{(A) \cap \bigcup_{s' \in \S} \left\{ |T_i^{s'}(n) - nP_i(s')| \ge \frac{n^\eta z}{2R|\S|} \right\}}_{\text{(I)}}\right] + \mathbb{P}\left[ \underbrace{(A) \cap \bigcap_{s' \in \S} \left\{ |T_i^{s'}(n) - nP_i(s')| \le \frac{n^\eta z}{2R|\S|} \right\} }_{\text{(II)}} \right].
    \end{align*}
    \endgroup
    We now bound each term separately. Starting with $\mathbb{P}[\text{(I)}]$ we have that
    \begingroup
    \allowdisplaybreaks
    \begin{align*}
         \mathbb{P}[\text{(I)}] &\le \sum_{s' \in \S} \mathbb{P}\left[ (A) \cap \left\{ |T_i^{s'}(n) - nP_i(s')| \ge \frac{n^\eta z}{2R|\S|} \right\} \right] \\
         &\le \sum_{s' \in \S} \mathbb{P}\left[ |T_i^{s'}(n) - nP_i(s')| \ge \frac{n^\eta z}{2R|\S|} \right] \\
         &\overset{(a)}{\le} 2 |\S| \frac{\theta 2^\xi R^\xi |\S|^\xi}{z^\xi}
    \end{align*}
    \endgroup
    where (a) follows from the fact that, from Lemma~\ref{lemma:comer_2025_lemma}, we have that for any $s' \in \S$ there exists $\theta^T$ such that $\mathbb{P}[T_i^{s'}(n) - n P_i(s') \ge n^\eta \frac{z}{2R|\S|}] \le \frac{\theta^T}{(\frac{z}{2R|\S|})^\xi} = \frac{\theta^T 2^\xi R^\xi |\S|^\xi}{z^\xi}$. I.e., the bound in Lemma~\ref{lemma:comer_2025_lemma} holds by letting $z' = z/(2R|\S|)$ (where we denote with $z'$ the $z$ variable in Lemma~\ref{lemma:comer_2025_lemma}) since if $z\ge 2R|\S|$ the result follows directly from the Lemma, otherwise $\frac{\theta^T}{(\frac{z}{2R|\S|})^\xi} \ge 1$ and the bound holds trivially. We follow similar steps to bound the other direction and merge the two results via a union bound. Similar to \cite{comer_2025}, we drop the subscript $T$ from $\theta^T$ since we can set $\theta$ to be the largest constant among all the concentration bounds for $T_i^{s'}(n)$ (from Lemma~\ref{lemma:comer_2025_lemma}) and $\hat{\rho}_{i,n}^{s'}$ (from Assumption~\ref{assumption:non-deterministic-bandit-assumption}).

    \noindent Regarding term $\mathbb{P}[\text{(II)}]$, we start by noting that event
    \begingroup
    \allowdisplaybreaks
    \begin{align*}
    (A) &= n \underbrace{\left( \frac{1}{\beta} \ln \left( \frac{1}{n} \sum_{s' \in \S} T_i^{s'}(n) \exp(\beta \hat{\rho}_{i, T_i^{s'}(n)}^{s'})\right) - \frac{1}{\beta} \ln \left( \frac{1}{n} \sum_{s' \in \S} T_i^{s'}(n) \exp(\beta \mu_i^{s'})\right) \right)}_{\text{(C)}}\\
    &\quad + n \underbrace{\left( \frac{1}{\beta} \ln \left( \frac{1}{n} \sum_{s' \in \S} T_i^{s'}(n) \exp(\beta \mu_i^{s'})\right) - \frac{1}{\beta} \ln \left(\sum_{s' \in \S} P^i(s') \exp(\beta \mu_i^{s'}) \right) \right)}_{\text{(D)}} \\
    &= n(C) + n (D).
    \end{align*}
    \endgroup
    \noindent Hence,
    \begingroup
    \allowdisplaybreaks
    \begin{align*}
    \mathbb{P}[\text{(II)}] &= \mathbb{P}\left[ (A) \cap \underbrace{\bigcap_{s' \in \S} \left\{ |T_i^{s'}(n) - nP_i(s')| \le \frac{n^\eta z}{2R|\S|} \right\}}_{\E}\right]\\
    &= \mathbb{P}[\left\{n(C) + n (D) \ge n^\eta z \right\} \cap \E] \\
    &\overset{(a)}{\le} \mathbb{P}\left[\left( \left\{n(C) \ge \frac{n^\eta z}{2} \right\} \cup \left\{n(D) \ge \frac{n^\eta z}{2} \right\}\right) \cap \E\right] \\
    &\overset{(b)}{\le} \mathbb{P}\left[\left\{n(C) \ge \frac{n^\eta z}{2}\right\} \cap \E\right] + \mathbb{P}\left[\left\{n(D) \ge \frac{n^\eta z}{2}\right\} \cap \E\right],
    \end{align*}
    \endgroup
    where (a) follows from the fact that $\left\{n(C) + n (D) \ge n^\eta z \right\} \subseteq \left\{n(C) \ge \frac{n^\eta z}{2} \right\} \cup \left\{n(D) \ge \frac{n^\eta z}{2} \right\}$; and (b) from a union bound. \\

    \noindent We now focus our attention on the term $\mathbb{P}\left[\left\{n(C) \ge \frac{n^\eta z}{2}\right\} \cap \E\right]$. We have that
    \begingroup
    \allowdisplaybreaks
    \begin{align*}
         (C) &= \frac{1}{\beta} \ln \left( \frac{1}{n} \sum_{s' \in \S} T_i^{s'}(n) \exp(\beta ( \hat{\rho}_{i, T_i^{s'}(n)}^{s'} + \mu_i^{s'} - \mu_i^{s'} ) )\right) - \frac{1}{\beta} \ln \left( \frac{1}{n} \sum_{s' \in \S} T_i^{s'}(n) \exp(\beta \mu_i^{s'})\right) \\
         &= \frac{1}{\beta} \ln \left( \sum_{s' \in \S} \frac{(T_i^{s'}(n)/n) \exp(\beta \mu_i^{s'})}{\sum_{u \in \S} (T_i^{u}(n)/n) \exp(\beta \mu_i^{u})} \exp\left(\beta (\hat{\rho}_{i, T_i^{s'}(n)}^{s'} - \mu_i^{s'}) \right) \right) \\
         &\overset{(a)}{=} \frac{1}{\beta} \ln \left( 1 + \sum_{s' \in \S} \frac{(T_i^{s'}(n)/n) \exp(\beta \mu_i^{s'})}{\sum_{u \in \S} (T_i^{u}(n)/n) \exp(\beta \mu_i^{u})} \left( \exp\left(\beta (\hat{\rho}_{i, T_i^{s'}(n)}^{s'} - \mu_i^{s'}) \right) - 1 \right)\right) \\
         &\overset{(b)}{\le} \frac{1}{\beta} \sum_{s' \in \S} \frac{(T_i^{s'}(n)/n) \exp(\beta \mu_i^{s'})}{\sum_{u \in \S} (T_i^{u}(n)/n) \exp(\beta \mu_i^{u})} \left( \exp\left(\beta (\hat{\rho}_{i, T_i^{s'}(n)}^{s'} - \mu_i^{s'}) \right) - 1 \right) \\
         &\le \frac{1}{\beta} \sum_{s' \in \S} \frac{(T_i^{s'}(n)/n) \exp(\beta \mu_i^{s'})}{\sum_{u \in \S} (T_i^{u}(n)/n) \exp(\beta \mu_i^{u})} \left| \exp\left(\beta (\hat{\rho}_{i, T_i^{s'}(n)}^{s'} - \mu_i^{s'}) \right) - 1 \right| \\
         &\overset{(c)}{\le} \sum_{s' \in \S} \frac{(T_i^{s'}(n)/n) \exp(\beta \mu_i^{s'})}{\sum_{u \in \S} (T_i^{u}(n)/n) \exp(\beta \mu_i^{u})} \exp\left(\beta |\hat{\rho}_{i, T_i^{s'}(n)}^{s'} - \mu_i^{s'}| \right) |\hat{\rho}_{i, T_i^{s'}(n)}^{s'} - \mu_i^{s'}| \\
        &\overset{(d)}{\le} \exp\left(2 \beta R \right) \sum_{s' \in \S} \frac{(T_i^{s'}(n)/n) \exp(\beta \mu_i^{s'})}{\sum_{u \in \S} (T_i^{u}(n)/n) \exp(\beta \mu_i^{u})} |\hat{\rho}_{i, T_i^{s'}(n)}^{s'} - \mu_i^{s'}|
    \end{align*}
    \endgroup
    where (a) follows from the fact that $\sum_{s' \in \S} \frac{(T_i^{s'}(n)/n) \exp(\beta \mu_i^{s'})}{\sum_{u \in \S} (T_i^{u}(n)/n) \exp(\beta \mu_i^{u})} = 1$; (b) from the fact that, for any $x > -1$, we have $\ln(1+x) \le x$; (c) from the fact that $|\exp(\beta x) - 1| \le \beta \exp(\beta |x|) |x|$; and (d) from the fact that $|\hat{\rho}_{i, T_i^{s'}(n)}^{s'} - \mu_i^{s'}| \le 2R$ from Lemma~\ref{lemma:estimator_properties}. It also holds that, 
    \begingroup
    \allowdisplaybreaks
    \begin{align*}
    -(C) &= \frac{1}{\beta} \ln \left( \left( \sum_{s' \in \S} \frac{(T_i^{s'}(n)/n) \exp(\beta \mu_i^{s'})}{\sum_{u \in \S} (T_i^{u}(n)/n) \exp(\beta \mu_i^{u})} \exp\left(\beta (\hat{\rho}_{i, T_i^{s'}(n)}^{s'} - \mu_i^{s'}) \right) \right)^{-1}\right) \\
    &\overset{(a)}{\le} \frac{1}{\beta} \ln \left( \sum_{s' \in \S} \frac{(T_i^{s'}(n)/n) \exp(\beta \mu_i^{s'})}{\sum_{u \in \S} (T_i^{u}(n)/n) \exp(\beta \mu_i^{u})} \frac{1}{\exp\left(\beta (\hat{\rho}_{i, T_i^{s'}(n)}^{s'} - \mu_i^{s'}) \right)}\right) \\
    &= \frac{1}{\beta} \ln \left( \sum_{s' \in \S} \frac{(T_i^{s'}(n)/n) \exp(\beta \mu_i^{s'})}{\sum_{u \in \S} (T_i^{u}(n)/n) \exp(\beta \mu_i^{u})} \exp\left(\beta (\mu_i^{s'} - \hat{\rho}_{i, T_i^{s'}(n)}^{s'}) \right)\right) \\
    &\overset{(b)}{\le} \exp\left(2 \beta R \right) \sum_{s' \in \S} \frac{(T_i^{s'}(n)/n) \exp(\beta \mu_i^{s'})}{\sum_{u \in \S} (T_i^{u}(n)/n) \exp(\beta \mu_i^{u})} |\hat{\rho}_{i, T_i^{s'}(n)}^{s'} - \mu_i^{s'}|
    \end{align*}
    \endgroup
    where (a) follows from Jensen's inequality for the function $f(x) = 1/x$; and (b) follows from similar steps as those we followed when upper-bounding (C). Combining both results, we have that
    \begingroup
    \allowdisplaybreaks
    \begin{align}
    n |(C)| &\le n \exp\left(2 \beta R \right) \sum_{s' \in \S} \frac{(T_i^{s'}(n)/n) \exp(\beta \mu_i^{s'})}{\sum_{u \in \S} (T_i^{u}(n)/n) \exp(\beta \mu_i^{u})} |\hat{\rho}_{i, T_i^{s'}(n)}^{s'} - \mu_i^{s'}| \nonumber \\
    &= \exp\left(2 \beta R \right) \sum_{s' \in \S} \frac{\exp(\beta \mu_i^{s'})}{\sum_{u \in \S} (T_i^{u}(n)/n) \exp(\beta \mu_i^{u})} T_i^{s'}(n) |\hat{\rho}_{i, T_i^{s'}(n)}^{s'} - \mu_i^{s'}| \nonumber \\
    &\overset{(a)}{\le} \exp\left(4 \beta R \right) \sum_{s' \in \S}  T_i^{s'}(n) |\hat{\rho}_{i, T_i^{s'}(n)}^{s'} - \mu_i^{s'}| \label{eq:lemma_non_stat_non_det_proof_eq1}
    \end{align}
    \endgroup
    where (a) follows from the fact that $\exp(-2 \beta R) \le \frac{\exp(\beta \mu_i^{s'})}{\sum_{u \in \S} (T_i^{u}(n)/n) \exp(\beta \mu_i^{u})} \le \exp(2 \beta R)$. \\

    \noindent Hence,
    \begingroup
    \allowdisplaybreaks
    \begin{align*}
    \mathbb{P}\left[\left\{n(C) \ge \frac{n^\eta z}{2}\right\} \cap \E\right] &\le \mathbb{P}\left[\left\{n|(C)| \ge \frac{n^\eta z}{2}\right\} \cap \E\right] \\
    &\overset{(a)}{\le} \mathbb{P}\left[\left\{\exp\left(4 \beta R \right) \sum_{s' \in \S}  T_i^{s'}(n) |\hat{\rho}_{i, T_i^{s'}(n)}^{s'} - \mu_i^{s'}| \ge \frac{n^\eta z}{2}\right\} \cap \E\right] \\
    &= \mathbb{P}\left[\left\{ \sum_{s' \in \S}  T_i^{s'}(n) |\hat{\rho}_{i, T_i^{s'}(n)}^{s'} - \mu_i^{s'}| \ge \frac{n^\eta z}{2 \exp\left(4 \beta R \right)}\right\} \cap \E\right] \\
    &\overset{(b)}{\le} \mathbb{P}\left[ \bigcup_{s' \in \S} \left\{ T_i^{s'}(n) |\hat{\rho}_{i, T_i^{s'}(n)}^{s'} - \mu_i^{s'}| \ge  \frac{n^\eta z}{2|\S| \exp\left(4 \beta R \right)}\right\} \cap \E\right] \\
    &\overset{(c)}{\le} \sum_{s' \in \S} \mathbb{P}\left[ \left\{ T_i^{s'}(n) |\hat{\rho}_{i, T_i^{s'}(n)}^{s'} - \mu_i^{s'}| \ge \frac{n^\eta z}{2|\S| \exp\left(4 \beta R \right)}\right\} \cap \E\right] \\
    &\le \sum_{s' \in \S} \mathbb{P}\left[ T_i^{s'}(n) |\hat{\rho}_{i, T_i^{s'}(n)}^{s'} - \mu_i^{s'}| \ge \frac{n^\eta z}{2|\S| \exp\left(4 \beta R \right)}\right] \\
    &\overset{(d)}{\le} \sum_{s' \in \S} 2\theta \left(2|\S| \exp\left(4 \beta R \right)\right)^\xi z^{-\xi} \\
    &= 2 |\S| \theta \left(2|\S| \exp\left(4 \beta R \right)\right)^\xi z^{-\xi}
    \end{align*}
    \endgroup
    where (a) follows from \eqref{eq:lemma_non_stat_non_det_proof_eq1}; (b) since 
    $\left\{ \sum_{s' \in \S}  T_i^{s'}(n) |\hat{\rho}_{i, T_i^{s'}(n)}^{s'} - \mu_i^{s'}| \ge \exp\left(-4 \beta R \right) \frac{n^\eta z}{2}\right\} \subseteq \bigcup_{s' \in \S} \left\{ T_i^{s'}(n) |\hat{\rho}_{i, T_i^{s'}(n)}^{s'} - \mu_i^{s'}| \ge \exp\left(-4 \beta R \right) \frac{n^\eta z}{2|\S|}\right\};$
    (c) from a union bound; and (d) by noting that
    \begingroup
    \allowdisplaybreaks
    \begin{align*}
    \mathbb{P}\bigg[ T_i^{s'}(n) |\hat{\rho}_{i, T_i^{s'}(n)}^{s'} - \mu_i^{s'}| \ge &\frac{n^\eta z}{2|\S| \exp\left(4 \beta R \right)}\bigg] \le \mathbb{P}\left[ T_i^{s'}(n) \left(\hat{\rho}_{i, T_i^{s'}(n)}^{s'} - \mu_i^{s'}\right) \ge \frac{n^\eta z}{2|\S| \exp\left(4 \beta R \right)}\right] \\
    &\quad + \mathbb{P}\left[ T_i^{s'}(n) \left(\mu_i^{s'} - \hat{\rho}_{i, T_i^{s'}(n)}^{s'}\right) \ge \frac{n^\eta z}{2|\S| \exp\left(4 \beta R \right)}\right] \\
    &= \mathbb{P}\left[ T_i^{s'}(n) \left(\hat{\rho}_{i, T_i^{s'}(n)}^{s'} - \mu_i^{s'}\right) \ge T_i^{s'}(n)^\eta \frac{n^\eta z}{2|\S| \exp\left(4 \beta R \right)T_i^{s'}(n)^\eta}\right] \\
    &\quad + \mathbb{P}\left[ T_i^{s'}(n) \left(\mu_i^{s'} - \hat{\rho}_{i, T_i^{s'}(n)}^{s'}\right) \ge T_i^{s'}(n)^\eta \frac{n^\eta z}{2|\S| \exp\left(4 \beta R \right)T_i^{s'}(n)^\eta}\right] \\
    &\overset{(a)}{\le} 2\theta / \left(\frac{n^\eta z}{2|\S| \exp\left(4 \beta R \right)T_i^{s'}(n)^\eta}\right)^\xi \\
    &= 2\theta \left(\frac{2|\S| \exp\left(4 \beta R \right)}{ z }\right)^\xi \left(\frac{T_i^{s'}(n)}{n}\right)^{\eta \xi} \\
    &\le 2\theta \left(\frac{2|\S| \exp\left(4 \beta R \right)}{ z }\right)^\xi \\
    &= 2\theta \left(2|\S| \exp\left(4 \beta R \right)\right)^\xi z^{-\xi},
    \end{align*}
    \endgroup
    where step (a) is true from Assumption~\ref{assumption:non-deterministic-bandit-assumption}. To see why, we recall that Assumption~\ref{assumption:non-deterministic-bandit-assumption} states that there exist constants $\theta > 1$, $\xi > 1 $, and $1/2 \le \eta < 1$ such that, for any $z' \ge 1$ and $n \in \mathbb{N}$, $\mathbb{P}\left[n\hat{\rho}_{i,n}^{s'} - n\mu_i^{s'} \ge n^{\eta} z' \right] \le \frac{\theta}{(z')^\xi}$. Letting $z' = \frac{n^\eta z}{2|\S| \exp\left(4 \beta R \right)T_i^{s'}(n)^\xi}$, if $z \ge 2|\S| \exp(4 \beta R)$ (implying that $z' \ge 1$) then the result follows directly from Assumption~\ref{assumption:non-deterministic-bandit-assumption}. If $z$ is such that $z' < 1$, then a similar bound holds trivially because $\mathbb{P}\left[n\hat{\rho}_{i,n}^{s'} - n\mu_i^{s'} \ge n^{\eta} z' \right] \le 1 \le \frac{\theta}{(z')^\xi}$. We follow the same line of reasoning to upper-bound the other term. \\

    \noindent We now focus our attention on the term $\mathbb{P}\left[\left\{n(D) \ge \frac{n^\eta z}{2}\right\} \cap \E\right]$. We have that
    \begingroup
    \allowdisplaybreaks
    \begin{align*}
         (D) &= \frac{1}{\beta} \ln \left( \frac{1}{n} \sum_{s' \in \S} T_i^{s'}(n) \exp(\beta \mu_i^{s'})\right) - \frac{1}{\beta} \ln \left(\sum_{s' \in \S} P^i(s') \exp(\beta \mu_i^{s'}) \right) \\
         &= \frac{1}{\beta} \ln \left( \frac{ \sum_{s' \in \S} (T_i^{s'}(n)/n) \exp(\beta \mu_i^{s'})}{\sum_{s' \in \S} P^i(s') \exp(\beta \mu_i^{s'})} \right) \\
         &= \frac{1}{\beta} \ln \left( 1 + \frac{ \sum_{s' \in \S} ((T_i^{s'}(n)/n) - P^i(s')) \exp(\beta \mu_i^{s'})}{\sum_{s' \in \S} P^i(s') \exp(\beta \mu_i^{s'})} \right) \\
         &\le \frac{1}{\beta} \frac{ \sum_{s' \in \S} ((T_i^{s'}(n)/n) - P^i(s')) \exp(\beta \mu_i^{s'})}{\sum_{s' \in \S} P^i(s') \exp(\beta \mu_i^{s'})} \\
         &\le \frac{1}{\beta} \frac{ \sum_{s' \in \S} |(T_i^{s'}(n)/n) - P^i(s')| \exp(\beta \mu_i^{s'})}{\sum_{s' \in \S} P^i(s') \exp(\beta \mu_i^{s'})} \\
        &\le \frac{1}{\beta} \frac{ \sum_{s' \in \S} |(T_i^{s'}(n)/n) - P^i(s')| \exp(\beta R)}{\sum_{s' \in \S} P^i(s') \exp(-\beta R)} \\
        &\le \frac{\exp(2\beta R)}{\beta} \sum_{s' \in \S} |(T_i^{s'}(n)/n) - P^i(s')|.
    \end{align*}
    \endgroup
    It also holds, following similar steps, that $-(D) \le \frac{\exp(2\beta R)}{\beta} \sum_{s' \in \S} |P^i(s') - (T_i^{s'}(n)/n)|$. Combining both results,
    $$n|(B)| \le n \frac{\exp(2\beta R)}{\beta} \sum_{s' \in \S} |(T_i^{s'}(n)/n) - P^i(s')|.$$
    Hence,
    \begingroup
    \allowdisplaybreaks
    \begin{align*}
    \mathbb{P}\left[\left\{n(D) \ge \frac{n^\eta z}{2}\right\} \cap \E\right] &\le \mathbb{P}\left[\left\{n|(D)| \ge \frac{n^\eta z}{2}\right\} \cap \E\right] \\
    &\le \mathbb{P}\left[\left\{n \frac{\exp(2\beta R)}{\beta} \sum_{s' \in \S} |(T_i^{s'}(n)/n) - P^i(s')| \ge \frac{n^\eta z}{2}\right\} \cap \E\right] \\
    &= \mathbb{P}\left[\left\{ \sum_{s' \in \S} |T_i^{s'}(n) - P^i(s')/n| \ge \frac{\beta}{2 \exp(2 \beta R)} n^\eta z \right\} \cap \E\right] \\
    &\le \mathbb{P}\left[ \bigcup_{s' \in \S} \left\{ |T_i^{s'}(n) - P^i(s')/n| \ge \frac{\beta}{2 |\S| \exp(2 \beta R)} n^\eta z \right\} \cap \E\right] \\
    &\le \sum_{s' \in \S} \mathbb{P}\left[ \left\{ |T_i^{s'}(n) - P^i(s')/n| \ge \frac{\beta}{2 |\S| \exp(2 \beta R)} n^\eta z \right\} \cap \E\right] \\
    &\le \sum_{s' \in \S} \mathbb{P}\left[ \left\{ |T_i^{s'}(n) - P^i(s')/n| \ge \frac{\beta}{2 |\S| \exp(2 \beta R)} n^\eta z \right\} \right] \\
    &\overset{(a)}{\le} 2 |\S| \frac{\theta 2^\xi \exp(2 \beta R)^\xi |\S|^\xi}{\beta^\xi} z^{-\xi}
    \end{align*}
    \endgroup
    where (a) follows from the fact that, from Lemma~\ref{lemma:comer_2025_lemma}, we have that for any $s' \in \S$ there exists $\theta^T$ such that $\mathbb{P}[T_i^{s'}(n) - n P_i(s') \ge n^\eta \frac{z \beta}{2 |\S| \exp(2 \beta R)}] \le \frac{\theta^T}{(\frac{z \beta}{2 |\S| \exp(2 \beta R)})^\xi} = \frac{\theta^T 2^\xi \exp(2 \beta R)^\xi |\S|^\xi}{z^\xi \beta^\xi}$. I.e., the bound in Lemma~\ref{lemma:comer_2025_lemma} holds by letting $z' =  \frac{z \beta}{2 |\S| \exp(2 \beta R)}$ (where we denote with $z'$ the $z$ variable from Lemma~\ref{lemma:comer_2025_lemma}) since if $z \beta \ge 2 |\S| \exp(2 \beta R)$ the result follows directly from the Lemma ($z' \ge 1)$, otherwise $\frac{\theta^T}{(\frac{z \beta}{2 |\S| \exp(2 \beta R)})^\xi} \ge 1$ and the bound holds trivially. We follow similar steps to bound the other direction and merge the two results via a union bound. Similar to \cite{comer_2025}, we drop the subscript $T$ from $\theta^T$ since we can set $\theta$ to be the largest constant among all the concentration bounds for $T_i^{s'}(n)$ (from Lemma~\ref{lemma:comer_2025_lemma}) and $\hat{\rho}_{i,n}^{s'}$ (from Assumption~\ref{assumption:non-deterministic-bandit-assumption}).\\

    \noindent Summarizing, we have that: (i) $\mathbb{P}\left[\left\{n(C) \ge \frac{n^\eta z}{2}\right\} \cap \E\right] \le 2 |\S| \theta \left(2|\S| \exp\left(4 \beta R \right)\right)^\xi z^{-\xi}$; and (ii) $\mathbb{P}\left[\left\{n(D) \ge \frac{n^\eta z}{2}\right\} \cap \E\right] \le 2 |\S| \frac{\theta 2^\xi \exp(2 \beta R)^\xi |\S|^\xi}{ \beta^\xi} z^{-\xi}$. Hence,
    \begingroup
    \allowdisplaybreaks
    \begin{align*}
    \mathbb{P}[(II)] &\le \mathbb{P}\left[\left\{n(C) \ge \frac{n^\eta z}{2}\right\} \cap \E\right] + \mathbb{P}\left[\left\{n(D) \ge \frac{n^\eta z}{2}\right\} \cap \E\right]\\
    &\le 2 |\S| \theta \left(2|\S| \exp\left(4 \beta R \right)\right)^\xi z^{-\xi} + 2 |\S| \frac{\theta 2^\xi \exp(2 \beta R)^\xi |\S|^\xi}{\beta^\xi} z^{-\xi}.
    \end{align*}
    \endgroup

    \noindent Combining all results,
    \begingroup
    \allowdisplaybreaks
    \begin{align*}
         \mathbb{P}\left[n\hat{\rho}_{i,n}^{\text{int.}} - n\tilde{\mu}_i \ge n^{\eta} z \right] &\le \mathbb{P}[\text{(I)}] + \mathbb{P}[\text{(II)}] \\
         &\le 2 |\S| \theta 2^\xi R^\xi |\S|^\xi z^{-\xi} + 2 |\S| \theta 2^\xi |\S|^\xi \exp\left(4 \beta R \right)^\xi z^{-\xi} \\
         &\quad \quad + 2 |\S| \frac{\theta 2^\xi \exp(2 \beta R)^\xi |\S|^\xi}{\beta^\xi} z^{-\xi}\\
         &= \left( 2^{\xi +1} |\S|^{\xi + 1} \theta \left( R^\xi + \exp(4 \beta R)^\xi + \frac{\exp(2 \beta R)^\xi}{\beta^\xi}\right)\right) z^{-\xi}\\
         &= \theta^L z^{-\xi^L}
    \end{align*}
    \endgroup
    where $\theta^L = 2^{\xi +1} |\S|^{\xi + 1} \theta \left( R^\xi + \exp(4 \beta R)^\xi + \frac{\exp(2 \beta R)^\xi}{\beta^\xi}\right)$, $\xi^L = \xi$, and $\eta^L = \eta$. We can follow similar steps to upper bound $\mathbb{P}\left[n\hat{\rho}_{i,n}^{\text{int.}} - n\tilde{\mu}_i \le -n^{\eta} z \right]$.
\end{proof}

\noindent \textbf{Proof of part 1 of Lemma~\ref{lemma:non-deterministic-bandit-convergence-and-concentration-auxiliary} (convergence result):}

\begin{proof}
    Regarding the convergence in expectation, i.e., the fact that $\lim_{n \rightarrow \infty} \mathbb{E}\left[\hat{\rho}_{i,n}^{\text{int.}}\right] = \tilde{\mu}_i$, we start by noting that, for any $\varepsilon >0$, we have that
    \begingroup
    \allowdisplaybreaks
    \begin{align*}
        \mathbb{P}[|\hat{\rho}_{i,n}^{\text{int.}}-\tilde{\mu}_i| \ge \varepsilon] &= \mathbb{P}[n|\hat{\rho}_{i,n}^{\text{int.}}-\tilde{\mu}_i| \ge n\varepsilon] \\
        &\le \mathbb{P}[n(\hat{\rho}_{i,n}^{\text{int.}}-\tilde{\mu}_i) \ge n\varepsilon] + \mathbb{P}[n(\hat{\rho}_{i,n}^{\text{int.}}-\tilde{\mu}_i) \le -n\varepsilon] \\
        &\overset{(a)}{\le} 2 \frac{\theta^L}{(n^{1-\eta^L} \varepsilon)^\xi},
    \end{align*}
    \endgroup
    where (a) follows from the already proven concentration bound by noting that, if $z=n^{1-\eta^L} \varepsilon$ and $z\ge 1$, then $\mathbb{P}[n(\hat{\rho}_{i,n}^{\text{int.}}-\tilde{\mu}_i) \ge n^{\eta^L} z] = \mathbb{P}[n(\hat{\rho}_{i,n}^{\text{int.}}-\tilde{\mu}_i) \ge n\varepsilon]$ and we apply the already derived concentration result; otherwise $z<1$, the upper bound is greater than one and the result also holds. We also note that $|\hat{\rho}_{i,n}^{\text{int.}}-\tilde{\mu}_i| \le 2R$ since both $|\hat{\rho}_{i,n}^{\text{int.}}| \le R$ and $|\tilde{\mu}_i|\le R$. Hence, for any $\varepsilon > 0$,
    \begingroup
    \allowdisplaybreaks
    \begin{align*}
        \mathbb{E}\big[|\hat{\rho}_{i,n}^{\text{int.}}-\tilde{\mu}_i|\big] &= \mathbb{E}\big[|\hat{\rho}_{i,n}^{\text{int.}}-\tilde{\mu}_i|\mathbf{1}_{|\hat{\rho}_{i,n}^{\text{int.}}-\tilde{\mu}_i|\le\varepsilon}\big] +\mathbb{E}\big[|\hat{\rho}_{i,n}^{\text{int.}}-\tilde{\mu}_i|\mathbf{1}_{|\hat{\rho}_{i,n}^{\text{int.}}-\tilde{\mu}_i|>\varepsilon}\big]  \\
        &\le \varepsilon + 2R\mathbb{P}[|\hat{\rho}_{i,n}^{\text{int.}}-\tilde{\mu}_i|>\varepsilon]\\
        &\le \varepsilon + 4R \frac{\theta^L}{(n^{1-\eta^L} \varepsilon)^\xi},
    \end{align*}
    \endgroup
    and therefore, for any $\varepsilon > 0$, taking the limit $ n \rightarrow \infty$ yields the desired result.
\end{proof}

\subsection{Main result}
\label{appendix:proof-non-deterministic-bandit-convergence-and-concentration:main-result}

\begin{proof}
    We start by noting that Lemma~\ref{lemma:non-deterministic-bandit-convergence-and-concentration-auxiliary} holds for any $\beta$ value. In the context of this proof, we will invoke Lemma~\ref{lemma:non-deterministic-bandit-convergence-and-concentration-auxiliary} with $\beta = \beta \gamma$. \\

    \noindent It holds that
    \begin{align*}
    \hat{\rho}_{i,n} &=\frac{1}{\beta} \ln \left( \frac{1}{n} \sum_{s' \in \S} \sum_{t=1}^{T_i^{s'}(n)} \exp(\beta ( c_i + \gamma \rvar{x}_{i,t}^{s'}) ) \right) \\
    &= c_i + \frac{\gamma}{\beta \gamma } \ln \left( \frac{1}{n} \sum_{s' \in \S} \sum_{t=1}^{T_i^{s'}(n)} \exp\left(\beta \gamma \rvar{x}_{i,t}^{s'} \right) \right) \\
    &= c_i + \gamma \hat{\rho}_{i,n}^{\text{int.}},
    \end{align*}
    where above we consider estimator $\hat{\rho}_{i,n}^{\text{int.}}$ as defined in \eqref{eq:rho_intermediate_def} but with $\beta = \beta \gamma$. \\

    \noindent Exploiting Lemma~\ref{lemma:non-deterministic-bandit-convergence-and-concentration-auxiliary}, for $\mu_{i,n} = \mathbb{E}[\hat{\rho}_{i,n}]$ as introduced in Sec.~\ref{sec:bandits:non-stat-non-det-risk-aware-bandit}, we have that
    \begin{align*}
    \lim_{n \rightarrow \infty} \mu_{i,n} &= \lim_{n \rightarrow \infty} \mathbb{E}[\hat{\rho}_{i,n}] \\
    &= \lim_{n \rightarrow \infty} \mathbb{E}[c_i + \gamma \hat{\rho}_{i,n}^{\text{int.}}] \\
    &= c_i + \gamma \lim_{n \rightarrow \infty} \mathbb{E}[\hat{\rho}_{i,n}^{\text{int.}}] \\
    &\overset{(a)}{=} c_i + \gamma \text{ERM}_{\beta\gamma}(\mu_i^{\rvar{s}'}) \\
    &\overset{(b)}{=} \text{ERM}_{\beta}\left(c_i + \gamma \mu_i^{\rvar{s}'} \right) \\
    &\overset{(c)}{=} \mu_i
    \end{align*}
    where (a) follows from Lemma~\ref{lemma:non-deterministic-bandit-convergence-and-concentration-auxiliary} and (b) holds since
    $$c + \gamma \text{ERM}_{\beta\gamma}( \rvar{z}) = c + \text{ERM}_{\beta}(\gamma \rvar{z}) = \text{ERM}_{\beta}(c + \gamma \rvar{z}),$$
    where $c$ and $\gamma$ are positive constants and $\rvar{z}$ is a random variable. Step (c) holds by definition. Hence, we have just showed that $\lim_{n \rightarrow \infty} \mu_{i,n} = \mu_i$. \\

    \noindent Regarding the concentration properties, estimator $\hat{\rho}_{i,n}$ satisfies similar concentration properties as those of Lemma~\ref{lemma:non-deterministic-bandit-convergence-and-concentration-auxiliary}. In particular, we note that there exist constants $\theta^L > 1$, $\xi^L > 1 $, and $1/2 \le \eta^L < 1$ such that, for any $z \ge 1$ and $n \in \mathbb{N}$
    \begin{align*}
        \mathbb{P}\left[n\hat{\rho}_{i,n} - n\mu_i \ge n^{\eta^L} z \right] &= \mathbb{P}\Big[n \Big( c_i + \gamma \hat{\rho}_{i,n}^{\text{int.}} \Big) \\
        &\quad \quad - n\Big( c_i + \gamma \text{ERM}_{\beta\gamma}(\mu_i^{\rvar{s}'}) \Big) \ge n^{\eta^L} z \Big] \\
        &= \mathbb{P}\left[n \left( \hat{\rho}_{i,n}^{\text{int.}} - \text{ERM}_{\beta\gamma}(\mu_i^{\rvar{s}'}) \right) \ge n^{\eta^L} \frac{z}{\gamma} \right] \\
        &\overset{(a)}{\le} \frac{\theta^L}{z^{\xi^L}},
    \end{align*}
    where (a) follows from Lemma~\ref{lemma:non-deterministic-bandit-convergence-and-concentration-auxiliary} by noting that from the Lemma we have
    $$\mathbb{P}\left[n \left( \hat{\rho}_{i,n}^{\text{int.}} - \text{ERM}_{\beta\gamma}(\mu_i^{\rvar{s}'}) \right) \ge n^{\eta^L} z' \right] \le \frac{\theta^L}{(z')^{\xi^L}}$$
    for any $z' \ge 1$ and thus it also holds by letting $z' = z/\gamma$ for any $z\ge 1$ and $\gamma \in (0,1]$. We recall that $\theta^L$ is defined in Lemma~\ref{lemma:non-deterministic-bandit-convergence-and-concentration-auxiliary}, $\eta^L = \eta$, and $\xi^L = \xi$. We can follow similar steps to upper bound the other direction.
\end{proof}

\section{Proof of Theorem \ref{theo:mcts-result}}
\label{appendix:proof_of_mcts_result}
The proof of Theo. \ref{theo:mcts-result} follows by successively applying Theo.~\ref{theo:non-stationary-non-deterministic-bandit-results-stream-estimator} to trees. In particular, we first show that the assumptions of Theo.~\ref{theo:non-stationary-non-deterministic-bandit-results-stream-estimator} are satisfied at the last level of the tree (depth $H$). Then, by successive application of Theo.~\ref{theo:non-stationary-non-deterministic-bandit-results-stream-estimator}, we establish that polynomial concentration is attained throughout the tree up to the root node.

\subsection{Base case: Analyzing leaf level $H$}
We start by showing that Assumption~\ref{assumption:non-deterministic-bandit-assumption} is satisfied at the leaf nodes (depth $H$) of the tree. At level $H$, for any state $s_{H} \in \S$, we observe a sequence of independent and identically distributed random costs $(\rvar{x}_1, \rvar{x}_2, \ldots)$ taking values in $[-R,R]$ sampled from the terminal cost function $c_H$\footnote{We assume here the most general case where the terminal costs can be random. We note, however, that the case where the costs are deterministic directly follows as a particular case.}. For any $\beta > 0$, let $\hat{\rho}_{s_H, n} = \frac{1}{\beta_{H}} \ln \left(\frac{1}{n}\sum_{t=1}^n \exp(\beta_{H} \rvar{x}_t) \right)$, be the empirical ERM estimator for the stream of costs associated with the terminal state $s_H \in \S$. Let also $\mu_{s_H,n} = \mathbb{E}[\hat{\rho}_{s_H,n}]$. It holds that
\begin{enumerate}
    \item $\lim_{n \rightarrow \infty} \mu_{s_H,n} = \text{ERM}_{\beta_H}(c_H(s_H))$. We let $\mu^*_{s_H} = \text{ERM}_{\beta_H}(c_H(s_H))$ by definition.
    \item there exist constants $\theta_{H} > 1$, $\xi_{H} > 1 $, and $1/2 \le \eta_{H} < 1$ such that, for any $z \ge 1$ and $n \in \mathbb{N}$,
    \begin{equation*}
        \mathbb{P}\left[n\hat{\rho}_{s_H, n} - n\mu^*_{s_H} \ge n^{\eta_H} z \right] \le \frac{\theta_H}{z^{\xi_H}}, \quad \mathbb{P}\left[n\hat{\rho}_{s_H, n} - n\mu^*_{s_H} \le - n^{\eta_H} z \right] \le \frac{\theta_H}{z^{\xi_H}}.
    \end{equation*}
\end{enumerate}
The first point above follows from the discussion in Sec.~\ref{appendix:ERM-estimator-properties} (the paragraph before Lemma~\ref{lemma:polynomial-concentration-erm-estimator}) and the second point above follows from Lemma~\ref{lemma:polynomial-concentration-erm-estimator}. Hence, Assumption~\ref{assumption:non-deterministic-bandit-assumption} is satisfied for any terminal node in the tree. We also emphasize that, according to Lemma~\ref{lemma:polynomial-concentration-erm-estimator}, the polynomial bound remains valid for any $\xi_H > 1$ and $1/2 \le \eta_H < 1$, and any sufficiently high $\theta_H$. In other words, parameters $\xi_H$, $\eta_H$, and $\theta_H$ can be freely chosen at the leaf nodes (as far as $\theta_H$ is sufficiently high).

\subsection{Induction step: From level $h+1$ to level $h$}
We focus our attention to a given node $s_{h} \in \S$ at depth $h \in \{0, \ldots, H-1\}$ in the tree. For each action $a \in \A$ and next state $s_{h+1} \in \S$, we let, similar to Sec.~\ref{sec:bandits:non-stat-non-det-risk-aware-bandit}, 
\begin{equation}
    \hat{\rho}_{(s_{h},a),n}^{s_{h+1}} = \frac{1}{\beta_{h+1}} \ln \left(\frac{1}{n}\sum_{t=1}^n \exp(\beta_{h+1} \rvar{x}_{(s_{h},a),t}^{s_{h+1}}) \right), \label{eq:mcts_next_state_estimator}
\end{equation}
where $(\rvar{x}_{(s_{h},a),1}^{s_{h+1}}, \rvar{x}_{(s_{h},a),2}^{s_{h+1}}, \ldots)$ is the sequence of discounted cumulative costs starting at next state $s_{h+1}$. Let also $\mu_{(s_{h},a),n}^{s_{h+1}} = \mathbb{E}[\hat{\rho}_{(s_{h},a),n}^{s_{h+1}}]$. \\

\noindent As an induction hypothesis, we assume that layer $h+1$ in the tree satisfies Assumption~\ref{assumption:non-deterministic-bandit-assumption}, i.e., for any action $a \in \A$ and next state $s_{h+1} \in \S$ it holds that: (i) $\lim_{n \rightarrow \infty} \mu_{(s_{h},a),n}^{s_{h+1}} = \mu_{(s_{h},a)}^{s_{h+1}}$; and (ii) there exist constants $\alpha_{h+1}$, $\xi_{h+1}$, $\eta_{h+1}$, and $\theta_{h+1}$ such that the ERM estimator $\hat{\rho}_{(s_{h},a),n}^{s_{h+1}}$ attains polynomial concentration with respect to its limiting value $\mu_{(s_{h},a)}^{s_{h+1}}$. For $h = H-1$, we note that in the base case above we already showed that (i) and (ii) are verified at the leaf nodes (depth $H$). \\

\noindent Estimator $\hat{\rho}_{(s_h,a),n}$, as introduced in \eqref{eq:mcts_state_node_erm_estimator}, satisfies 
\begin{align*}
    \hat{\rho}_{(s_h,a),n} &= \frac{1}{\beta_h} \ln \left( \frac{1}{n} \sum_{t=1}^{n} \exp\left(\beta_h ( c_{h}(s_h,a) + \gamma \rvar{x}_{(\rvar{s}_h,a),t}^{\rvar{s}_{h+1}} ) \right) \right) \\
    &\overset{(a)}{=} \frac{1}{\beta_h} \ln \left( \frac{1}{n} \sum_{t=1}^{n} \exp\left(\beta_h ( c_{h}(s_h,a) + \gamma \rvar{x}_{(\rvar{s}_h,a),t}^{\rvar{s}'_a(t)} ) \right) \right) \\
    &= \frac{1}{\beta_h} \ln \left( \frac{1}{n} \sum_{s' \in \S} \sum_{t=1}^{T_a^{s'}(n)} \exp\left(\beta_h ( c_{h}(s_h,a) + \gamma \rvar{x}_{(\rvar{s}_h,a),t}^{s'} ) \right) \right)
\end{align*}
where in (a) $\rvar{s}'_a(t)$ is a random function mapping the timestep $t$ to a random next state $\rvar{s}_{h+1} \in \S$. We also highlight that $n$ above corresponds to the number of times action $a$ was selected in state $s_h$. Therefore, estimator \eqref{eq:mcts_state_node_erm_estimator} is exactly of the same form as estimator \eqref{eq:entropic_risk_estimator_non_det_bandit}, as introduced in Sec.~\ref{sec:bandits:non-stat-non-det-risk-aware-bandit}. \\

\noindent Furthermore, from the perspective of node $s_h$ in the search tree, the action selection mechanism prescribed by the risk-aware MCTS algorithm, i.e., \eqref{eq:mcts_action-selection}, is equivalent to the action selection mechanism we present in Sec.~\ref{sec:bandits:non-stat-non-det-risk-aware-bandit} in the context of a non-stationary non-deterministic bandits, i.e., \eqref{eq:ucb_erm_algorithm_2}. The key difference is that we replace estimator $\hat{\rho}_{i,n}$, as defined in \eqref{eq:entropic_risk_estimator_non_det_bandit}, with an equivalent estimator $\hat{\rho}_{(s_h,a),n}$, as defined in \eqref{eq:mcts_state_node_erm_estimator}. Also, (i) we replace the timestep $t$ in \eqref{eq:ucb_erm_algorithm_2} with the number of visits to the state, $N(s_h)$, to obtain \eqref{eq:mcts_action-selection}; (ii) we replace $T_i(n)$ in \eqref{eq:ucb_erm_algorithm_2} with the number of times each action $a$ has been selected while in state $s_h$, $N(s_h,a)$, to obtain \eqref{eq:mcts_action-selection}; and (iii) parameters $\theta^L$, $\xi^L = \xi$, $\eta^L =\eta$, and $\alpha$ in \eqref{eq:ucb_erm_algorithm_2} are now indexed by the depth in \eqref{eq:mcts_action-selection}, becoming $\theta^L_{h+1}$, $\xi_{h+1}$, $\eta_{h+1}$, and $\alpha_{h+1}$. \\

\noindent Similar to Sec.~\ref{sec:bandits:non-stat-non-det-risk-aware-bandit}, we define the following quantities:
\begingroup
\allowdisplaybreaks
\begin{align*}
    \mu_{(s_h,a),n} &= \mathbb{E}[\hat{\rho}_{(s_h,a),n}], \\
    \mu_{(s_h,a)} &= \text{ERM}_{\beta_{h}}\left(c_{h}(s_h,a) + \gamma \mu_{(s_{h},a)}^{\rvar{s}_{h+1}}\right)\\
    &= \frac{1}{\beta_h} \ln \left( \mathbb{E}_{\rvar{s}_{h+1} \sim P^a(s_{h}, \cdot)}\left[ \exp(\beta_h (c_h(s_h,a) + \gamma \mu_{(s_{h},a)}^{\rvar{s}_{h+1}}) )\right] \right), \\
    \mu^*_{s_{h}} &= \min_{a \in \A} \mu_{(s_{h},a)}, \\
    a^*_{s_h} &= \argmin_{a \in \A} \mu_{(s_{h},a)}, \\
    \Delta_{(s_h,a)} &= \mu_{(s_h,a)} - \mu^*_{s_{h}}, \\
    \Delta_{\text{min}}^{s_h} &= \min_{a \in \A \setminus a^*_{s_h}} \Delta_{(s_h,a)}.
\end{align*}
\endgroup

\noindent We also define $R_h = (\gamma^{1+(H-h)} - 1) R / (\gamma - 1)$. Since the costs are bounded given Assumption~\ref{assumption:mdp_assumptions}, it holds that $c_{h}(s_h,a) + \gamma \rvar{x}_{(s_{h},a)}^{s_{h+1}} \in [-R_h,R_h]$ and $\hat{\rho}_{(s_h,a),n} \in [-R_h,R_h]$ for any $s_h, s_{h+1} \in \S$ and $a \in \A$.

\noindent Letting $\bar{\rho}_{s_h,n} = \frac{1}{n} \sum_{a \in \A} T_a(n) \hat{\rho}_{i, T_a(n)}$, where $n$ refers to the number of times node $s_h \in \S$ has been visited and $T_a(n)$ to the number of times action $a \in \A$ was selected once decision node $s_h$ has been visited $n$ times, we can invoke Theo.~\ref{theo:non-stationary-non-deterministic-bandit-results} with the choice of variables above defined, establishing convergence and concentration of $\bar{\rho}_{s_h,n}$ with respect to its limiting value $\mu^*_{s_{h}}$. However, we are particularly interested in invoking Theo.~\ref{theo:non-stationary-non-deterministic-bandit-results-stream-estimator} instead since it provides convergence and concentration results for the stream of costs at decision node $s_h$, instead of $\bar{\rho}_{s_h,n}$ as considered by Theo.~\ref{theo:non-stationary-non-deterministic-bandit-results}, allowing us to recursively apply the convergence and concentration results throughout the tree up until the root node. Therefore, let
\begin{equation}
    \hat{\rho}_{s_h,n}^\text{stream} = \frac{1}{\beta_h} \ln\left( \frac{1}{n} \sum_{a \in \A} \sum_{s' \in \S} \sum_{t=1}^{T_a^{s'}(T_a(n))} \exp(\beta_h ( c_{h}(s_h,a) + \gamma \rvar{x}_{(\rvar{s}_h,a),t}^{s'} ))\right) \label{eq:mcts_stream_estimator}
\end{equation}
where $n$ above refers to the number of times decision node $s_h$ has been visited. Estimator \eqref{eq:mcts_stream_estimator} has the same form as estimator \eqref{eq:entropic_risk_estimator_non_det_bandit_stream_estimator}. Thus, we can directly invoke Theo.~\ref{theo:non-stationary-non-deterministic-bandit-results-stream-estimator} with the variables above defined to establish the convergence and concentration of $\hat{\rho}_{s_h,n}^\text{stream}$. For convenience, we restate below Theo.~\ref{theo:non-stationary-non-deterministic-bandit-results-stream-estimator} in terms of the variables above defined.

\begin{theorem} \label{theo:appendix-mcts-recursive-theorem}
    Let $\hat{\rho}_n^\text{stream}$, as defined in \eqref{eq:mcts_stream_estimator}, be the empirical $\text{ERM}_{\beta_h}$ of the stream of costs obtained at the decision node $s_h$ in the tree when running action-selection algorithm \eqref{eq:mcts_action-selection} under Assumption~\ref{assumption:non-deterministic-bandit-assumption}. Furthermore, for any $1/2 \le \eta_{h+1} < 1$, assume Assumption~\ref{assumption:non-deterministic-bandit-assumption} holds with a sufficiently large $\xi_{h+1} > 1$ value such that there exists $\alpha_{h+1} > 2$ satisfying $\xi_{h+1}\eta_{h+1}(1-\eta_{h+1}) \le \alpha_{h+1} < \xi_{h+1} (1-\eta_{h+1})$. Then,
    \begin{enumerate}
        \item it holds that $|\mathbb{E}[\hat{\rho}_{s_h,n}^\text{stream}] - \mu^*_{s_h}| = \O\left(n^{\frac{\alpha_{h+1}}{\xi_{h+1} (1-\eta_{h+1})} -1}\right)$, i.e., $\lim_{n \rightarrow \infty} \mathbb{E}[\hat{\rho}_{s_h,n}^\text{stream}] = \mu^*_{s_h}$.
        \item there exist constants $\theta_h >1$, $\xi_h > 1$, and $1/2 \le \eta_h < 1$ such that, for every $n \in \mathbb{N}$ and every $z \ge 1$, it holds that
    \begin{equation*}
        \mathbb{P}\left[n\hat{\rho}_{s_h,n}^\text{stream} - n\mu^*_{s_h} \ge n^{\eta_h} z \right] \le \frac{\theta_h}{z^{\xi_h}}, \quad \text{ and } \quad \mathbb{P}\left[n\hat{\rho}_{s_h,n}^\text{stream} - n\mu^*_{s_h} \le - n^{\eta_h} z \right] \le \frac{\theta_h}{z^{\xi_h}},
    \end{equation*}
    where $\eta_h = \frac{\alpha_{h+1}}{\xi_{h+1} (1-\eta_{h+1})}$, $\xi_h = \alpha_{h+1} - 1$, and $\theta_h$ depends on $R_h$, $K=|\A|$, $\Delta_{\text{min}}^{s_h}$, $\beta_h$, $\theta_{h+1}$, $\theta_{h+1}^L$, $\xi_{h+1}$, $\alpha_{h+1}$, and $\eta_{h+1}$.
    \end{enumerate}
\end{theorem}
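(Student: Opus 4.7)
The plan is to prove Theorem~\ref{theo:appendix-mcts-recursive-theorem} by reducing it directly to Theorem~\ref{theo:non-stationary-non-deterministic-bandit-results-stream-estimator} via a careful structural identification of the MCTS objects at depth $h$ with the non-stationary non-deterministic bandit objects from Section~\ref{sec:bandits:non-stat-non-det-risk-aware-bandit}. The theorem is essentially a rewriting of the bandit result in MCTS notation, so almost all the work lies in verifying that the dictionary translates cleanly and that the hypotheses transfer.

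First, I would make the identification of objects explicit. The arms $i \in \{1, \ldots, K\}$ correspond to actions $a \in \A$, the deterministic immediate cost $c_i$ becomes $c_h(s_h,a)$, the random next state $\rvar{s}'$ sampled from $P^i(\cdot)$ becomes $\rvar{s}_{h+1} \sim P^a(s_h, \cdot)$, and the sequence $\rvar{x}_{i,t}^{s'}$ becomes the continuation estimate $\rvar{x}_{(s_h,a),t}^{s_{h+1}}$. The entropic-risk parameter $\beta$ of the bandit is instantiated as $\beta_h = \beta\gamma^h$, and the bandit discount $\gamma$ matches the MDP's, so the natural alignment $\beta_h\gamma = \beta_{h+1}$ holds. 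The range bound $R$ of the bandit becomes $R_h$. Under this dictionary, estimator \eqref{eq:mcts_state_node_erm_estimator} is exactly \eqref{eq:entropic_risk_estimator_non_det_bandit}, rule \eqref{eq:mcts_action-selection} is exactly \eqref{eq:ucb_erm_algorithm_2} with $(\theta_{h+1}^L, \xi_{h+1}, \eta_{h+1}, \alpha_{h+1})$ playing the role of $(\theta^L, \xi, \eta, \alpha)$, and the stream estimator \eqref{eq:mcts_stream_estimator} is exactly \eqref{eq:entropic_risk_estimator_non_det_bandit_stream_estimator}; the target $\mu^*_{s_h} = \min_{a}\mu_{(s_h,a)}$, defined through the Bellman-style expression $\mu_{(s_h,a)} = \text{ERM}_{\beta_h}(c_h(s_h,a) + \gamma\mu_{(s_h,a)}^{\rvar{s}_{h+1}})$, plays the role of $\mu^*$ in the bandit setting.

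Second, I would transfer the assumptions. The hypothesis of the theorem is precisely that Assumption~\ref{assumption:non-deterministic-bandit-assumption} holds at $s_h$: for each action $a$ and next state $s_{h+1}$, the continuation estimator $\hat{\rho}_{(s_h,a),n}^{s_{h+1}}$ of \eqref{eq:mcts_next_state_estimator} satisfies $\lim_{n\to\infty} \mu_{(s_h,a),n}^{s_{h+1}} = \mu_{(s_h,a)}^{s_{h+1}}$ and two-sided polynomial concentration with parameters $(\theta_{h+1}, \xi_{h+1}, \eta_{h+1})$. The regime condition $1/2 \le \eta_{h+1} < 1$ and $\xi_{h+1}\eta_{h+1}(1-\eta_{h+1}) \le \alpha_{h+1} < \xi_{h+1}(1-\eta_{h+1})$ with $\alpha_{h+1} > 2$ is exactly the regime required by Theorem~\ref{theo:non-stationary-non-deterministic-bandit-results-stream-estimator}.

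Finally, I would invoke Theorem~\ref{theo:non-stationary-non-deterministic-bandit-results-stream-estimator} under this identification. Both conclusions transfer verbatim: the convergence-in-expectation rate $|\EE[]{\hat{\rho}_{s_h,n}^\text{stream}} - \mu^*_{s_h}| = \O\!\left(n^{\frac{\alpha_{h+1}}{\xi_{h+1}(1-\eta_{h+1})}-1}\right)$, and the polynomial concentration with $\eta_h = \frac{\alpha_{h+1}}{\xi_{h+1}(1-\eta_{h+1})}$, $\xi_h = \alpha_{h+1}-1$, and $\theta_h$ inheriting dependence on $R_h$, $|\A|$, $\Delta_{\text{min}}^{s_h}$, $\beta_h$, $\theta_{h+1}$, $\theta_{h+1}^L$, $\xi_{h+1}$, $\alpha_{h+1}$, $\eta_{h+1}$ exactly as stated. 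There is essentially no technical obstacle; the only point requiring care is confirming that the non-stationarity affecting $(\rvar{x}_{(s_h,a),t}^{s_{h+1}})_t$ enters solely through the shifting action-selection at deeper nodes, which is precisely the form of non-stationarity that Assumption~\ref{assumption:non-deterministic-bandit-assumption} was designed to accommodate and which Theorem~\ref{theo:non-stationary-non-deterministic-bandit-results-stream-estimator} handles.
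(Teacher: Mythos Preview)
Your proposal is correct and matches the paper's approach exactly: the paper explicitly introduces Theorem~\ref{theo:appendix-mcts-recursive-theorem} as a restatement of Theorem~\ref{theo:non-stationary-non-deterministic-bandit-results-stream-estimator} ``in terms of the variables above defined,'' having just established the same dictionary you spell out (estimator \eqref{eq:mcts_state_node_erm_estimator} $\leftrightarrow$ \eqref{eq:entropic_risk_estimator_non_det_bandit}, action-selection \eqref{eq:mcts_action-selection} $\leftrightarrow$ \eqref{eq:ucb_erm_algorithm_2}, stream estimator \eqref{eq:mcts_stream_estimator} $\leftrightarrow$ \eqref{eq:entropic_risk_estimator_non_det_bandit_stream_estimator}). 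There is no separate proof in the paper beyond this identification, so your write-up is if anything more detailed than what the paper provides.
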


\noindent Therefore, from the theorem above, we know that the empirical $\text{ERM}_{\beta_h}$ for the stream of costs obtained at node $s_h$, $\hat{\rho}_{s_h,n}^\text{stream}$, satisfies the convergence and concentration properties above. This is precisely the assumption needed to apply Theo.~\ref{theo:appendix-mcts-recursive-theorem} from the perspective of a node at depth $h-1$. This is because estimator $\hat{\rho}_{s_h,n}^\text{stream}$, as defined in \eqref{eq:mcts_stream_estimator}, now corresponds to estimator $\hat{\rho}_{(s_{h-1},a),n}^{s_{h}}$, as defined in \eqref{eq:mcts_next_state_estimator}, from the perspective of a decision node $s_{h-1}$ above in the tree for a particular action $a \in \A$. Thus, we can recursively apply Theo.~\ref{theo:appendix-mcts-recursive-theorem} to obtain the convergence of the empirical $\text{ERM}_{\beta_h}$ to the optimal ERM throughout the tree until the root node.

\noindent We also highlight that, at each decision node, the limiting empirical $\text{ERM}_{\beta_h}$ satisfies the Bellman optimality equations as introduced in \eqref{eq:erm_bellman_optimality equations} and, thus, the actions selected by the algorithm are optimal. In particular, at the root node $s_0 \in \S$, $\mu^*_{s_0}$ corresponds precisely to the optimal risk-aware value function $V^*_0(s_0)$ for the initial state $s_0$. Also, at the root node $s_0 \in \S$, $\hat{\rho}_{s_0,n}^\text{stream}$ corresponds to the empirical $\text{ERM}_\beta$ for the stream of costs obtained at the root and be equivalently written as 
\begin{align*}
    \hat{\rho}_{s_h,n}^\text{stream} &= \frac{1}{\beta_h} \ln\left( \frac{1}{n} \sum_{a \in \A} \sum_{s' \in \S} \sum_{t=1}^{T_a^{s'}(T_a(n))} \exp(\beta_h ( c_{h}(s_h,a) + \gamma \rvar{x}_{(\rvar{s}_h,a),t}^{s'} ))\right) \\
    &= \frac{1}{\beta} \ln \left( \frac{1}{n} \sum_{a \in \A} \sum_{t=1}^{T_a(n)} \exp\left(\beta \rvar{x}_{(s_0,a),t}\right) \right) \\
    &\overset{(a)}{=} \hat{V}_{n}(s_0),
\end{align*}
where (a) holds by definition. \\

\noindent All that remains is to investigate the existence of parameters
$$\{\alpha_h\}_{h \in \{1, \ldots, H\}}, \quad \{\theta_h\}_{h \in \{1, \ldots, H\}}, \quad \{\eta_h\}_{h \in \{1, \ldots, H\}}, \quad \text{ and } \quad \{\xi_h\}_{h \in \{1, \ldots, H\}}$$
such that the conditions required to apply Theo.~\ref{theo:appendix-mcts-recursive-theorem} at each level of the tree are satisfied. We address this in the next section.

\subsection{Defining parameters $\{\alpha_h\}$, $\{\theta_h\}$, $\{\eta_h\}$, and $\{\xi_h\}$}
Regarding parameters $\{\theta_h\}_{\{1, \ldots, H-1\}}$, at depth $H$ it holds from Lemma~\ref{lemma:polynomial-concentration-erm-estimator} that the $\text{ERM}_\beta$ estimator satisfies polynomial concentration with a given $\theta_H$ value. Then, Theo.~\ref{theo:appendix-mcts-recursive-theorem} recursively defines parameters $\theta_h$ along the tree starting from the leaves up until the root node. The only condition we require on these parameters is that $\theta_h > 1$ at every depth $h \in \{1, \ldots, H\}$. This condition can be easily enforced by increasing the value of $\theta_h$ at any depth (if needed) so as to meet $\theta_h > 1$ (replacing $\theta_h$ with any higher value yields a valid bound).

We now turn our attention to the remaining parameters $\{\xi_h\}_{\{1, \ldots, H\}}$, $\{\eta_h\}_{\{1, \ldots, H\}}$, and $\{\alpha_h\}_{\{1, \ldots, H\}}$. We start by noting that, at depth $H$, parameters $\xi_H > 1$ and $1/2 \le \eta_H < 1$ can be freely chosen given Lemma~\ref{lemma:polynomial-concentration-erm-estimator}. Parameters $\{\xi_h\}_{\{1, \ldots, H-1\}}$, $\{\eta_h\}_{\{1, \ldots, H-1\}}$, and $\{\alpha_h\}_{\{1, \ldots, H-1\}}$ must be chosen in such a way that
\begin{align*}
    \xi_h &= \alpha_{h+1} - 1, \; \forall h \in \{1, \ldots, H-1\}, \\
    \eta_h &= \frac{\alpha_{h+1}}{\xi_{h+1}(1-\eta_{h+1})}, \; \forall h \in \{1, \ldots, H-1\}.
\end{align*}
The conditions above arise from Theo.~\ref{theo:appendix-mcts-recursive-theorem}. Furthermore, we need to satisfy, for every depth $h \in \{1, \ldots, H\}$, that $\xi_h > 1$, and that there exists $\alpha_h > 2$ satisfying $\xi_{h}\eta_{h}(1-\eta_{h}) \le \alpha_{h} < \xi_{h} (1-\eta_{h})$. Similar to \cite{shah_2020}, for any $1/2 \le \eta < 1$, it can be verified that
\begin{align*}
    \eta_h &= \eta, \; \forall h \in \{1, \ldots H\} \\
    \xi_h &= \alpha_{h+1} - 1, \; \forall h \in \{1, \ldots, H-1\}, \text{ and } \xi_{H} > 1 \text{ is arbitrary}, \\
    \alpha_h &= \eta(1-\eta)\xi_h, \; \forall h \in \{1, \ldots, H\},
\end{align*}
yields a valid set of parameters verifying the desired constraints, as far as $\xi_H$ is sufficiently high. \\

\noindent We illustrate the relationship between these different parameters in Fig.~\ref{fig:mcts-recursive-params}.

\begin{figure}[h]
    \centering
    \includegraphics[width=0.95\textwidth]{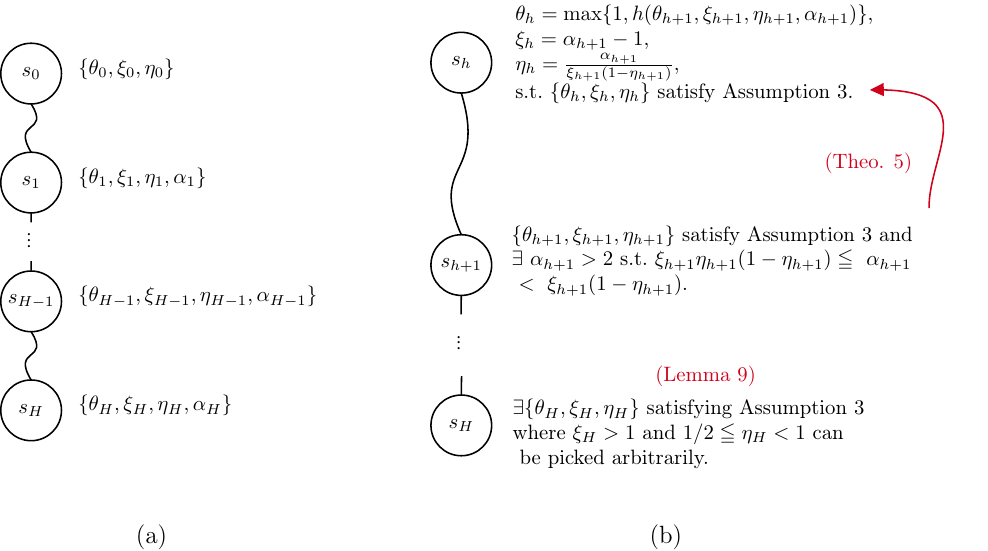}
    \caption{Illustration of the recursive relation between parameters $\{\alpha_h\}$, $\{\theta_h\}$, $\{\eta_h\}$, and $\{\xi_h\}$.}
    \label{fig:mcts-recursive-params}
\end{figure}

\clearpage
\section{Risk-aware MCTS pseudocode}
\label{appendix:mcts_pseudocode}
\begin{figure}[h]
    \centering
    \includegraphics[width=0.95\textwidth]{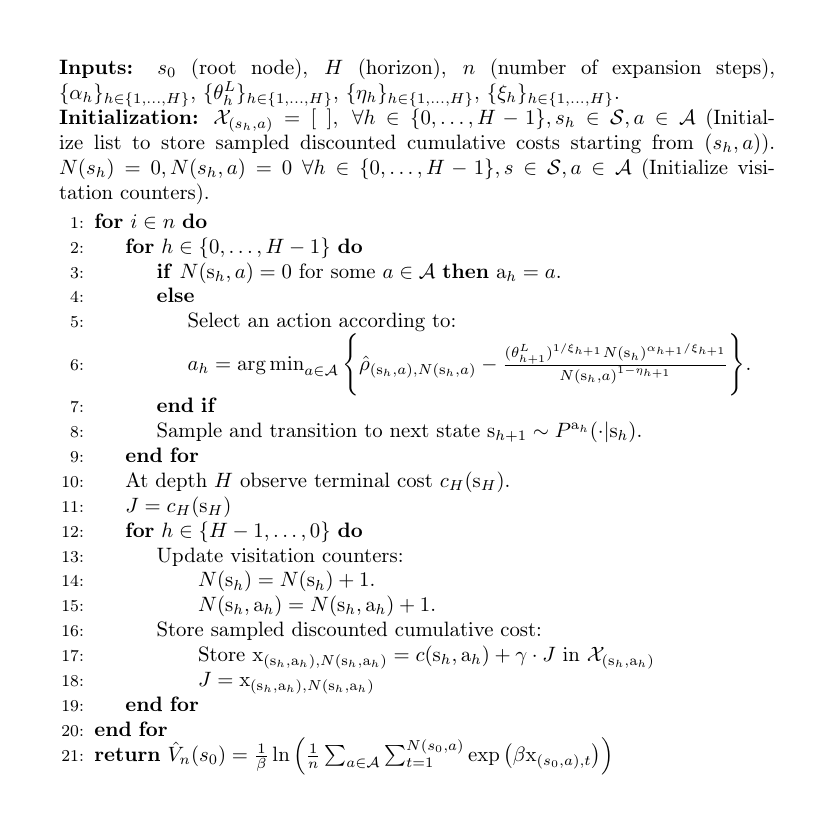}
    \caption{Risk-aware MCTS pseudocode.}
    \label{fig:risk-aware-mcts-pseudocode}
\end{figure}

\clearpage
\section{Empirical results}
\label{appendix:empirical-results}

\subsection{Environments}

\paragraph{MDP-4}
Environment MDP-4 consists of a four-state MDP where the agent needs to tradeoff, at the initial state ($s_0$), between a safe action $a_1$ that leads to a medium cost state ($s_1$) and a risky action $a_0$ that can lead to both low ($s_2$) or high ($s_3$) cost states. When picking action $a_0$ at state $s_0$, with probability $1-\epsilon$ the agent transitions to state $s_2$ and with probability $\epsilon$ the agent transitions to state $s_3$. States $s_1$, $s_2$, and $s_3$ are absorbing, but we reset the agent to the initial state with $10\%$ probability at every timestep independent of the chosen action. The state-dependent cost function is: $c(s_0) = 0, \; c(s_1) = 5, c(s_2) = 1, \; c(s_3) = 20$. We refer to Fig.~\ref{fig:mdp-4-illustration} for an illustration of the environment.

\begin{figure}[h]
    \centering
    \includegraphics[width=0.25\textwidth]{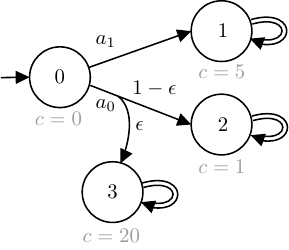}
    \caption{MDP-4 environment illustration.}
    \label{fig:mdp-4-illustration}
\end{figure}

\paragraph{Grid-MDP}
Environment Grid-MDP consists of a $5\times3$ grid, and an additional absorbing pitfall state $(s_{\text{pitfall}})$. 
The agent starts in state $(4,0)$ and aims to reach the target absorbing state $(s_{\text{target}})=(0,0)$. From the start state, there are two distinct paths to the target, characterized by different levels of risk. 
Risk is modeled through \textit{slippery states} located along the two possible paths. When the agent takes an action in a slippery state, it transitions to $(s_{\text{pitfall}})$ with probability $\epsilon=0.01$, where for each action the cost is $c(s_{\text{pitfall}}) = 5$. 
The safer path contains a single slippery state, whereas the riskier path contains three. 
Consequently, the safer path has a lower probability of failure, while the riskier path offers a lower expected cumulative cost but a higher likelihood of transitioning to the pitfall state.
Once the agent reaches the target state, the cost of each action is $c(s_{\text{target}})=0$.

\subsection{Software}
The developed code can be found at \href{https://github.com/PPSantos/risk-aware-mcts}{https://github.com/PPSantos/risk-aware-mcts}. 

\clearpage
\subsection{Complete experimental results}

\subsubsection{MDP-4}

\begin{table}[h]
    \centering
    \resizebox{0.45\columnwidth}{!}{%
    \begin{tabular}{|c|c||c|c|}
    \hline
    $\beta$ & \begin{tabular}[c]{@{}c@{}} \textsc{ERM-BI} \\ (Oracle) \end{tabular} & \textsc{Acc-MCTS}                                                 & \begin{tabular}[c]{@{}c@{}} \textsc{ERM-MCTS} \\ (Ours)\end{tabular} \\ \hline \hline
    0.1     & \begin{tabular}[c]{@{}c@{}}1.61\\ (-0.37, +0.38) \end{tabular}  & \begin{tabular}[c]{@{}c@{}} 1.43\\ (-0.31, +0.33) \end{tabular} & \begin{tabular}[c]{@{}c@{}} 1.43 \\ (-0.25, +0.28)\end{tabular}  \\ \hline
    0.5     & \begin{tabular}[c]{@{}c@{}}1.85\\ (-0.08, +0.08) \end{tabular}  & \begin{tabular}[c]{@{}c@{}}1.84\\ (-0.08, +0.09)\end{tabular} & \begin{tabular}[c]{@{}c@{}} 1.77 \\ (-0.06, +0.06)\end{tabular}  \\ \hline
    1.0     & \begin{tabular}[c]{@{}c@{}}1.79\\ (-0.03, +0.03)\end{tabular}  & \begin{tabular}[c]{@{}c@{}}1.77\\ (-0.02, +0.02)\end{tabular} & \begin{tabular}[c]{@{}c@{}} 1.83\\ (-0.04, +0.04)\end{tabular}  \\ \hline
    \end{tabular}
    }
    \caption{(MDP-4) $\text{ERM}_{\beta}$ of the distribution of discounted costs obtained by different algorithms and $\beta$ values for the MDP-4 environment. Results for $n=1\;000$ MCTS iterations. Lower is better.}
    \label{table:exp-results-MDP-4}
\end{table}

\begin{figure}[h]
    \centering
    \includegraphics[width=0.5\textwidth]{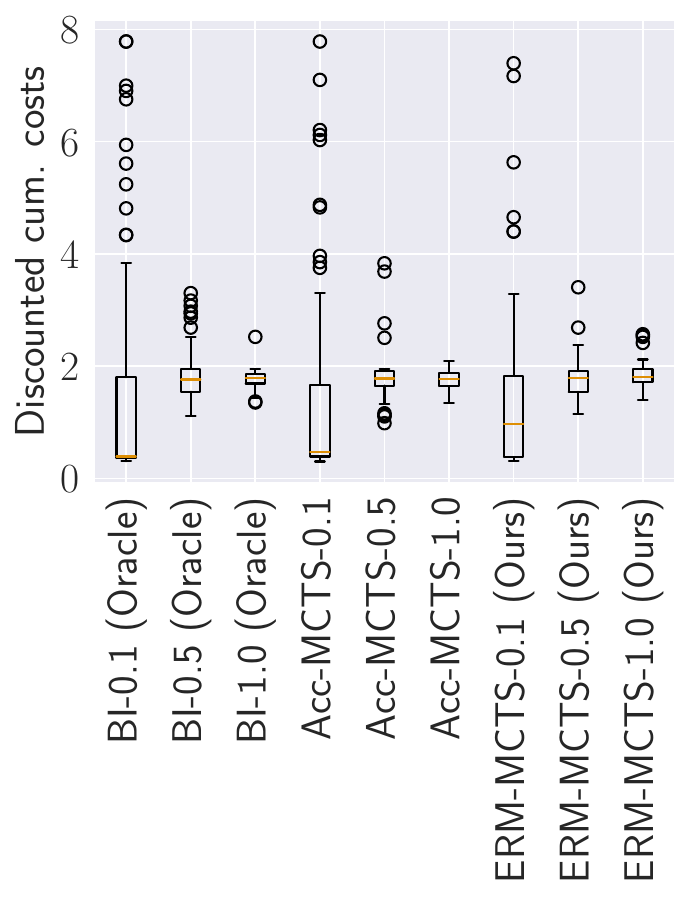}
    \caption{(MDP-4) Comparison of the discounted cumulative cost distributions obtained by different approaches under the MDP-4 environment for multiple $\beta$ values (0.1, 0.5, and 1.0). Results for $n=1\;000$ MCTS iterations. Lower is better.}
    \label{fig:dists-comparison-MDP-4}
\end{figure}

\clearpage
\subsubsection{Grid-MDP}

\begin{table}[h]
    \centering
    \resizebox{0.45\columnwidth}{!}{%
    \begin{tabular}{|c|c||c|c|}
    \hline
    $\beta$ & \begin{tabular}[c]{@{}c@{}} \textsc{ERM-BI} \\ (Oracle) \end{tabular} & \textsc{Acc-MCTS}                                                 & \begin{tabular}[c]{@{}c@{}} \textsc{ERM-MCTS} \\ (Ours)\end{tabular} \\ \hline \hline
    0.01     & \begin{tabular}[c]{@{}c@{}}5.37\\ (-1.43, +1.67) \end{tabular}  & \begin{tabular}[c]{@{}c@{}} 8.21\\ (-0.40, +0.72) \end{tabular} & \begin{tabular}[c]{@{}c@{}} 6.59 \\ (-1.00, +1.24)\end{tabular}  \\ \hline
    0.1     & \begin{tabular}[c]{@{}c@{}}9.71\\ (-1.99, +3.08) \end{tabular}  & \begin{tabular}[c]{@{}c@{}}10.09\\ (-1.32, +2.09)\end{tabular} & \begin{tabular}[c]{@{}c@{}} 9.15 \\ (-0.11, +0.11
    )\end{tabular}  \\ \hline
    \end{tabular}
    }
    \caption{(Grid-MDP) $\text{ERM}_{\beta}$ of the distribution of discounted costs obtained by different algorithms and $\beta$ values for the Grid-MDP environment. Results for $n=10\;000$ MCTS iterations. Lower is better.}
    \label{table:exp-results-Grid-MDP}
\end{table}

\begin{figure}[h]
    \centering
    \includegraphics[width=0.5\textwidth]{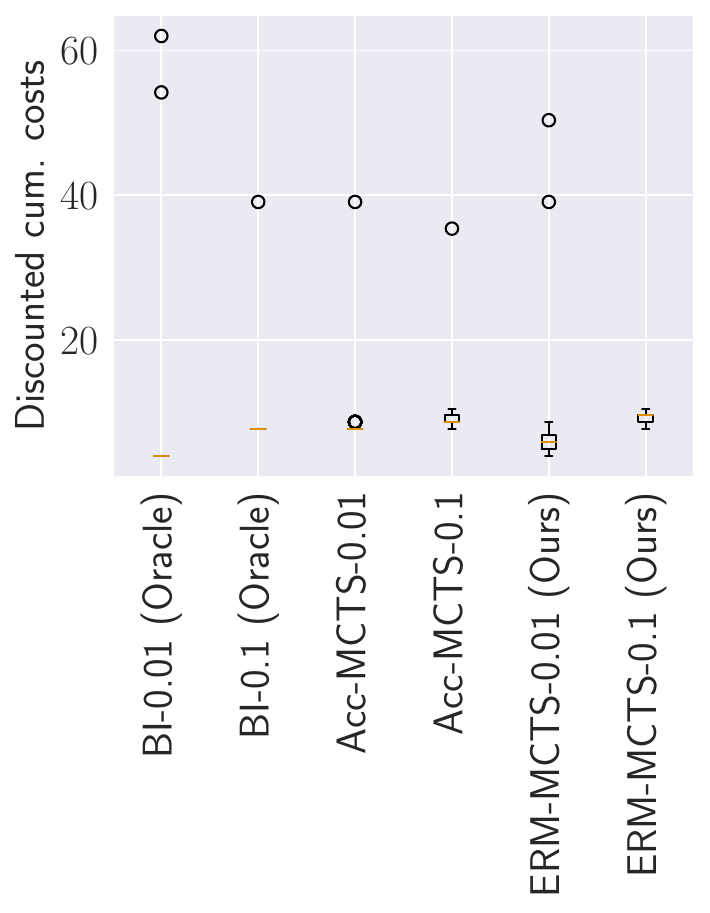}
    \caption{(Grid-MDP) Comparison of the discounted cumulative cost distributions obtained by different approaches under the Grid-MDP environment for multiple $\beta$ values (0.01 and 0.1). Results for $n=10\;000$ MCTS iterations. Lower is better.}
    \label{fig:dists-comparison-Grid-MDP}
\end{figure}

\end{document}